\title{\textbf{Provable Benefits of Policy Learning from Human
Preferences in Contextual Bandit Problems}}
\author{Xiang Ji\footnote{Department of Electrical and Computer Engineering, School of Engineering and Applied Science, Princeton University, Princeton, NJ 08544, USA.} \hspace{0.37in} Huazheng Wang\footnote{School of Electrical Engineering and Computer Science, College of Engineering, Oregon State University, Corvallis, Oregon, OR 97331, USA.} \hspace{0.37in} Minshuo Chen$^*$ \hspace{0.37in} Tuo Zhao\footnote{H. Milton Stewart School of Industrial and Systems Engineering, College of Engineering, Georgia Institute of Technology, Atlanta, GA 30332, USA.} \hspace{0.37in} Mengdi Wang$^*$
}
\date{}
\begin{document}

\maketitle

\begin{abstract}
  For a real-world decision-making problem, the reward function often needs to be engineered or learned. A popular approach is to utilize human feedback to learn a reward function for training. The most straightforward way to do so is to ask humans to provide ratings for state-action pairs on an absolute scale and take these ratings as reward samples directly. Another popular way is to ask humans to rank a small set of state-action pairs by preference and learn a reward function from these preference data. Recently, preference-based methods have demonstrated substantial success in empirical applications such as InstructGPT. In this work, we develop a theoretical comparison between these human feedback approaches in offline contextual bandits and show how human bias and uncertainty in feedback modelings can affect the theoretical guarantees of these approaches. Through this, our results seek to provide a theoretical explanation for the empirical successes of preference-based methods from a modeling perspective. 
\end{abstract}

\allowdisplaybreaks
\section{Introduction}

Reward engineering is one of the most crucial aspects in real-world decision-making problems. It is particularly important to bandits and reinforcement learning (RL), since it can be prohibitively expensive to learn the entire environment through random exploration and most existing algorithms rely on a reward function to guide their exploration in a deliberate manner so that they can solve the desired tasks efficiently. 

In some cases, it can be straightforward to select reward functions when we have sufficient prior knowledge about the dynamics or rules that govern the problems of interest, such as games and simulated physical systems \citep{mnih2015humanlevel,mnih2016asynchronous,silver2016}. However, this is often not the case in practice. Real-world problems can be highly complex, and there may not be a clear choice of reward to use. Therefore, practitioners often have to construct a reward function from scratch for their algorithms to use. Unfortunately, such artificial rewards often end up misaligned with the overall objective of the system and fail to lead to the desired policy. For instance, in the task of teaching a chatbot to converse like a human, assigning a scalar reward to a chatbot's reply is challenging since there is no scale that can objectively evaluate its quality. Therefore, reward engineering poses a significant challenge to policy learning, particularly when it is difficult to quantify policy performance or the system is multi-objective.

To address these challenges, a popular methodology is to learn the reward function from human feedback instead of handcrafting it from scratch. These methods assume a true reward function exists and its corresponding optimal policy is aligned with our goal, but this true reward is not accessible or directly observable. Instead, it needs to be learned with the feedback data from human annotators, who are believed to be able to evaluate an algorithm or agent in a way aligned with the true reward. The most straightforward approach is to ask human annotators to rate subjects on an absolute scale \citep{li20crowdsourcing,li2020gpm,Janowski15accuracy}. These ratings can be either used directly as reward samples or incorporated into a pre-designed reward function as a component \citep{Gonzalez10voice}. We refer to this use of human feedback as ``rating''. The rating approach has been popular because of its easy implementation and the compatibility of such rating data with most existing algorithms. However, as some empirical studies have shown \citep{warfield2008validation,john2016rater}, the reward derived from human ratings is susceptible to bias and uncertainty of the human annotators and can deviate from the true reward. To characterize the ratings given by human annotators, several models have been proposed and widely adopted in the literature \citep{Ortiz20unified,li2017recover}. These existing models, however, fall short on two aspects: (i) they model the uncertainty noise in a simple, isolated form, and (ii) their modeling of the bias is also restrictive, which does not fully capture the bias in practice and can render the problem of policy learning statistically inconsistent.

As an alternative, there has been a growing trend in recent years to use human feedback by comparing subjects rather than rating them individually. These methods are commonly referred to as preference-based methods. In lieu of assigning ratings on an absolute scale, human annotators are given small sets of subjects and tasked with comparing and ranking the subjects within each set. Since some empirical studies have shown that humans are fairly accurate when making choices among a small number of subjects \citep{novikova2018rankme,Tarlow21Reliable,yannakakis2015ratings}, the preference-based approach is believed to be more robust to human bias and uncertainty and learn reward more accurately. The preference feedback of human annotators is commonly modeled with either the Bradley-Terry-Luce model \citep{Bradley1952RankAO} or the Plackett-Luce model \citep{Plackett1975TheAO}, both of which have found extensive applications in recommender systems and crowdsourcing research \citep{chen13crowdsourcing,shah2015estimation,dong17crowdsource}. Recently, preference-based methods have become popular for reward learning in bandit problems and have played a crucial role in the remarkable success of training large language models such as InstructGPT and ChatGPT.

While the preference-based approach has demonstrated notable empirical effectiveness in reward engineering, its theoretical properties remain largely unexplored. Existing results have primarily focused on algorithms for online bandit and RL, whose goal is to maximize a preference metric rather than to learn a reward function \citep{pacchiano2021dueling,xu20}. Recently, \cite{zhu2023principled,zhan2023provable} proved the optimal policy can be learned from preference data in the offline setting with pessimism and maximum likelihood estimation (MLE) and analyzed the suboptimality. \cite{wang2023rlhf} showed any robust RL algorithm can be adapted to find the optimal policy with preference data, suggesting preference-based policy learning is no harder than standard robust RL. However, the reason why the preference-based approach outperforms the traditional rating-based approach in practice still remains a question. In this work, we provide a theoretical comparison between these human feedback approaches and propose a theory that aims to explain the advantage of the preference-based approach over rating in policy learning from a modeling perspective. To align with recent applications \citep{christiano2017deep,ouyang2022training}, we focus on tabular contextual bandits in the offline setting.

Specifically, we first consider a new model for human rating data and analyze the suboptimality guarantees of the standard LCB algorithm under it. Our rating model is based on a general class of monotone functions that can account for both human bias and uncertainty with general forms. By incorporating the concept of monotonicity, our model captures the bias observed in real-world human rating and maintains the correct reward ordering. This allows policy learning to be a consistent yet nontrivial statistical problem, which differs from existing rating models that do not preserve the reward ordering or guarantee the consistency of the induced policy learning problem. In addition, our model is able to express a more general form of noise to represent human uncertainty during rating. Through our models, we provide the first known suboptimality analysis for reward engineering with human rating in bandit problems and shed light on how human bias and uncertainty can adversely impact policy learning. Furthermore, we compare our results with the suboptimality result from \cite{zhu2023principled} for the preference-based method pessimistic MLE. The comparison reveals that the preference-based approach enjoys lower suboptimality than the rating-based approach when human bias is extreme in human rating. Lastly, we also consider a new model for human preference with human bias and compare the sample complexity of pessimistic MLE under this new model with the results for human rating. This comparison shows when human bias and uncertainty are equally strong in both types of human feedback, the preference-based approach has no provable advantage over the rating-based one. Altogether, our theory shows the advantage of the preference-based approach can be largely attributed to its modeling with mild human bias and uncertainty, which makes it reasonable to believe the great empirical success of preference-based methods is because human preference data is subject to less bias and uncertainty in practice.

\subsection{Related Works}

\textbf{Preference-based reinforcement learning.} 
Preference-based Reinforcement Learning (PbRL) \citep{christiano2017deep, shin2023benchmarks, busa14,wirth16,wirth17,nipsPRL17, abdelkareem2022advances} has been studied under different types of human feedback including action comparison, state comparison and trajectory comparison---see \cite{wirth17,abdelkareem2022advances} for reviews of the literature. Preference-based feedback has been well-studied in a bandit setting known as dueling bandits \citep{Yue+12,Yue+09,BTM,SG18, Ailon+14,Zoghi+14RUCB,Komiyama+15,Adv_DB,sekhari2023contextual}. Recently, there is a growing interest in the theoretical guarantees of PbRL methods, including tabular case \citep{sui19,xu20} and linear and general function approximations \citep{pacchiano2021dueling, chen2022human,zhan2023query}. However, these works are focused on the online setting and their methods are not applicable to the offline setting. 

\textbf{Offline policy learning.} The vast literature on offline policy learning can be divided by the different assumptions on the data sampling distribution. The strongest assumption is one that requires all state-action pairs can be sampled \citep{yin2021near,duan2020minimax,chen2019information,xie2020q}. A similar assumption requires that the occupancy measure of every possible policy be dominated by the data sampling distribution, which is common in the function approximation setting \citep{antos06,csaba05}. One of the weakest assumptions is one that requires the occupancy measure of an optimal policy be dominated by the data sampling distribution. To learn the optimal policy under this setting, the principle of pessimism \citep{kumar2020conservative,buckman2020importance,jin2021pessimism} was introduced and has inspired many algorithms \citep{yu2020mopo,rashidinejad2021bridging, li2022pessimism, xie2021policy, zanette2022realizability, zanette2021provable, xie2021bellman, xu2022provably}. In particular, the sample complexity of pessimistic algorithms has been extensively studied in the tabular case \citep{shi2022pessimistic,yan2022efficacy,li2022settling,yin2021optimal,ren2021nearly,xie2021policy,yin2021towards,rashidinejad2021bridging} and linear MDPs \citep{jin2021pessimism,xie2021bellman,zanette2021provable,wang2020statistical,foster2021offline,yin2022near}. In this spirit, the algorithms we study in this work also use human feedback with pessimism.

\paragraph{Notation} Given any vector $x\in\RR^{SA}$ that represents a function $x:\cS\times\cA\to\RR$, we use $x(s,a)$ to denote the entry corresponding to the state-action pair $(s,a)$. For any random sample $X$ with distribution $P$ and a function $f(\cdot)$ of $X$, we denote the expectation of $f(X)$ over $P$ with $\EE_{X\sim P}[f(X)]$ or $\EE_{X}[f(X)]$. Similarly, we denote the variance of $f(X)$ over $P$ with $\Var_X(f(X))$. Lastly, we denote equality in distribution with $\overset{d}{=}$.

\section{Preliminaries}

In this section, we make a brief review of the contextual bandit problem and policy learning in the offline setting.

\subsection{Contextual bandit}

We consider a contextual bandit represented by $(\mathcal{S}, \mathcal{A}, r, \rho)$. Specifically, we focus on a tabular setting, where $\mathcal{S} := {1, 2, \dots, S}$ denotes the state space of size $S$, and $\mathcal{A} := {1, 2, \dots, A}$ denotes the action space of size $A$. The function $r: \mathcal{S} \times \mathcal{A} \to [0,R]$ represents the true reward function, which is assumed to be deterministic and unknown in this paper. Here, $r(s,a)$ is the immediate reward obtained when taking action $a\in \mathcal{A}$ in state $s \in \mathcal{S}$. $\rho$ denotes the initial state distribution of the bandit.

A policy $\pi: \mathcal{S}\to\Delta(\mathcal{A})$ specifies how actions are selected given a state, where $\pi(\cdot | s) \in \Delta(\mathcal{A})$ represents the action selection probability vector at state $s \in \mathcal{S}$. We also use $\pi(s)$ to denote the action selected by policy $\pi$ at state $s$. We denote the state-action visitation distribution of $\pi$ starting from the initial distribution $\rho$ with $d_\rho^\pi$. The value function of policy $\pi$ is defined as follows:
\begin{align*}
V^\pi(s) := \mathbb{E}_{s\sim\rho, a\sim\pi(\cdot | s)}\left[r(s,a)\right].
\end{align*}

Lastly, an optimal policy $\pi^\star$ maximizes the value function for all states simultaneously.

\subsection{Offline policy learning}

We consider the offline setting of policy learning, in which the learner is given a dataset of i.i.d. samples generated under a sampling distribution. While the sampling distribution can take different forms under different feedback models, the task is always to learn a policy $\pi$ from the offline dataset that performs as well as the optimal policy as possible. In particular, we evaluate the performance of $\pi$ by the suboptimality metric defined as follow:
\begin{align}\label{eq:suboptimality-def}
    \SubOpt(\pi) := \EE_{s\sim\rho}\left[V^{\pi^\star}(s) - V^{\pi}(s)\right].
\end{align}
Here the suboptimality measures the performance difference between the optimal policy $\pi^\star$ and $\pi$ in the problem bandit. Naturally, one aims to minimize the suboptimality and find an algorithm whose suboptimality converges to zero as the sample size $n$ approaches infinity. 

\section{Human Rating Models}\label{sec:formulation}

One of the most straightforward ways to use human feedback is to let human annotators evaluate on an absolute scale. Since such data can be readily used in most algorithms in the existing literature, the human rating approach has become very popular and one of the most important to study. As evidenced in behavioral studies, human ratings are subject to both bias and uncertainty \citep{Janowski15accuracy,li2020simple}. Specifically, under the influence of their own personalities and past experiences, human annotators can exhibit personal opinion bias during rating, leading to deviations from the true score. Furthermore, due to the tedious and challenging nature of the rating process, the evaluation of the same subject by the same human annotator can fluctuate over time, giving rise to what is known as intra-observer uncertainty. In light of these phenomena, \cite{Janowski15accuracy} propose a model that aims to characterize the ratings from a human annotator in the real world, which has been widely adopted in the existing literature \citep{Ortiz20unified,li2017recover,li20crowdsourcing}. In the following, we present this model in the single annotator case under the contextual bandit setting. For any fixed state-action pair $(s,a) \in \cS\times\cA$ with true reward $r(s,a)$, a rating from human annotator $\widetilde{r}(s,a)$ can be written as
\begin{align}\label{eq:rating-existing-models}
    \widetilde{r}(s,a) = r(s,a) + \Delta_{(s,a)} + \epsilon.
\end{align}
Here, $\Delta_{(s,a)}$ represents the bias of the human annotator for action $a$ at state $s$. Learners and algorithms have no knowledge of such bias and would take these observed ratings as reward samples directly. In most works \citep{Janowski15accuracy,li2020simple}, $\Delta_{(s,a)}$ is modeled as an unknown constant; in \cite{Ortiz20unified}, $\Delta_{(s,a)}$ is a Gaussian random variable with an unknown, non-zero mean. $\epsilon$ is a random noise representing the intra-observer uncertainty, which is modeled with a zero-mean Gaussian random variable in these works.

Apparently, such a human rating model bears several limitations. In \eqref{eq:rating-existing-models}, the bias $\Delta_{(s,a)}$ has an unknown non-zero expectation, which makes it impossible to recover the true reward $r(s,a)$ exactly. Furthermore, identifying the optimal action for a given state $s$ becomes infeasible when the bias causes a flip in the expected reward, i.e, $\EE[\widetilde{r}(s,a)] > \EE[\widetilde{r}(s,\pi^\star(s))]$ for any $a \notin \pi^\star(s)$, in which case the policy learning problem becomes inconsistent under this model. However, in the real world, the bias of a human annotator should not keep him or her from identifying the best action in expectation, but it is likely to take a much more general form. Neither of these is reflected in the current modeling with additive constant bias. On the other hand, the uncertainty is only modeled with an additive sub-gaussian random variable in these works. While this simple noise model is considered in many theoretical works, it cannot capture the setting when reward samples are generated from human rating. In practice, such uncertainty noise can be higher-order and more complex \citep{jin2018predicting}. In addition, the uncertainty might also be affected by the bias and reward at different state-action pairs and have an intricate role in the final observation $\widetilde{r}(s,a)$. 

To model human rating more realistically while keeping the policy learning problem consistent, we propose a new model under which the rating $\widetilde{r}(s,a)$ can be expressed with
\begin{equation}\label{eq:rating-h-model}
    \widetilde{r}(s,a) = h(r(s,a),\epsilon),
\end{equation}
where $h(\cdot, \cdot)$ is a general, deterministic transformation and $\epsilon$ is a random variable sampled from $\cN(0,\sigma^2)$ and independent from $(s,a)$. For notational simplicity, we define $\bar{h}(r):=\EE_{\epsilon}[h(r,\epsilon)]$. This can be interpreted as the reward function for this bandit instance in the human annotator's mind, which he or she uses to produce ratings. We can refer to $\bar{h}(r)$ as the biased reward and the function $\bar{h}(\cdot)$ as the expected bias function.

While the $h$ transformation can be general, for the policy learning problem to make sense, we assume it satisfies the following three condition. In this work, we only consider models that satisfy these conditions:

\textbf{Condition 1.} The function $\bar{h}(\cdot)$ satisfies
\begin{align*}
\bar{h}(r_1), \bar{h}(r_2) \in [0,R]\quad\textrm{and}\quad\bar{h}(r_1) > \bar{h}(r_2)
\end{align*}
for any $r_1,r_2\in[0,R]$ such that $r_1 > r_2$. In addition, $\bar{h}(0) = 0$.

This condition assumes that $\bar{h}(\cdot)$ is strictly monotone, implying that the biased reward function should preserve the ranking under the true reward $r$ in expectation. This condition is particularly important, as it ensures that the consistency of the policy learning problem. Therefore, we can identify the optimal policy based on human rating data in expectation.

\begin{remark}
    The monotonicity in Condition 1 also guarantees that any group of samples can be correctly compared and ranked in expectation, which is a necessary condition for the use of preference-based methods. This unifies the admissibility assumption in the rating models and preference models, which is crucial for the validity of our subsequent theoretical comparison between the two approaches.
\end{remark}

\begin{remark}
    We require $\bar{h}(0) = 0$ only to rule out arbitrary constant shift in $\bar{h}$ because shifting the reward by a constant is trivial and does not change the policy learning problem or any theoretical guarantee. 
\end{remark}

\textbf{Condition 2.} For any $r\in[0,R]$, $h(r,\epsilon)$ is a degree-$q$ polynomial in $\epsilon$ and symmetric in $\epsilon$ about its expectation, i.e., $$-(h(r,\epsilon) - \EE_{\epsilon}[h(r,\epsilon)]) \overset{d}{=} h(r,\epsilon') - \EE_{\epsilon'}[h(r,\epsilon')],$$ where $\epsilon'$ is a random variable identically distributed as $\epsilon$.

Since $h(r,\cdot)$ can be a very general function, the human uncertainty noise in the final observation $\widetilde{r}(s,a)$ is allowed to have a complicated dependency on the bias and the true reward, even though the randomness only comes from an independent Gaussian random variable $\epsilon$. For instance, the original white noise $\epsilon$ may end up amplified or reshaped by the true reward and human's internal bias and cause the final ratings to exhibit a complex concentration around $\bar{h}(r)$. This not only provides more realism and flexibility in modeling but also presents a greater theoretically challenge compared to the uncertainty considered in \eqref{eq:rating-existing-models}, which is modeled with a simple additive Gaussian noise with no interactions with the true reward and human bias.

\begin{remark}
Condition 2 is only a regulation on the effect of uncertainty---the uncertainty noise should not favor any particular direction (though the bias still can). This is in line with the real world, where the random noise concentrates evenly around the expectation and the effect of uncertainty diminishes in expectation.
\end{remark}

\textbf{Condition 3.} For any $r_1,r_2\in[0,R]$ such that $r_1 \ge r_2$, there are positive constants $C_{h,1},C_{h,2} > 0$ such that
\begin{align*}
\bar{h}^{-1}(r_1) - \bar{h}^{-1}(r_2) \le C_{h,1}\cdot \bar{h}^{-1}(r_1-r_2)\quad\textrm{and} \quad\bar{h}(r_1) -\bar{h}(r_2) \le C_{h,2}\cdot \bar{h}(r_1 - r_2).
\end{align*}

This is a technical condition on the regularity of the expected bias function. It ensures that the bias does not transform the reward too drastically, which eases our theoretical analysis.

We next provide a concrete example for the $h$-function satisfying our model, which will be studied in our theoretical analysis later.

\textbf{Example of Human Rating.} Consider a setting in which the true reward function satisfies $r(s,a)\in[0,1]$ for all $s\in\cS$ and $a\in\cA$. For any $r$, 
\begin{equation}\label{eq:h-example}
    h(r,\epsilon) = r^2 + r^2\epsilon|\epsilon|\quad\textrm{and}\quad\bar{h}(r) = r^2.
\end{equation}

This specific rating model has an interpretation that aligns with how human annotators give ratings in practice. Many human annotators with strong personal opinions and taste tend to exhibit an extreme bias in their evaluations, making them rate subjects with low true reward even lower and rate those with high true reward even higher on average. This is captured by the quadratic form of $\bar{h}(\cdot)$. On the other hand, when the white noise $\epsilon$ has a large magnitude, the impact of uncertainty on the final rating should be even larger and more drastic. Since annotators tend to be more deliberate yet more uncertain about subjects with large true reward, the fluctuation of the final rating should also increase with the magnitude of the true reward of a subject. Note that the uncertainty $r^2\epsilon|\epsilon|$ is a subexponential random variable and actually not a polynomial of $\epsilon$; however, our theory still applies, since the tail of $r^2\epsilon|\epsilon|$ is dominated by the quadratic $r^2\epsilon^2$. More generally, our theory is applicable to any uncertainty noise whose tail is dominated by a degree-$q$ polynomial of $\epsilon$. Furthermore, one can easily check that this $h$-function satisfies all three conditions in Section \ref{sec:formulation}.

\section{Results for Human Rating}

Before the theoretical comparison with the preference-based approach, let us first establish some theoretical results for our more general rating model. In particular, we analyze the suboptimality of the LCB algorithm under our more practical rating model. These results can provide some theoretical explanation for how human bias and uncertainty could adversely affect policy learning.

In the case of human rating, we are given an offline dataset $\cD = \{(s_i,a_i, \widetilde{r}_i)\}_{i=1}^{n}$. The state-action pairs in $\cD$ are generated in an i.i.d. fashion according to a sampling distribution over the state-action space. The sampling probability of the state-action pair $(s,a)$ is denoted with $d(s,a)$. For each $(s_i,a_i)$, the human annotator provides a \textit{rating sample} $\widetilde{r}_i$ following the rating model \eqref{eq:rating-h-model} based on the true reward $r(s_i,a_i)$.

Let us also make a brief review of the standard LCB approach for offline policy learning \citep{jin2021pessimism,rashidinejad2021bridging,yin2021towards}. In the existing literature, it is common to assume the knowledge of a reasonable upper bound on the variance of reward observations. Similarly, we assume there exists an upper bound on the variance $\Var_\epsilon(h(r,\epsilon))$ for all $r\in[0,R]$, which we denote with $V_{R, \sigma}^2$ and can depend on $R$ and $\sigma$. Recall that the learner has no knowledge of the transformation $h$, but let us assume the learner can make a reasonable estimate $\widetilde{V}_{R,\sigma}^2$ for the true variance $V_{R, \sigma}^2$ such that $\widetilde{V}_{R, \sigma}^2 = c_VV_{R, \sigma}^2$, where $c_V > 0$ is an absolute constant. To learn the optimal policy with at least $1-\delta$ success probability, the standard LCB algorithm (Algorithm \ref{alg:LCB}) uses a penalty in the form of
\begin{equation}\label{eq:standard-LCB}
    b_m = c_b \sqrt{\frac{\widetilde{V}_{R,\sigma}^2\log\frac{SA}{\delta}}{m}}
\end{equation}
with an appropriately chosen constant $c_b$. 

To understand the effects of human bias and uncertainty on policy learning under our more realistic rating model, let us establish the lower bound on the suboptimality of the LCB algorithm. We will consider two scenarios with different coverage assumptions for the offline dataset $\cD$.

\begin{algorithm}[t]
    \caption{LCB for contextual bandits} \label{alg:LCB}
    \begin{algorithmic}[1]
        \STATE \textbf{Input:} Offline dataset $\cD$, confidence level $\delta\in(0,1)$.

        \FOR{all $(s,a)\in\cS\times\cA$}
            \STATE Set $n_{(s,a)} = \sum_{i=1}^{n}\ind\{(s_i,a_i) = (s,a)\}$;

            \STATE Set $\widetilde{r}(s,a) = \frac{1}{n}\sum_{i=1}^{n} \widetilde{r}_{i}\ind\{(s_i,a_i) = (s,a)\}$;

            \STATE Set $\widehat{r}(s,a) = \max\{\widetilde{r}(s,a) - b_{n_{(s,a)}}, 0\}$;
        \ENDFOR

        \RETURN $\widehat{\pi}_{\rm LCB}(\cdot) = \arg\max_{a\in\cA}\widehat{r}(\cdot,a)$.
    \end{algorithmic}
\end{algorithm}

\subsection{Lower Bound under Partial Coverage}

As \cite{rashidinejad2021bridging,yin2021towards} have shown, to learn the optimal policy in the offline setting, it is sufficient for the sampling distribution of the offline dataset to cover the state-action pairs that the optimal policy can reach. Concretely, this assumption can be written as follows:
\begin{assumption}\label{assumption:Cstar}
    \textit{There exists an optimal policy $\pi^\star$ such that $d(s,a) > 0$ whenever $d^{\pi^\star}_\rho(s,a) > 0$ for any $(s,a)\in\cS\times\cA$.}
\end{assumption}
Under this assumption, it makes sense to define a concentrability coefficient $C^\star$ as follows:
\begin{equation}
    C^\star := \max_{(s,a)\in\cX}\frac{d^{\pi^\star}_\rho(s,a)}{d(s,a)},
\end{equation}
where the set $\cX$ is the set of all state-action pairs that the sampling distribution of $\cD$ can cover, i.e., $\cX := \{(s,a)\in\cS\times\cA ~:~ d(s,a)> 0\}$. Under Assumption \ref{assumption:Cstar}, if the reward can be observed exactly or with only additive sub-gaussian noise, the LCB algorithm (Algorithm \ref{alg:LCB}) with penalty \eqref{eq:standard-LCB} is guaranteed to converge to the optimal policy \citep{rashidinejad2021bridging,yin2021towards}. However, theory suggests it does not converge in the worst case when the reward function is engineered from human rating. In particular, let us consider the setting beyond the standard additive sub-gaussian noise, which has been well-studied in the existing literature. That is, let us consider a more practical model in the form of \eqref{eq:rating-h-model} with $q \ge 2$. We can prove that even when the rating model preserves the correct reward ordering in expectation and keeps the policy learning problem consistent, it is possible that the LCB algorithm does not converge to the optimal policy and must suffer constant suboptimality.

\begin{theorem}\label{thm:Cstar}
    \textit{For any fixed constant $0 < \delta < 1$, there exists a contextual bandit instance with initial state distribution $\rho$ such that if one samples a dataset $\cD$ of size $n \ge c(\delta,c_b,c_V,q,\sigma,R)$ using a sampling distribution $d$ satisfying Assumption \ref{assumption:Cstar} with $C^\star = 2$ and runs Algorithm \ref{alg:LCB} on $\cD$, the output policy $\widehat{\pi}_{\rm LCB}$ must suffer constant suboptimality, i.e., 
    \begin{equation}
        \EE_\cD[\SubOpt(\widehat{\pi}_{\rm LCB})] = c_0 R,
    \end{equation}
    where $c_0$ is a universal constant and $c(\delta,c_b,c_V,q,\sigma,R)$ is a constant depending on $\delta,c_b,c_V,q,\sigma,R$.}
\end{theorem}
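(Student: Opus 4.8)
The strategy is to keep the bandit $(\cS,\cA,r,\rho)$ fixed and small, and to choose the rating model $h$ (subject only to Conditions~1--3) and the sampling distribution $d$ (subject only to Assumption~\ref{assumption:Cstar} with $C^\star=2$) adversarially; the latter two are not part of the bandit $(\cS,\cA,r,\rho)$ and may depend on $n$. The mechanism I want to exploit is that Condition~1 only forces $\bar h$ to be \emph{strictly} increasing, so $\bar h$ may compress a true reward gap of order $R$ into a biased gap $\bar h(r_1)-\bar h(r_2)$ of order $\sqrt{\log(SA/\delta)/n}$, while the noise $h(r,\epsilon)-\bar h(r)$ is simultaneously kept at variance $\asymp V_{R,\sigma}^2=\Theta(1)$; since the penalty \eqref{eq:standard-LCB} is calibrated to $\widetilde V_{R,\sigma}^2=c_VV_{R,\sigma}^2$, not to the vanishing biased gap, it can overwhelm that gap. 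Partial coverage amplifies this: a suboptimal action at a state carries no mass in $d^{\pi^\star}_\rho$, so it may be oversampled arbitrarily, which makes the penalty $b_{n_2}$ attached to it strictly smaller than the penalty $b_{n_1}$ attached to the optimal action; the clipping $\max\{\cdot,0\}$ in Algorithm~\ref{alg:LCB} then forces $\widehat r$ at the optimal action down to $0$ while leaving $\widehat r$ at the suboptimal action strictly positive, so the LCB argmax picks the suboptimal action.

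Concretely, I would take $\cS=\{s_0,s_1\}$ with $\rho(s_0)$ a small constant (fixed later in terms of $c_b,c_V$) and $\rho(s_1)=1-\rho(s_0)$; state $s_1$ has a single action and state $s_0$ has two actions $a_1,a_2$ with $r(s_0,a_1)=R$, $r(s_0,a_2)=R/2$, so $\pi^\star(s_0)=a_1$ and all suboptimality is concentrated at $s_0$. For the rating model I take $\bar h(r)=\alpha r$ with slope $\alpha\in(0,1]$ to be chosen, and noise $h(r,\epsilon)=\alpha r+\tau\,\mathrm{sign}(\epsilon)\lvert\epsilon\rvert^{q}$ with $\tau$ fixed so that $\Var_\epsilon(h(r,\epsilon))=\tau^2\EE[\epsilon^{2q}]=V_{R,\sigma}^2$; this $h$ satisfies Condition~1 ($\bar h$ strictly increasing, $\bar h(0)=0$, $\bar h([0,R])=[0,\alpha R]\subseteq[0,R]$), Condition~2 ($\mathrm{sign}(\epsilon)\lvert\epsilon\rvert^{q}$ is an odd function of a symmetric $\epsilon$, hence symmetric about its zero mean, with tail dominated by the degree-$q$ polynomial $\epsilon^{q}$), and Condition~3 with $C_{h,1}=C_{h,2}=1$ since $\bar h$ is linear. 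For the sampling distribution I set $d(s_0,a_1)=\rho(s_0)/2$ and $d(s_1,\cdot)=\rho(s_1)/2$ at the minimum permitted by $C^\star=2$ and put the remaining mass on $d(s_0,a_2)=1/2$; a direct check gives $C^\star=2$ and Assumption~\ref{assumption:Cstar}, and $d(s_0,a_2)/d(s_0,a_1)=1/\rho(s_0)$ can be made as large as needed.

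For the analysis I would first use a Chernoff bound for the counts and a Bernstein-type bound for the empirical means to show that, with probability at least $1-\delta$ and provided $n\ge c(\delta,c_b,c_V,q,\sigma,R)$, one has $n_{(s,a)}=d(s,a)n(1\pm o(1))$ and $\lvert\widetilde r(s_0,a_j)-\alpha r(s_0,a_j)\rvert\le \xi_j$ with $\xi_j\le c_1\sqrt{V_{R,\sigma}^2\log(SA/\delta)/n_{(s_0,a_j)}}$; the degree-$q$ noise is sub-Weibull of order $2/q$, so the Bernstein tail is subdominant to the $1/\sqrt{n}$ term once $n$ exceeds a threshold depending on $q$ and $\sigma$, which is where the dependence of $c(\cdot)$ on $q,\sigma$ enters. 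On this event I then pick $\alpha$ so that $\alpha R$ lies in the window $\big(\,2(c_b\sqrt{c_V}+c_1)\sqrt{V_{R,\sigma}^2\log(SA/\delta)/n_{(s_0,a_2)}},\ (c_b\sqrt{c_V}-c_1)\sqrt{V_{R,\sigma}^2\log(SA/\delta)/n_{(s_0,a_1)}}\,\big)$, which is nonempty once $1/\rho(s_0)$ is a large enough constant (determined by $c_b,c_V,c_1$). Then $\widetilde r(s_0,a_1)\le\alpha R+\xi_1<b_{n_1}$ gives $\widehat r(s_0,a_1)=\max\{\widetilde r(s_0,a_1)-b_{n_1},0\}=0$, while $\widetilde r(s_0,a_2)\ge\alpha R/2-\xi_2>b_{n_2}$ gives $\widehat r(s_0,a_2)>0$, so $\widehat\pi_{\rm LCB}(s_0)=a_2$. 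Hence, with probability $\ge1-\delta$, $\SubOpt(\widehat\pi_{\rm LCB})=\rho(s_0)\big(r(s_0,a_1)-r(s_0,a_2)\big)=\rho(s_0)R/2$, which yields $\EE_\cD[\SubOpt(\widehat\pi_{\rm LCB})]=c_0R$ with $c_0$ a universal constant (depending only on the absolute constants $c_b,c_V$, and tunable by adjusting $\rho(s_0)$ and the two reward values).

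I expect the main obstacle to be the probabilistic bookkeeping that makes the two strict inequalities $\widetilde r(s_0,a_1)<b_{n_1}$ and $\widetilde r(s_0,a_2)>b_{n_2}$ hold \emph{simultaneously} with high probability: the empirical means are averages of heavy-tailed (sub-Weibull, not sub-Gaussian) variables, the margins $b_{n_1}-\alpha R$ and $\alpha R/2-b_{n_2}$ are themselves only of order $\sqrt{\log(SA/\delta)/n}$, and this forces one to (i) obtain the correct $q$- and $\sigma$-dependence in the threshold $c(\cdot)$ from the sub-Weibull concentration, and (ii) ensure the penalty constant $c_b$ (with the variance-overestimate factor $c_V$) has a definite multiplicative margin over the universal Bernstein constant $c_1$, which is what keeps the window for $\alpha$ nonempty and dictates how small $\rho(s_0)$ must be chosen. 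A subsidiary point is that $\alpha$ is coupled to $n$ through the realized counts; I would condition on the high-probability count event, choose $\alpha$ against the deterministic proxies $d(s_0,a_j)n$, and absorb the $o(1)$ count fluctuations into the window width.
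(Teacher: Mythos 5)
Your construction is a genuinely different mechanism from the paper's, but as written it has a concrete gap: the whole argument hinges on the window for $\alpha$ being nonempty, which requires $c_b\sqrt{c_V} > c_1$, i.e.\ the penalty coefficient in \eqref{eq:standard-LCB} must strictly dominate the universal constant $c_1$ in your concentration bound for the empirical mean of the noise. Nothing in the theorem's hypotheses guarantees this: $c_V>0$ is an arbitrary absolute constant (the learner may \emph{under}estimate the variance), and $c_b$ is only ``appropriately chosen'' for the sub-Gaussian LCB guarantee. If $c_b\sqrt{c_V}\le c_1$, your upper event $\widetilde r(s_0,a_1)+\xi_1<b_{n_1}$ cannot be forced with probability $1-\delta$ and the window is empty, so the proof as proposed fails. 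The step is repairable---use Condition 2 (symmetry of the noise about its mean) to argue $\widetilde r(s_0,a_1)\le \alpha R\le b_{n_1}$ with probability at least $1/2$, and require only $\sqrt{n_2/n_1}\gtrsim (c_b\sqrt{c_V}+c_1)/(c_b\sqrt{c_V})$---but then the failure probability is $\approx 1/2$ and, more importantly, $\rho(s_0)$ (hence your final constant $c_0=\Theta(\rho(s_0))$) depends on $c_b,c_V$ in both your original and the repaired version, whereas the theorem confines the $(c_b,c_V)$-dependence to the sample-size threshold and asserts a universal $c_0$. The paper's proof achieves this: it gets a failure probability of at least $0.2$ and a true gap of order $R$ independently of $c_b,c_V$ by a different route.

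That route is worth contrasting with yours, because the difference is conceptual and not just technical. The paper keeps the biased gap of constant order ($\bar h^{-1}(V_{R,\sigma})\asymp R$) and instead uses a single-state bandit with $A=n^2$ actions, so that with constant probability every suboptimal arm is observed at most once; a single observation of the degree-$q$ Gaussian-chaos noise ($q\ge 2$) exceeds the gap plus the penalty $b_1$ with probability roughly $\exp(-c\,\log^{1/q}(n^2/\delta))$ (Lemma \ref{lem:gaussian-chaos}), and over $\Theta(n)$ such arms some arm overshoots with constant probability, no matter how $c_b,c_V$ are set. In other words, the failure is attributable to the penalty knowing only the \emph{variance} and not the \emph{shape} of the heavy-tailed uncertainty---which is exactly the message the paper draws from Theorem \ref{thm:Cstar} in contrast with Theorem \ref{thm:upper-bound}. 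Your mechanism never uses $q\ge 2$: it would work verbatim with plain additive Gaussian noise, because the failure comes from an $n$-dependent bias that compresses a constant true gap down to the penalty scale, combined with oversampling the suboptimal arm. Granted, the paper's own instance also scales with $n$ (through $A=n^2$), so the $n$-dependence per se is not disqualifying; but in your construction the rating model itself (the slope $\alpha$) must be tuned to $n$, the established phenomenon is really the bias effect already captured by Theorem \ref{thm:uniform}, and the conclusion you reach is quantitatively weaker ($c_0$ not universal, and contingent on the unproven margin condition unless you adopt the probability-$1/2$ fix). You should either prove the result for all admissible $(c_b,c_V)$ with a $c_0$ free of them, or switch to a mechanism, like the paper's rare-observation/heavy-tail argument, that does not need the penalty to dominate (or be dominated by) the noise fluctuation by a fixed margin.
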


This result is reminiscent of Proposition 1 in \cite{rashidinejad2021bridging}, which constructs a bandit and shows the empirically best policy chooses a suboptimal action with constant probability under Assumption \ref{assumption:Cstar}. The very same work also shows that by adding a pessimism penalty, the LCB algorithm (Algorithm \ref{alg:LCB}) can converge to the optimal policy under the same data coverage assumption. In contrast, our theorem shows that even when we make pessimistic estimates and penalize less-observed state-action pairs in human rating data, a constant suboptimality can still ensue. This shows a disadvantage of using human rating as reward samples directly: although the estimation problem induced by human rating is still consistent, using LCB with only the knowledge of variance is not sufficient for convergence. Instead, the learner needs to know the shape of the noise distribution, but it is unrealistic to model the human uncertainty accurately in practice. 

\textbf{Proof sketch} In a bandit instance with special reward design, we first find the lower bound for the probability that suboptimal actions are only observed for a very small number of times in the offline dataset. Such state-action pairs can have huge fluctuation in their empirical reward average and mislead the algorithm. Then, we find the lower bound on the probability that a state-action pair $(s,a)$ such that $\widehat{r}(s,a) > \widehat{r}(s,a^\star)$ exists, which can cause the algorithm to always select the suboptimal action $a$ and suffer suboptimality. Different from Proposition 1 in \cite{rashidinejad2021bridging}, in which the reward noise for suboptimal actions is defined with two Dirac delta functions, the noise under our rating model is unbounded and can be viewed as a Gaussian chaos, so we compute this probability using a method from the corresponding literature. Moreover, in the same paper, a bandit instance is sufficient to induce constant suboptimality as long as its action space is designed large. In our case, since the pessimism penalty in Algorithm \ref{alg:LCB} accounts for the bandit size and larger bandit instances are penalized more, it requires a careful balance in the design of our bandit instance.

For concreteness, let us also provide a corollary under the example rating model in \eqref{eq:h-example} as follows.

\begin{corollary}\label{cor:Cstar}
    \textit{For any fixed constant $0 < \delta < 1$, there exists a contextual bandit instance with initial state distribution $\rho$ such that if one samples a dataset $\cD$ of size $n \ge c(\delta,c_b,c_V,\sigma,R)$ using a sampling distribution $d$ satisfying Assumption \ref{assumption:Cstar} with $C^\star = 2$ and runs Algorithm \ref{alg:LCB} on $\cD$, the output policy $\widehat{\pi}_{\rm LCB}$ must suffer constant suboptimality, i.e., 
    \begin{equation}
        \EE_\cD[\SubOpt(\widehat{\pi}_{\rm LCB})] = c_0,
    \end{equation}
    where $c_0$ is a universal constant and $c(\delta,c_b,c_V,\sigma,R)$ is a constant depending on $\delta,c_b,c_V,\sigma,R$.}
\end{corollary}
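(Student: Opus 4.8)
The plan is to obtain Corollary \ref{cor:Cstar} as the instantiation of Theorem \ref{thm:Cstar} at the concrete transformation $h(r,\epsilon)=r^2+r^2\epsilon|\epsilon|$, $\bar h(r)=r^2$, from \eqref{eq:h-example}; the work then splits into (i) verifying that this $h$ is an admissible model, so that Theorem \ref{thm:Cstar} applies, and (ii) tracking how the generic constants in that theorem specialize. For (i): Condition 1 holds because $\bar h(r)=r^2$ is strictly increasing on $[0,1]$, sends $[0,1]$ into $[0,1]$, and satisfies $\bar h(0)=0$; the symmetry required by Condition 2 holds because $\epsilon\overset{d}{=}-\epsilon$ for $\epsilon\sim\cN(0,\sigma^2)$ forces $r^2\epsilon|\epsilon|\overset{d}{=}-r^2\epsilon|\epsilon|$; and Condition 3 is a routine check on $r\mapsto r^2$ and $r\mapsto\sqrt r$ on $[0,1]$ (as asserted following \eqref{eq:h-example}). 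The one genuine subtlety is that $r^2\epsilon|\epsilon|$ is subexponential rather than a polynomial in $\epsilon$, so it does not literally match the degree-$q$ stipulation of Condition 2; here I would invoke the observation recorded after \eqref{eq:h-example}, that the tails of $r^2\epsilon|\epsilon|$ are dominated by those of the degree-$2$ Gaussian chaos $r^2\epsilon^2$, which is exactly the regime ($q=2$) the proof of Theorem \ref{thm:Cstar} is built to handle, and run the rest of the argument with $q=2$ fixed.

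For (ii) I would re-run the construction behind Theorem \ref{thm:Cstar} with $\bar h(r)=r^2$ in place of the generic expected bias function: a bandit with a single state whose optimal action has a moderate reward and a block of $A-1$ slightly-worse suboptimal actions, with a sampling distribution $d$ achieving $C^\star=2$ under which, with constant probability, at least one suboptimal action is observed only a few times in $\cD$. Conditioned on such under-sampling, the empirical average $\widetilde r(s,a)=r(s,a)^2+\frac{r(s,a)^2}{n_{(s,a)}}\sum_i\epsilon_i|\epsilon_i|$ can overshoot $\widehat r(s,a^\star)$ even after subtracting the pessimism penalty \eqref{eq:standard-LCB}, which in this model scales like $b_m\asymp\sigma^2\sqrt{\log(SA/\delta)/m}$ since $\Var_\epsilon(h(r,\epsilon))=3r^4\sigma^4\le 3\sigma^4$, so that $V_{R,\sigma}^2$ may be taken to be $3\sigma^4$. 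Choosing the reward gap, the number $A-1$ of suboptimal actions, and the sample size $n\ge c(\delta,c_b,c_V,\sigma,R)$ appropriately makes the event ``$\arg\max_a\widehat r(s,a)$ is suboptimal'' have probability bounded below by a universal constant, which gives $\EE_\cD[\SubOpt(\widehat{\pi}_{\rm LCB})]$ equal to a universal constant times the reward gap; since the example's rewards lie in $[0,1]$ (i.e.\ $R=1$), this bound is $c_0$ rather than $c_0R$, and with $q$ frozen at $2$ the threshold constant no longer carries a dependence on $q$.

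I expect the main obstacle to be the quantitative anti-concentration step: one needs a lower bound, uniform over the relevant small values of $m=n_{(s,a)}$, on the probability that $\frac1m\sum_{i=1}^m\epsilon_i|\epsilon_i|$ exceeds a threshold of order $\sigma^2\sqrt{\log(SA/\delta)/m}$ plus the rescaled reward gap. Unlike the bounded two-atom noise used in Proposition 1 of \cite{rashidinejad2021bridging}, $\epsilon|\epsilon|$ is a mean-zero subexponential variable, so this estimate has to come from small-ball / chaos tail bounds for sums of $\epsilon_i|\epsilon_i|$, and it is this estimate that dictates how large the action block $A-1$ must be taken (to boost a per-action probability into a constant via a union-type argument) against how large the penalty $b_m$ then becomes (since $b_m$ already absorbs the $\log(SA)$ factor) --- the ``careful balance'' flagged in the proof sketch of Theorem \ref{thm:Cstar}. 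Once this estimate is in hand, the remainder --- converting a constant-probability ``wrong $\arg\max$'' event into a constant expected suboptimality and collecting the dependence of the threshold $n$ on $\delta,c_b,c_V,\sigma,R$ --- is routine bookkeeping.
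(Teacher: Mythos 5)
Your proposal is correct and follows essentially the same route as the paper: the corollary is proved by instantiating the construction and argument of Theorem \ref{thm:Cstar} with the example model \eqref{eq:h-example} (the paper picks $r(s,a_1)=\tfrac34$, $r(s,a_2)=\tfrac12$, so the gap is $\tfrac14$ and $R=1$), checking the two conditions on the reward design with $\Var_\epsilon(h(r,\epsilon))=3r^4\sigma^4$, and handling the anti-concentration step via the same Gaussian-chaos lower tail bound (Lemma \ref{lem:gaussian-chaos}) with $q=2$, which removes the $q$-dependence exactly as you describe.
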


\subsection{Lower Bound under Full Coverage}

Uniform coverage is another popular coverage assumption for offline policy learning \citep{yin2021near,uehara2021finite,hao2021bootstrapping}. It can be written as follows:
\begin{assumption}\label{assumption:uniform}
    \textit{The sampling distribution satisfies $d(s,a) > 0$ for any $(s,a)\in\cS\times\cA$.}
\end{assumption}
This coverage assumption is much stronger than Assumption \ref{assumption:Cstar} and makes the offline policy learning problem much easier. Under Assumption \ref{assumption:uniform}, many algorithms without the pessimism principle can also be shown to provably converge to the optimal policy \citep{chen2019information,xie2020q}. Moreover, \cite{jin2021pessimism} showed that the suboptimality of algorithms with pessimism can decay faster when the data are well-explored. In this setting, we establish a lower bound on the suboptimality of Algorithm \ref{alg:LCB} under Assumption \ref{assumption:uniform}. 

\begin{theorem}\label{thm:uniform}
    \textit{For any fixed constant $0 < \delta < 1$, there exists a contextual bandit instance with initial state distribution $\rho$ such that if one samples a dataset $\cD$ of size $n \ge \max\{48\sigma^4, 60\}$ using a sampling distribution $d$ satisfying Assumption \ref{assumption:uniform} with $d(s,a) = \frac{1}{SA}$ for every $s\in\cS$ and $a\in\cA$ and runs Algorithm \ref{alg:LCB} on $\cD$, the output policy $\widehat{\pi}_{\rm LCB}$ must suffer suboptimality at least
    \begin{equation*}
        \EE_\cD[\SubOpt(\widehat{\pi}_{\rm LCB})] = c_0\cdot \bar{h}^{-1}\left(\sqrt{\frac{V_{R,\sigma}^2}{n}}\right),
    \end{equation*}
    where $c_0$ is a constant that depends on $q$.}
\end{theorem}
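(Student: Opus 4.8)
The plan is to exhibit a single-state, two-action instance on which LCB with the variance-based penalty \eqref{eq:standard-LCB} cannot tell the better action from the worse one, and to arrange the reward gap between the two actions to be of order $\bar h^{-1}\!\left(\sqrt{V_{R,\sigma}^2/n}\right)$. Take $\cS=\{1\}$ with $\rho$ the point mass on $1$, $\cA=\{a_1,a_2\}$, sampling distribution $d(1,a_1)=d(1,a_2)=\tfrac12$, and true rewards $r(1,a_2)=0$, $r(1,a_1)=\Delta$ with $\Delta:=\bar h^{-1}\!\left(\beta\sqrt{V_{R,\sigma}^2/n}\right)$ for a constant $\beta>0$ fixed below (for $n$ past the stated threshold this keeps $\Delta\le R$, so the instance is admissible). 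For the rating transformation I take $h(r,\epsilon)=\bar h(r)+\psi(\epsilon)$ with $\bar h$ any function obeying Conditions 1 and 3 and $\psi$ a symmetric, mean-zero, degree-$q$ noise ($\psi(\epsilon)=\epsilon$ when $q=1$, $\psi(\epsilon)=\epsilon|\epsilon|^{q-1}$ in general); then Conditions 1--3 hold, and, crucially, $\Var_\epsilon(h(r,\epsilon))=\Var(\psi(\epsilon))=:V_{R,\sigma}^2$ is the same positive number at both actions.

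Next I would analyze Algorithm \ref{alg:LCB} on this instance, conditioning on the visitation counts $n_{(1,a_i)}$. By a Chernoff bound, $\mathcal{E}_0:=\{\,n_{(1,a_1)},n_{(1,a_2)}\in[\tfrac n4,\tfrac{3n}4]\,\}$ has probability at least $\tfrac12$ once $n\ge 60$, and on $\mathcal{E}_0$ both penalties $b_{n_{(1,a_i)}}$ lie within a constant factor of $c_b\sqrt{c_V V_{R,\sigma}^2\log(SA/\delta)/n}$. Since $h(r,\epsilon)=\bar h(r)+\psi(\epsilon)$, the empirical mean $\widetilde r(1,a_i)$ equals $\bar h(r(1,a_i))$ plus an average of $n_{(1,a_i)}$ i.i.d.\ copies of $\psi(\epsilon)$ -- a mean-zero symmetric variable of standard deviation $\sqrt{V_{R,\sigma}^2/n_{(1,a_i)}}$ that is close to Gaussian by a Berry--Esseen estimate (this is where the $n\ge 48\sigma^4$ requirement enters, to control the higher moments of $\psi$). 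Two consequences on $\mathcal{E}_0$: (i) since $b_{n_{(1,a_2)}}$ equals exactly $\tau:=c_b\sqrt{c_V\log(SA/\delta)}$ standard deviations of $\widetilde r(1,a_2)$ (the counts cancel), $\Pr\!\left(\widehat r(1,a_2)>0\mid\text{counts}\right)\ge p_2$ for a positive constant $p_2$; (ii) choosing $\beta$ so that $\bar h(\Delta)=\beta\sqrt{V_{R,\sigma}^2/n}$ sits a constant multiplicative factor below $b_{n_{(1,a_1)}}$, the event $\{\widetilde r(1,a_1)\le b_{n_{(1,a_1)}}\}$ -- i.e.\ $\{\widehat r(1,a_1)=0\}$ -- has probability at least $p_1>0$. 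Conditioned on the counts, these two events depend on disjoint blocks of the $\epsilon_i$'s, hence are independent, so $\{\widehat r(1,a_1)=0,\ \widehat r(1,a_2)>0\}$ has probability at least $\tfrac12 p_1p_2=:p_0>0$; on that event $\arg\max_a\widehat r(1,a)=a_2$ unambiguously.

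Finally I would convert this into the suboptimality bound. With a single state, $\SubOpt(\widehat\pi_{\rm LCB})=\Delta\cdot\ind\{\widehat\pi_{\rm LCB}(1)=a_2\}$, so $\EE_\cD[\SubOpt(\widehat\pi_{\rm LCB})]\ge p_0\,\Delta=p_0\,\bar h^{-1}\!\left(\beta\sqrt{V_{R,\sigma}^2/n}\right)$. If $\beta\ge1$ this already dominates $p_0\,\bar h^{-1}\!\left(\sqrt{V_{R,\sigma}^2/n}\right)$ by monotonicity of $\bar h^{-1}$; if $\beta<1$, applying the first inequality of Condition 3 about $\lceil 1/\beta\rceil$ times and telescoping gives $\bar h^{-1}(\beta x)\ge\bar h^{-1}(x)/\!\left(1+\lceil1/\beta\rceil C_{h,1}\right)$, so the bound is still $\ge c_0\,\bar h^{-1}\!\left(\sqrt{V_{R,\sigma}^2/n}\right)$ with $c_0=c_0(q,c_b,c_V,\delta)>0$.

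The crux is part (ii) together with the uniformity of $p_0$: the gap $\Delta$ must be calibrated to the penalty scale, not to an arbitrary constant, since if $\bar h(\Delta)$ greatly exceeds $b_{n_{(1,a_1)}}$ then $\widehat r(1,a_1)$ is positive with high probability and LCB recovers the optimal action, whereas if $\bar h(\Delta)$ is much smaller the gap -- hence the lower bound -- falls short of the claim. Pinning $\bar h(\Delta)=\Theta\!\left(b_{n_{(1,a_1)}}\right)$ and then showing that this yields a constant (not $n$-vanishing) probability of the bad clipping configuration, uniformly over all admissible noise shapes $\psi$ and all $n$ beyond the stated threshold, is the delicate point; the Berry--Esseen control of the non-Gaussian averages $\widetilde r(1,a_i)$ and the use of a rating model with $\Var_\epsilon(h(0,\epsilon))>0$ (which is what removes any reliance on tie-breaking at $a_2$) are the secondary technical ingredients.
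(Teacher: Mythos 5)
Your overall architecture (single-state two-action instance, gap calibrated to $\bar h^{-1}(\sqrt{V_{R,\sigma}^2/n})$, count concentration, then a constant-probability mistake event) matches the paper's, but there are two genuine gaps. First, you fix the rating model to the additive form $h(r,\epsilon)=\bar h(r)+\psi(\epsilon)$, so the noise is independent of the reward. The theorem is meant for a given general $h$ satisfying Conditions 1--3 --- in particular for noise that interacts with the reward, as in the example $h(r,\epsilon)=r^2+r^2\epsilon|\epsilon|$ that Corollary \ref{cor:uniform} instantiates --- and the paper's proof keeps $h$ arbitrary, only choosing the two reward values so that $\Var_\epsilon(h(r(s,a_i),\epsilon))$ is a constant multiple of $V_{R,\sigma}^2$, and then applies the Gaussian-chaos lower tail bound (Lemma \ref{lem:gaussian-chaos}) to the difference of the two empirical means. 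Your Berry--Esseen-for-i.i.d.-additive-noise argument does not cover that model class, so at best you prove existence of one bad $(h,\text{instance})$ pair rather than a hard instance for the given rating model.

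Second, your bad event is routed through the clipping at zero: you need $\widetilde r(1,a_2)$ (mean $\bar h(0)=0$) to exceed its penalty, i.e.\ a deviation of $\tau=c_b\sqrt{c_V\log(SA/\delta)}$ standard deviations. The probability of that event, and the sample size needed for Berry--Esseen to certify it, necessarily depend on $c_b,c_V,\delta$ (you concede $c_0=c_0(q,c_b,c_V,\delta)$, and your stated threshold $\max\{48\sigma^4,60\}$ cannot suffice, since the Gaussian tail at $\tau$ deviations can be arbitrarily small relative to the Berry--Esseen error $C(q)/\sqrt{n}$). This is weaker than the theorem, whose constant depends only on $q$, and it is incompatible with the paper's accompanying claim that the bound holds no matter what penalty $b_n$ is used: with a large $c_b$ your event $\{\widehat r(1,a_2)>0\}$ essentially never occurs. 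The paper avoids this by placing both rewards away from zero (so clipping is irrelevant), letting the gap appear as $\bar h(r(s,a_1))-\bar h(r(s,a_2))\le C_{h,2}\sqrt{V_{R,\sigma}^2/n}$ via Condition 3, and noting that with comparable counts the penalty difference $b_{n_{(s,a_2)}}-b_{n_{(s,a_1)}}$ can be given a favorable sign and dropped; the Lat\l{}ala lower bound then yields a mistake probability $c_q\exp(-(\mathrm{const})^{1/q})$ depending only on $q$, uniformly in the penalty. To repair your proof you would need to (i) compare the two arms directly rather than against the zero clip, and (ii) replace Berry--Esseen with an anti-concentration bound valid for general reward-dependent polynomial noise.
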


In fact, under uniform data coverage as in Theorem \ref{thm:uniform}, pessimism becomes unnecessary and this result holds no matter what penalty $b_n$ is used in the algorithm. This theorem demonstrates another disadvantage of human rating: even when the data covers the entire state-action space and learning is no longer impeded by the lack of knowledge of human uncertainty, the suboptimality is still bottlenecked by human bias.

We also provide a corollary corresponding to the example model in \eqref{eq:h-example}, which shows the suboptimality can decay more slowly under the influence of annotator bias.

\begin{corollary}\label{cor:uniform}
    \textit{For any fixed constant $0 < \delta < 1$, there exists a contextual bandit instance with initial state distribution $\rho$ such that if one samples a dataset $\cD$ of size $n \ge \max\{48\sigma^4, 60\}$ using a sampling distribution $d$ satisfying Assumption \ref{assumption:uniform} with $d(s,a) = \frac{1}{SA}$ for every $s\in\cS$ and $a\in\cA$ and runs Algorithm \ref{alg:LCB} on $\cD$, the output policy $\widehat{\pi}_{\rm LCB}$ must suffer suboptimality at least
    \begin{equation*}
        \EE_\cD[\SubOpt(\widehat{\pi}_{\rm LCB})] = \frac{c_0\sigma}{n^{1/4}},
    \end{equation*}
    where $c_0$ is a universal constant.}
\end{corollary}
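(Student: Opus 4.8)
The plan is to obtain Corollary \ref{cor:uniform} as a direct specialization of Theorem \ref{thm:uniform} to the example rating model \eqref{eq:h-example}, by evaluating the two model-dependent quantities that appear in the bound of Theorem \ref{thm:uniform} --- the variance proxy $V_{R,\sigma}^2$ and the inverse expected bias function $\bar{h}^{-1}$ --- and then simplifying. Since the example assumes $r(s,a)\in[0,1]$, I would work with $R = 1$ throughout.

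First I would check that Theorem \ref{thm:uniform} is applicable here. The function $\bar{h}(r) = r^2$ is a strictly increasing bijection of $[0,1]$ onto $[0,1]$ with $\bar{h}(0)=0$, so Condition 1 holds, and one verifies Condition 3 directly with suitable $C_{h,1},C_{h,2}$; the noise term $r^2\epsilon|\epsilon|$ is symmetric about its zero mean, which is the substantive part of Condition 2. Although $\epsilon|\epsilon|$ is not literally a polynomial in $\epsilon$, its tail is dominated by the degree-$2$ polynomial $\epsilon^2$, so --- as noted in the discussion following \eqref{eq:h-example} --- the conclusion of Theorem \ref{thm:uniform} applies with $q = 2$, and the constant $c_0$, which in general depends only on $q$, becomes an absolute constant.

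Next I would compute $V_{R,\sigma}^2$. For $\epsilon\sim\cN(0,\sigma^2)$ we have $\EE_\epsilon[\epsilon|\epsilon|] = 0$ by symmetry, hence $\Var_\epsilon(\epsilon|\epsilon|) = \EE_\epsilon[\epsilon^2|\epsilon|^2] = \EE_\epsilon[\epsilon^4] = 3\sigma^4$, and therefore $\Var_\epsilon(h(r,\epsilon)) = r^4\,\Var_\epsilon(\epsilon|\epsilon|) = 3 r^4\sigma^4 \le 3\sigma^4$ for every $r\in[0,1]$. Thus we may take $V_{R,\sigma}^2 = 3\sigma^4$, which is the tight choice (attained at $r=1$). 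The inverse of $\bar{h}$ is $\bar{h}^{-1}(x) = \sqrt{x}$ for $x\in[0,1]$, and the hypothesis $n \ge 48\sigma^4 \ge 3\sigma^4$ guarantees $\sqrt{V_{R,\sigma}^2/n} = \sqrt{3}\,\sigma^2/\sqrt{n} \le 1$, so the argument of $\bar{h}^{-1}$ lies in its domain.

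Finally I would substitute these values into the bound of Theorem \ref{thm:uniform}: for the contextual bandit instance it produces,
\[
\EE_\cD[\SubOpt(\widehat{\pi}_{\rm LCB})] = c_0\,\bar{h}^{-1}\!\left(\sqrt{\frac{V_{R,\sigma}^2}{n}}\right) = c_0\left(\frac{3\sigma^4}{n}\right)^{1/4} = 3^{1/4}\,c_0\,\frac{\sigma}{n^{1/4}},
\]
and absorbing the factor $3^{1/4}$ (together with the now-fixed dependence on $q=2$) into a universal constant yields the claimed $c_0\sigma/n^{1/4}$. There is no genuine obstacle beyond this bookkeeping; the only point that needs care is the justification that Theorem \ref{thm:uniform} extends to the non-polynomial noise $r^2\epsilon|\epsilon|$ through tail domination by a degree-$q$ polynomial, which is precisely the extension already flagged in the discussion of \eqref{eq:h-example}.
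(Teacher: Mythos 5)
Your proposal is correct and follows essentially the same route as the paper: the paper also specializes the two-arm construction of Theorem \ref{thm:uniform} to the example model, using $\bar{h}(r)=r^2$, $\Var_\epsilon(h(r,\epsilon))=3r^4\sigma^4$, and a reward gap of order $\sigma n^{-1/4}$ (it fixes $r(s,a_1)=\tfrac12+\sigma n^{-1/4}$, $r(s,a_2)=\tfrac12$ to verify the construction's variance and range conditions explicitly, rather than quoting the theorem as a black box with $V_{R,\sigma}^2=3\sigma^4$). The only point worth keeping in mind is the one you already flag: the non-polynomial noise $r^2\epsilon|\epsilon|$ is handled via tail comparison with $r^2\epsilon^2$, exactly as in the paper's remark after \eqref{eq:h-example}.
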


\subsection{Upper Bound with Knowledge of Noise Distribution}

To compare with the negative results for human rating under partial coverage (Assumption \ref{assumption:Cstar}), we prove an upper bound on the suboptimality of the LCB algorithm (Algorithm \ref{alg:LCB}) in the most benign case that the learner has full knowledge of the uncertainty noise distribution of the rating model and design the LCB penalty $b_n$ accordingly. This assumes the learner is able to find the confidence interval with any $\delta$, which is equivalent to knowing the cumulative density function of the distribution and can be unrealistic for real human feedback data in practice. This upper bound result provides a more direct comparison with the preference-based approach and demonstrates how human bias can affect the suboptimality when the uncertainty noise can be coped with.

\begin{theorem}\label{thm:upper-bound}
    Suppose Assumption \ref{assumption:Cstar} holds. For any fixed constant $0 < \delta < 1$, if one runs Algorithm \ref{alg:LCB} with 
    \begin{equation*}
        b_m = c_b\sqrt{\frac{V^2_{R,\sigma}\log^q\frac{SA}{\delta}}{m}}
    \end{equation*}
    and an appropriately chosen universal constant $c_b$, as soon as $n > 8\log\frac{2SA}{\delta}/\bar{d}$, where $\bar{d} := \min_{(s,a)\in\cX}d(s,a)$ and $\cX := \{(s,a)\in\cS\times\cA ~:~ d(s,a)> 0\}$, with probability $1-\delta$, the suboptimality of the output policy $\widehat{\pi}_{\rm LCB}$ satisfies
    \begin{equation*}
        \SubOpt(\widehat{\pi}_{\rm LCB}) \le c_0\sum_{(s,a)\in\cX}d_\rho^{\pi^\star}(s,a)\cdot \bar{h}^{-1}\left(\sqrt{\frac{V_{R,\sigma}^2\log^q\frac{SA}{\delta}}{n\cdot d(s,a)}}\right),
    \end{equation*}
    where $c_0$ is a constant that depends on $q$.
\end{theorem}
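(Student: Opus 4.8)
The plan is to run the usual pessimism argument for LCB, but carried out on the scale of the \emph{biased} reward $\bar{h}(r)$ rather than $r$ itself, and then to translate the resulting $\bar{h}$-scale error into a true-reward gap via the regularity Condition 3. I would split the argument into a high-probability \emph{concentration step}, certifying that $|\widetilde{r}(s,a)-\bar{h}(r(s,a))|\le b_{n_{(s,a)}}$ together with $n_{(s,a)}\ge n\,d(s,a)/2$ for every covered pair, and a deterministic \emph{pessimism/translation step} on the resulting good event $\mathcal{E}$.

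For the concentration step I would fix $(s,a)$ and write $Y:=h(r(s,a),\epsilon)-\bar{h}(r(s,a))$, a centered degree-$q$ polynomial of the Gaussian $\epsilon$ (Condition 2; more generally any noise with $q$-polynomial-dominated tails, such as the $r^2\epsilon|\epsilon|$ in \eqref{eq:h-example}). Gaussian hypercontractivity gives $\|Y\|_p\le(p-1)^{q/2}\|Y\|_2\le p^{q/2}V_{R,\sigma}$ for all $p\ge2$, i.e.\ $\|Y\|_{\psi_{2/q}}\lesssim V_{R,\sigma}$. Conditioning on the counts, $\widetilde{r}(s,a)-\bar{h}(r(s,a))$ is the mean of $n_{(s,a)}$ i.i.d.\ copies of $Y$, so a Bernstein-type inequality for sums of independent $\psi_{2/q}$ variables gives, for the mean of $m$ samples, a tail of order $\exp\!\big(-c\min\{mt^2/V_{R,\sigma}^2,\ (mt/V_{R,\sigma})^{2/q}\}\big)$. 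The point to verify next is that, because $q\ge2$ and $m\ge1$, the single envelope $b_m=c_b\sqrt{V_{R,\sigma}^2\log^q(SA/\delta)/m}$ dominates \emph{both} regimes — the $\sqrt{\log(SA/\delta)/m}$ ``CLT'' scale and the $\log^{q/2}(SA/\delta)/m$ ``single large term'' scale — once $c_b$ is a large enough constant; this is exactly what forces $\log^q$ rather than $\log$ into the penalty. A union bound over the $\le SA$ covered pairs at level $\delta/2$ then gives the first half of $\mathcal{E}$, while a multiplicative Chernoff bound gives $n_{(s,a)}\ge n\,d(s,a)/2\ge n\bar{d}/2\ge1$ for all $(s,a)\in\cX$ with probability $\ge1-SA\,e^{-n\bar{d}/8}\ge1-\delta/2$ — which is precisely where the hypothesis $n>8\log(2SA/\delta)/\bar{d}$ enters. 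Hence $\Pr(\mathcal{E})\ge1-\delta$.

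On $\mathcal{E}$ everything is deterministic. Since $\bar{h}\ge\bar{h}(0)=0$ on $[0,R]$ (Condition 1), I get $\widehat{r}(s,a)=\max\{\widetilde{r}(s,a)-b_{n_{(s,a)}},0\}\le\bar{h}(r(s,a))$ for every $(s,a)$ (reading $\widehat{r}=0$ for a pair never sampled), and $\widehat{r}(s,a)\ge\bar{h}(r(s,a))-2b_{n_{(s,a)}}$ for $(s,a)\in\cX$. For a state $s$ with $\rho(s)>0$ and any action $a$ in the support of $\pi^\star(\cdot\,|\,s)$ — which lies in $\cX$ by Assumption \ref{assumption:Cstar} — the chain $\bar{h}(r(s,\widehat{\pi}_{\rm LCB}(s)))\ge\widehat{r}(s,\widehat{\pi}_{\rm LCB}(s))\ge\widehat{r}(s,a)\ge\bar{h}(r(s,a))-2b_{n_{(s,a)}}$ (the middle step using that $\widehat{\pi}_{\rm LCB}(s)$ is the $\arg\max$) gives $\bar{h}(r(s,a))-\bar{h}(r(s,\widehat{\pi}_{\rm LCB}(s)))\le2b_{n_{(s,a)}}$. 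Since $a$ is optimal, $r(s,a)\ge r(s,\widehat{\pi}_{\rm LCB}(s))$ and hence $\bar{h}(r(s,a))\ge\bar{h}(r(s,\widehat{\pi}_{\rm LCB}(s)))$, so the first inequality of Condition 3 applied to these two biased-scale values yields $r(s,a)-r(s,\widehat{\pi}_{\rm LCB}(s))\le C_{h,1}\,\bar{h}^{-1}(2b_{n_{(s,a)}})$, using that $\bar{h}^{-1}$ is increasing. I would then substitute $n_{(s,a)}\ge n\,d(s,a)/2$ and absorb the numerical factor through the sub-additivity consequence of Condition 3 (iterating $\bar{h}^{-1}(2x)\le(1+C_{h,1})\bar{h}^{-1}(x)$), bounding the gap by $c_0\,\bar{h}^{-1}\!\big(\sqrt{V_{R,\sigma}^2\log^q(SA/\delta)/(n\,d(s,a))}\big)$ with $c_0$ depending only on $q$ (through $c_b$) and $C_{h,1}$; the edge case where the argument exceeds the domain of $\bar{h}^{-1}$ is handled by the trivial bound $\SubOpt\le R$. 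Averaging this pointwise bound over $a\sim\pi^\star(\cdot\,|\,s)$ and $s\sim\rho$ and using $d^{\pi^\star}_\rho(s,a)=\rho(s)\pi^\star(a\,|\,s)$ (supported on $\cX$) then gives the claimed bound on $\SubOpt(\widehat{\pi}_{\rm LCB})$.

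The hard part will be the concentration step: establishing that one fixed envelope $b_m=c_b\sqrt{V_{R,\sigma}^2\log^q(SA/\delta)/m}$ bounds, uniformly over all $m\ge1$ and all $(s,a)$, the empirical mean of a degree-$q$ Gaussian chaos with the required confidence. This is where the hypercontractivity/$\psi_{2/q}$ control of a single sample, a generalized Bernstein inequality for i.i.d.\ sums, and the comparison of the ``Gaussian'' and ``large single term'' deviation scales against $b_m$ all have to be assembled and checked — and it is this last comparison, valid because $q\ge2$, that both makes the argument go through and explains the $\log^q$ in the penalty. The remaining pieces are the standard LCB bookkeeping, merely transplanted from the $r$-scale to the $\bar{h}(r)$-scale.
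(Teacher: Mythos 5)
Your proposal is correct and follows the same overall architecture as the paper's proof: a high-probability event combining (a) concentration of the rating noise at scale $\sqrt{V_{R,\sigma}^2\log^q(SA/\delta)/n_{(s,a)}}$ and (b) the multiplicative-Chernoff bound $n_{(s,a)}\ge n\,d(s,a)/2$ on the counts; then the standard LCB pessimism chain carried out on the $\bar{h}(r)$-scale (drop $\widehat{r}(s,\pi^\star(s))-\widehat{r}(s,\widehat{\pi}(s))\le 0$, use $\widehat{r}\le\bar{h}(r)$ on the good event); and finally Condition~3 to translate the $\bar{h}$-scale gap into a true-reward gap and to absorb the numerical factors inside $\bar{h}^{-1}$, before averaging against $d_\rho^{\pi^\star}$ (supported on $\cX$ by Assumption~\ref{assumption:Cstar}). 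The one place where you genuinely diverge is the concentration step for the empirical mean of the degree-$q$ chaos: the paper invokes the Lata{\l}a-type tail bound (Lemma~\ref{lem:gaussian-chaos}) directly on $\widetilde{r}(s,a)-\EE[\widetilde{r}(s,a)]$ with variance $V_{R,\sigma}^2/n_{(s,a)}$ and solves for $t$, whereas you bound the $\psi_{2/q}$ norm of a single sample via hypercontractivity and then apply a generalized Bernstein inequality for sums of sub-Weibull variables, explicitly checking that the single envelope $b_m=c_b\sqrt{V_{R,\sigma}^2\log^q(SA/\delta)/m}$ dominates both the Gaussian regime and the heavy-tail regime because $q\ge 2$ and $\sqrt{m}\le m$. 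Your route is somewhat longer but is the more standard treatment of an i.i.d.\ average (the paper's lemma is stated for a polynomial of a single Gaussian and is applied to the average without comment), and it makes transparent exactly why $\log^q$ rather than $\log$ must appear in the penalty; the paper's route is terser and yields the same $c(q)$-dependent constant. Two minor points in your favor: your direct Chernoff computation matches the theorem's stated sample-size threshold $n>8\log\frac{2SA}{\delta}/\bar{d}$ (the paper's Lemma~\ref{lem:empirical-n} as written carries an extra $SA$ factor), and your explicit handling of unobserved pairs ($\widehat{r}=0\le\bar{h}(r)$) and of the degenerate case where the argument of $\bar{h}^{-1}$ leaves $[0,R]$ fills small gaps the paper glosses over. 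Just make sure, when writing it out, that the $\psi_{2/q}$ Bernstein inequality you cite has constants depending only on $q$, so that $c_0$ retains the claimed dependence on $q$ (and on $C_{h,1}$, which the paper treats as a model constant).
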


This theorem shows that even when the algorithm has full knowledge of the human uncertainty in the rating model, human bias can still influence the suboptimality of $\widehat{\pi}_{\rm LCB}$ negatively. It demonstrates the effect of bias on the suboptimality is truly unavoidable when using human rating directly. This can be further illustrated with our example model \eqref{eq:h-example} as follows, which shows the suboptimality still decays more slowly because of the quadratic human bias.

\begin{corollary}\label{cor:upper-bound}
    Suppose Assumption \ref{assumption:Cstar} holds. For any fixed constant $0 < \delta < 1$, if one runs Algorithm \ref{alg:LCB} with 
    \begin{equation*}
        b_m = \sqrt{\frac{64\sigma^4\log\frac{SA}{\delta}}{m}} + \frac{8\sigma^2\log\frac{SA}{\delta}}{m},
    \end{equation*}
    as soon as $n > 8\log\frac{2SA}{\delta}/\bar{d}$, where $\bar{d} := \min_{(s,a)\in\cX}d(s,a)$ and $\cX := \{(s,a)\in\cS\times\cA ~:~ d(s,a)> 0\}$, with probability $1-\delta$, the suboptimality of the output policy $\widehat{\pi}_{\rm LCB}$ satisfies
    \begin{equation*}
        \SubOpt(\widehat{\pi}_{\rm LCB}) \le 2\sum_{(s,a)\in\cX}d_\rho^{\pi^\star}(s,a)\left(\frac{256\sigma^4\log\frac{SA}{\delta}}{n\cdot d(s,a)}\right)^{1/4} + d_\rho^{\pi^\star}(s,a)\sqrt{\frac{32\sigma^2\log\frac{SA}{\delta}}{n\cdot d(s,a)}}.
    \end{equation*}
\end{corollary}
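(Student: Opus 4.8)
The plan is to specialize the argument behind Theorem \ref{thm:upper-bound} to the example model \eqref{eq:h-example}, upgrading its concentration step to exploit the particular sub-exponential noise and carrying the absolute constants through. First I would record the quantities for this instance: $R=1$, $\bar{h}(r)=r^2$ on $[0,1]$, hence $\bar{h}^{-1}(x)=\sqrt{x}$; and, although the noise $r^2\epsilon|\epsilon|$ is not literally a polynomial, we are in the case $q=2$ since its tail is dominated by the quadratic $r^2\epsilon^2$, as noted after \eqref{eq:h-example}. Because $\epsilon\sim\cN(0,\sigma^2)$ is symmetric, $\EE_\epsilon[r^2\epsilon|\epsilon|]=0$ (so $\bar{h}(r)=r^2$ indeed) and $r^2\epsilon|\epsilon|$ is a centered sub-exponential variable with the same absolute moments as $r^2\epsilon^2$; in particular $\Var_\epsilon(h(r,\epsilon))=r^4\EE[\epsilon^4]=3r^4\sigma^4\le3\sigma^4$ and its sub-exponential parameter is $O(\sigma^2)$, uniformly in $r\in[0,1]$. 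One also checks \eqref{eq:h-example} satisfies Conditions 1--3; the only structural fact used downstream is that $\bar{h}^{-1}=\sqrt{\cdot}$ is subadditive, equivalently $\sqrt{x}-\sqrt{y}\le\sqrt{x-y}$ for $x\ge y\ge0$.

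Second, I would prove the concentration bound that produces the stated penalty, replacing the generic polynomial-chaos estimate behind Theorem \ref{thm:upper-bound} by a sharper two-sided Bernstein bound adapted to this noise. Conditioning on the visitation counts, $\widetilde{r}(s,a)-\bar{h}(r(s,a))$ is an average of $n_{(s,a)}$ i.i.d.\ centered sub-exponential variables $r(s,a)^2\epsilon_i|\epsilon_i|$; Bernstein's inequality and a union bound over the at most $SA$ state--action pairs give, with probability at least $1-\delta/2$, that $|\widetilde{r}(s,a)-\bar{h}(r(s,a))|\le b_{n_{(s,a)}}$ for all $(s,a)$ with $n_{(s,a)}\ge1$, where $b_m=\sqrt{64\sigma^4\log(SA/\delta)/m}+8\sigma^2\log(SA/\delta)/m$ and the absolute constants $64,8$ absorb the sub-exponential parameters from the first step. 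In parallel, a multiplicative Chernoff bound together with the hypothesis $n>8\log(2SA/\delta)/\bar{d}$ gives $n_{(s,a)}\ge n\,d(s,a)/2\ge1$ for every $(s,a)\in\cX$ with probability at least $1-\delta/2$. I would then work on the intersection of these two events, which holds with probability at least $1-\delta$.

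Third, on this event I would run the standard LCB/pessimism calculation, essentially verbatim from the proof of Theorem \ref{thm:upper-bound}. Pessimism yields $\widehat{r}(s,a)\le\bar{h}(r(s,a))$ for all $(s,a)$ (using $\bar{h}\ge0$) and $\widehat{r}(s,a)\ge\bar{h}(r(s,a))-2b_{n_{(s,a)}}$; since $\widehat{\pi}_{\rm LCB}(s)$ maximizes $\widehat{r}(s,\cdot)$, writing $a^\star=\pi^\star(s)$ and $\widehat{a}=\widehat{\pi}_{\rm LCB}(s)$ we obtain $\bar{h}(r(s,a^\star))-\bar{h}(r(s,\widehat{a}))\le2b_{n_{(s,a^\star)}}$, and then, since $a^\star$ is optimal so that $r(s,a^\star)\ge r(s,\widehat{a})$, the inequality $\sqrt{x}-\sqrt{y}\le\sqrt{x-y}$ converts this to $r(s,a^\star)-r(s,\widehat{a})\le\sqrt{2b_{n_{(s,a^\star)}}}$. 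Summing over states, $\SubOpt(\widehat{\pi}_{\rm LCB})=\sum_s\rho(s)[r(s,a^\star)-r(s,\widehat{a})]\le\sum_{(s,a)}d_\rho^{\pi^\star}(s,a)\sqrt{2b_{n_{(s,a)}}}$, the sum effectively over $\cX$ by Assumption \ref{assumption:Cstar}. Finally, substituting $n_{(s,a)}\ge n\,d(s,a)/2$ and using subadditivity of $\sqrt{\cdot}$ once more, $\sqrt{2b_{n_{(s,a)}}}\le\sqrt{2\sqrt{128\sigma^4\log(SA/\delta)/(n\,d(s,a))}}+\sqrt{32\sigma^2\log(SA/\delta)/(n\,d(s,a))}\le2\bigl(256\sigma^4\log(SA/\delta)/(n\,d(s,a))\bigr)^{1/4}+\sqrt{32\sigma^2\log(SA/\delta)/(n\,d(s,a))}$, and summing against $d_\rho^{\pi^\star}$ gives exactly the claimed bound.

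I expect the concentration step to be the main obstacle: because the per-sample noise $r^2\epsilon|\epsilon|$ is an unbounded Gaussian chaos rather than a bounded or sub-Gaussian perturbation, one must use a two-sided sub-exponential/Bernstein tail, and it is precisely this that introduces the additive $\log(SA/\delta)/m$ term in $b_m$ and, after composing with $\bar{h}^{-1}=\sqrt{\cdot}$, the $n^{-1/4}$ rate. Pinning down the displayed constants then reduces to bookkeeping of the sub-exponential moment bounds for $r^2\epsilon|\epsilon|$ (which match those of $r^2\epsilon^2$) and of the $\delta/2$--$\delta/2$ allocation between the concentration and count events; the rest is the routine pessimism argument together with elementary square-root inequalities.
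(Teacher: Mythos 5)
Your proposal is correct and follows essentially the same route as the paper's proof: the same pessimism-plus-greedy decomposition composed with the subadditivity of $\bar{h}^{-1}(\cdot)=\sqrt{\cdot}$, the same Bernstein/sub-exponential concentration for $r^2\epsilon|\epsilon|$ (the paper's Proposition \ref{prop:eps-sub} and Lemma \ref{lem:bernstein}) with a union bound giving the stated $b_m$, and the same Chernoff-type count event $n_{(s,a)}\ge n\,d(s,a)/2$ via Lemma \ref{lem:empirical-n}, with the identical $\delta/2$--$\delta/2$ split and constant bookkeeping.
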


\section{Comparison with Preference-based Methods}

In contrast to rating, the preference-based approach relies on models that characterize how a human annotator would rank a group of subjects by reward. In this case, the feedback is simply the most preferred subject to the human annotator within the group. Such feedback actually contains less information than rating. Preference data are also incompatible with standard bandit algorithms and require special adaptation to use \citep{wang2023rlhf}. However, the preference-based approach has received much attention recently because some have found it easier and more accurate for human to make preferences than rating \citep{novikova2018rankme,Tarlow21Reliable,yannakakis2015ratings}. In this section, we compare the human rating approach with the preference-based approach. 

\subsection{Human Preference under BTL}

Let us consider the most basic case of human preference called pairwise comparison, which involves the ranking between a pair of state-action pairs based on their rewards. This is predominantly modeled with the Bradley-Terry-Luce (BTL) model \citep{Bradley1952RankAO}, under which a human annotator gives a binary response $y = \{0,1\}$ following a Bernoulli distribution when asked to compare two state-action pairs $(s,a^0)$ and $(s,a^1)$ with $a^0 \neq a^1$:
\begin{equation}\label{eq:BTL}
    P(y|s,a,a') = \frac{\exp(r(s,a^y))}{\exp(r(s,a^0))+\exp(r(s,a^1))}.
\end{equation} 

Like our rating model in \eqref{eq:rating-h-model}, the BTL model admits a consistent statistical problem. The learner is given a dataset $\cD' = \{(s_i,a_i^0,a_i^1,y_i)\}_{i=1}^{n}$, which contains i.i.d. human preference samples from some sampling distribution. $y_i$ is the binary human preference feedback for the comparison between $(s_i,a_i^0)$ and $(s_i,a_i^1)$. We denote the sampling probability of the state-action-action triplet $(s,a^0,a^1)$ with $d(s,a^0,a^1)$.

\begin{algorithm}[t]
    \caption{Pessimistic MLE for contextual bandits} \label{alg:LCB-comp}
    \begin{algorithmic}[1]
        \STATE \textbf{Input:} Offline dataset $\cD'$, confidence level $\delta\in(0,1)$.

        \STATE Construct the reward function set
        \begin{equation*}
            \cF := \{v\in\RR^{SA} :\mathbf{1}^\top v = 0, \norm{v}_\infty \le R\};
        \end{equation*}
    
        \STATE Set 
        \begin{equation*}
            \widetilde{r} = \arg\max_{f\in\cF}\sum_{i=1}^{n}\log\left(\frac{\ind\{y_i=1\}\exp(f(s_i,a_{i}^{1}))}{\exp(f(s_i,a_{i}^{0})) + \exp(f(s_i,a_{i}^{1}))} + \frac{\ind\{y_i=0\}\exp(f(s_i,a_{i}^{0}))}{\exp(f(s_i,a_{i}^{0})) + \exp(f(s_i,a_{i}^{1}))}\right);
        \end{equation*}

        \STATE Construct empirical covariance matrix
        \begin{equation*}
            \widehat{\Sigma} = \frac{1}{n}\sum_{i=1}^n \left(\mathbf{1}_{(s_i,a_{i}^{0})} - \mathbf{1}_{(s_i,a_{i}^{1})}\right)\left(\mathbf{1}_{(s_i,a_{i}^{0})} - \mathbf{1}_{(s_i,a_{i}^{1})}\right)^\top;
        \end{equation*}

        \STATE Construct the pessimistic reward function set
        \begin{equation*}
            \cF_{\mathrm{CR}}(\widetilde{r}) = \left\{f\in \cF ~:~ \sqrt{(f - \widetilde{r})^\top \widehat{\Sigma}(f - \widetilde{r})} \le b_n'\right\};
        \end{equation*}
        
        \RETURN $\widehat{\pi}_{\rm PMLE} = \arg\max_{\pi}\min_{\widehat{r}\in\cF_{\mathrm{CR}}(\widetilde{r})}\EE_{s\sim\rho}[\widehat{r}(s,\pi(s))]$. \label{alg-line:comp-r-hat}
    \end{algorithmic}
\end{algorithm}

To find the optimal policy with human preference data, we can use pessimistic MLE \citep{zhu2023principled}, which first computes a reward function by MLE and then outputs the optimal policy corresponding to a pessimistic version of this MLE reward (Algorithm \ref{alg:LCB-comp}). The data coverage assumption is similar to Assumption \ref{assumption:Cstar}, which essentially requires the sampling distribution to covers the state-actions pairs that optimal policy can reach. In the tabular case, this assumption can be written as follows:

\begin{assumption}\label{assumption:Cstar-comp}
    \textit{There exists an optimal policy $\pi^\star$ such that the pairwise concentrability coefficient 
    \begin{equation}\label{eq:C-dagger}
        C^{\dagger} := \sqrt{\sup_{v\in[-1,1]^{SA}:\mathbf{1}^\top v = 0}\frac{\Big(\sum_{(s,a)} d_\rho^{\pi^\star}(s,a) v(s,a)\Big)^2}{\sum_{(s,a^0,a^1)} d(s,a^0, a^1) \big(v(s,a^0) - v(s,a^1)\big)^2}}
    \end{equation}
    is bounded.}  
\end{assumption}

\cite{zhu2023principled} proved the convergence of pessimistic MLE in the linear bandit setting. The following theorem is a special case of Theorem 3.2 from \cite{zhu2023principled} with some modification, which expresses everything in the tabular setting. This shows when we assume human preference follows the BTL model, pessimistic MLE can provably converge to the optimal policy under the mild data coverage assumption of Assumption \ref{assumption:Cstar-comp} and its suboptimality decays at a fast rate of $O(1/\sqrt{n})$. This result marks a clear distinction from the negative results for human rating.

\begin{theorem}\label{thm:comp}
    \textit{Denote $\gamma = \frac{1}{2+\exp(R\sqrt{SA})+\exp(-R\sqrt{SA})}$. Suppose Assumption \ref{assumption:Cstar-comp} holds. For any fixed constant $0 < \delta < 1$, if one runs Algorithm \ref{alg:LCB-comp} with 
    \begin{equation*}
        b_m' = c_b'\sqrt{\frac{SA + \log\frac{1}{\delta}}{\gamma^2 m}},
    \end{equation*}
    where $c_b'$ is an appropriately chosen universal constant, with probability $1-\delta$, the suboptimality of the output policy $\widehat{\pi}_{\rm PMLE}$ satisfies
    \begin{equation*}
        \SubOpt(\widehat{\pi}_{\rm PMLE}) \le c_0C^\dagger R\left(\sqrt{\frac{SA + \log\frac{1}{\delta}}{\gamma^2 n}} + \sqrt{\frac{S^2A^2\log\frac{n}{\delta}}{n}}\right),
    \end{equation*}
    where $c_0$ is a universal constant.}
\end{theorem}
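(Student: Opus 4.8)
Since Algorithm~\ref{alg:LCB-comp} is the tabular instantiation of the pessimistic-MLE procedure of \cite{zhu2023principled}, the plan is to specialize their Theorem 3.2 and translate all constants. The dictionary is to regard the tabular bandit as a linear one under the indicator feature map $\phi(s,a)=\mathbf{1}_{(s,a)}\in\RR^{SA}$: the ambient dimension is $d=SA$, the reward parameter is the vector $r\in\cF$ with $\norm{r}_2\le R\sqrt{SA}$, each observed comparison contributes the feature difference $\mathbf{1}_{(s_i,a_i^0)}-\mathbf{1}_{(s_i,a_i^1)}$, whose empirical second-moment matrix is exactly the $\widehat{\Sigma}$ of Line~4, and $\widetilde{r}$ in Line~3 is the MLE. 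The constant $\gamma$ is precisely the uniform lower bound on the curvature of the BTL log-likelihood (the second derivative of the logistic link) over the range of reward gaps realizable inside $\cF$; bounding that gap by a constant times $\norm{r}_2\le R\sqrt{SA}$ yields its stated form.

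The body of the argument has three steps. \emph{Step 1 (MLE concentration).} The empirical negative log-likelihood defining $\widetilde{r}$ is convex, with Hessian at least $\gamma\widehat{\Sigma}$, while its gradient at the ground truth $r$ is an average of $n$ i.i.d.\ mean-zero vectors of bounded norm; combining these (via a self-normalized concentration bound, equivalently an $\varepsilon$-net over $\cF$) gives, with probability at least $1-\delta$,
\begin{equation*}
    \sqrt{(\widetilde{r}-r)^\top\widehat{\Sigma}(\widetilde{r}-r)}\ \le\ c_b'\sqrt{\frac{SA+\log\frac1\delta}{\gamma^2 n}}\ =\ b_n',
\end{equation*}
so that $r\in\cF_{\mathrm{CR}}(\widetilde{r})$ on this event; if a ridge term $\lambda I$ is carried through $\widehat{\Sigma}$ as an intermediate device, $\lambda$ is taken small enough that the resulting $\sqrt{\lambda}\,\norm{r}_2$ contribution is absorbed. \emph{Step 2 (pessimism decomposition).} On the event $r\in\cF_{\mathrm{CR}}(\widetilde{r})$, every policy $\pi$ satisfies $\min_{\widehat{r}\in\cF_{\mathrm{CR}}(\widetilde{r})}\EE_{s\sim\rho}[\widehat{r}(s,\pi(s))]\le V^\pi$, while the maximin optimality of $\widehat{\pi}_{\rm PMLE}$ in Line~\ref{alg-line:comp-r-hat} gives $\min_{\widehat{r}}\EE_{s\sim\rho}[\widehat{r}(s,\widehat{\pi}_{\rm PMLE}(s))]\ge\min_{\widehat{r}}\EE_{s\sim\rho}[\widehat{r}(s,\pi^\star(s))]$; chaining these inequalities yields
\begin{equation*}
    \SubOpt(\widehat{\pi}_{\rm PMLE})\ \le\ \max_{\widehat{r}\in\cF_{\mathrm{CR}}(\widetilde{r})}\ \sum_{(s,a)}d_\rho^{\pi^\star}(s,a)\big(r(s,a)-\widehat{r}(s,a)\big).
\end{equation*}

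\emph{Step 3 (concentrability).} For any $\widehat{r}\in\cF_{\mathrm{CR}}(\widetilde{r})$, the difference $v:=r-\widehat{r}$ obeys $\mathbf{1}^\top v=0$, $\norm{v}_\infty\le 2R$ and $\sqrt{v^\top\widehat{\Sigma}v}\le 2b_n'$. Applying the definition of $C^\dagger$ to $v/(2R)\in[-1,1]^{SA}$, and using that the ratio defining $C^\dagger$ is scale-invariant, bounds the right-hand side above by $C^\dagger\sqrt{v^\top\Sigma v}$, where $\Sigma:=\EE[\widehat{\Sigma}]$ is the population comparison covariance entering Assumption~\ref{assumption:Cstar-comp}. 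What remains is to pass from this population $\Sigma$ to the empirical $\widehat{\Sigma}$ that actually controls $v$: a matrix-concentration bound for $\widehat{\Sigma}-\Sigma$ made uniform over $\cF$ by a covering argument (which is the source of the $\log(n/\delta)$ factor), together with $\norm{v}_2\le 2R\sqrt{SA}$, converts $\sqrt{v^\top\Sigma v}$ into $2b_n'$ plus a residual of order $R\sqrt{S^2A^2\log(n/\delta)/n}$. Assembling the three steps, tracking the reward scale $R$ through the diameter of $\cF$, and folding numerical factors into $c_0$ gives the stated bound.

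The crux is Step~3: converting the empirical covariance used inside the algorithm to the population covariance built into $C^\dagger$, uniformly over the confidence set and without imposing a lower bound on $n$. This is exactly where the extra $\sqrt{S^2A^2\log(n/\delta)/n}$ term arises (it is the $d^2$ term of \cite{zhu2023principled}'s Theorem 3.2 with $d=SA$), and it is the part that needs a matrix concentration inequality plus careful dimension bookkeeping. Steps~1 and~2, by contrast, are routine once the linear-to-tabular dictionary is in place.
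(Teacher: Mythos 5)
Your proposal is correct and follows essentially the same route as the paper's proof: specialize \cite{zhu2023principled} via the indicator feature map, use their Lemma 3.1 so the (centered) true reward lies in $\cF_{\mathrm{CR}}(\widetilde{r})$, apply the pessimism decomposition, exploit scale-invariance of $C^\dagger$ for the change of measure to the population covariance, and convert between $\widehat{\Sigma}$ and $\Sigma$ (the paper uses Lemma 7 of \citet{pacchiano2021dueling}, with $\norm{v}_2\le 2R\sqrt{SA}$) to produce the $\sqrt{S^2A^2\log(n/\delta)/n}$ term. The only bookkeeping detail to make explicit is that the true reward must first be shifted to $\bar{r}(s,a)=r(s,a)-\frac{1}{SA}\sum_{(s,a)}r(s,a)$ so that it satisfies $\mathbf{1}^\top\bar{r}=0$ and hence belongs to $\cF$, which leaves the suboptimality unchanged.
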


We can compare the suboptimality in this theorem with the results for the rating-based approach. A comparison with Theorem \ref{thm:Cstar} shows the uncertainty in human ratings may require the data to have stronger coverage in order to converge to the optimal policy. A comparison with Theorem \ref{thm:uniform} shows when the bias in human ratings distorts the reward function and makes it more extreme and drastic (less smooth in the Lipschitz sense), the $\bar{h}^{-1}(\cdot)$ can slow down the suboptimality's decay with respect to the sample size. In fact, we can observe that the preference-based approach enjoys faster suboptimality decay because preference feedback contains no bias and mild uncertainty noise according to the BTL model. While such modeling is justified by empirical evidences, it makes one wonder whether the advantage of preference-based methods mostly comes from the modeling aspect. To delve into this further, let us make another theoretical analysis for the case when preference data are affected by human bias.

\subsection{Human Preference under Biased BTL}

Let us introduce a new model for human preference called the biased BTL model. This model considers the case when human preferences are also subject to bias just like the rating model \eqref{eq:rating-h-model} and the feedback is generated with respect to the biased reward. In particular, the binary feedback $\widetilde{y} = \{0,1\}$ for $(s,a^0)$ and $(s,a^1)$ follows: 
\begin{equation}\label{eq:BTL-biased}
    P(\widetilde{y}|s,a,a') = \frac{\exp(\bar{h}(r(s,a^{\widetilde{y}})))}{\exp(\bar{h}(r(s,a^0)))+\exp(\bar{h}(r(s,a^1)))},
\end{equation} 
where $\bar{h}$ is the expected bias function from \eqref{eq:rating-h-model}.

We consider the performance of pessimistic MLE (Algorithm \ref{alg:LCB-comp}) again with human preference data generated under this model. While the data are generated under human bias this time, we still run pessimistic MLE on the new data as before. Different from the suboptimality results in the previous section, we focus on the sample complexity for learning the optimal policy. We take a gap-dependent approach in our analysis to consider the case when human bias closes the biased optimality gap $r(s,\pi^\star(s)) - r(s,a)$ and the actual optimality gap $\bar{h}(r(s,\pi^\star(s))) - \bar{h}(r(s,a))$ remains big, where $a$ is the second best action at $s$. This echoes with the type of undesirable bias we considered in the last comparison, which is true when human annotators have more extreme standards at heart. In a simple bandit instance, we can obtain the following result and notice the samples needed to find the optimal policy with the preference-based approach is no less than the samples needed for the rating-based approach.

\begin{theorem}\label{thm:both-bias}
    \textit{Consider any single-state bandit instance with $\cA=\{a_1,a_2\}$ and $0 \le \bar{h}(r(a_1)) < \bar{h}(r(a_2)) \le 1$. For any fixed constant $0 < \delta < 1$, let $n_{\text{rate}}$ be the total number of samples needed to learn the optimal action with probability at least $1-\delta$ in the human rating setting under observation model \eqref{eq:rating-h-model} with additive sub-gaussian uncertainty noise and uniform data coverage $n_{a_1} = n_{a_2}$, and let $n_{\text{pref}}$ be the number of samples needed to learn the optimal action with probability at least $1-\delta$ in the human preference setting with observation model \eqref{eq:BTL-biased}. It always holds that
    \begin{equation}
        \frac{n_{\text{rate}}}{n_{\text{pref}}} < 0.25\sigma^2. 
    \end{equation}}
\end{theorem}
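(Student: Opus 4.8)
The plan is to reduce both feedback settings, in this single-state two-action instance, to elementary one-dimensional estimation problems, obtain matching-up-to-constants sample complexities for each, and then divide. Write $\mu_j:=\bar h(r(a_j))$, so that $0\le\mu_1<\mu_2\le 1$, and let $\Delta:=\mu_2-\mu_1\in(0,1]$ denote the biased optimality gap; ``learning the optimal action'' means returning $a_2$. The structural observation that drives everything is that with only two actions every estimator --- in particular those in Algorithms~\ref{alg:LCB} and \ref{alg:LCB-comp} --- must ultimately commit to a single sign, so in both settings the sample complexity is controlled by a one-dimensional tail probability and only the effective noise scaling differs.

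\textbf{Rating side.} Under the additive sub-gaussian rating model each of the $n_{a_1}=n_{a_2}$ samples of arm $a_j$ equals $\mu_j+\epsilon$ with $\epsilon\sim\cN(0,\sigma^2)$, so the two empirical means are independent and distributed as $\cN(\mu_j,\sigma^2/n_{a_j})$. Because coverage is uniform, the pessimism penalty $b_{n_{a_j}}$ is identical on the two arms and cancels, and the truncation at $0$, when active, only decreases the error; hence Algorithm~\ref{alg:LCB} (like the Bayes-optimal test) returns $a_2$ exactly when $\widehat\mu_2>\widehat\mu_1$, whose complement has probability $\Phi\big(-\Delta\sqrt{n_{a_1}/(2\sigma^2)}\big)$. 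Setting this equal to $\delta$ and clearing the square root shows that $n_{\mathrm{rate}}$ is a fixed multiple of $\sigma^2(\Phi^{-1}(1-\delta))^2/\Delta^2$, up to the integrality ceiling.

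\textbf{Preference side.} The only admissible query compares $a_1$ against $a_2$, and under \eqref{eq:BTL-biased} the response is $\mathrm{Ber}(p)$ with $p=(1+e^{-\Delta})^{-1}=\tfrac12(1+\tanh(\Delta/2))$. I would first verify that pessimistic MLE degenerates to the majority rule here: $\widehat\Sigma$ is rank one, $\cF$ reduces to the segment $\{(t,-t):|t|\le R\}$, the confidence set $\cF_{\mathrm{CR}}(\widetilde{r})$ is an interval centered at the MLE gap $\mathrm{logit}(\widehat p)$, and therefore $\widehat\pi_{\mathrm{PMLE}}$ is determined by $\mathrm{sign}(\mathrm{logit}(\widehat p))$, i.e.\ it returns $a_2$ iff $\widehat p>\tfrac12$; the penalty $b_n'$ changes the outcome only in the degenerate over-penalized regime $b_n'>2R$, which can be absorbed into the constants. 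The error probability is then $\Pr[\mathrm{Bin}(N,p)\le N/2]$, so by a two-sided Chernoff/anti-concentration estimate --- or by the CLT made rigorous with Berry--Esseen --- $n_{\mathrm{pref}}$ is a fixed multiple of $(\Phi^{-1}(1-\delta))^2/(p-\tfrac12)^2=(\Phi^{-1}(1-\delta))^2\cdot 4/\tanh^2(\Delta/2)$.

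\textbf{Ratio and main obstacle.} Dividing, the $\delta$-dependent factor cancels and $n_{\mathrm{rate}}/n_{\mathrm{pref}}$ becomes $\sigma^2$ times a dimensionless quantity of the form $c\,\tanh^2(\Delta/2)/\Delta^2$; since $\tanh x<x$ for $x>0$, this is strictly below $c/4$ on $(0,1]$, and once the constants from the two sides are pinned down it is at most $1/4$, which yields $n_{\mathrm{rate}}/n_{\mathrm{pref}}<0.25\sigma^2$ with strictness inherited from $\tanh x<x$. The delicate part is precisely this constant accounting: I must argue that the pessimism penalty in Algorithm~\ref{alg:LCB-comp} neither inflates nor deflates $n_{\mathrm{pref}}$ relative to the intrinsic majority-test complexity --- so the comparison reflects the two feedback models rather than a quirk of the penalty --- and I must track the exact normalizations on both sides (per-arm versus total counts, the Gaussian two-sample variance $2\sigma^2/n_{a_1}$, the Bernoulli variance versus its worst case $\tfrac14$, and the Gaussian-tail versus binomial-Chernoff rates) carefully enough that the elementary hyperbolic inequality lands exactly on $0.25$ rather than only on $O(\sigma^2)$.
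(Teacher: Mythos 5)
Your reduction itself (LCB with equal counts reduces to comparing empirical means; pessimistic MLE on a two-action, one-state instance reduces to the Bernoulli MLE / majority rule with success probability $p=\bigl(1+e^{-\Delta}\bigr)^{-1}$, $\Delta:=\bar h(r(a_2))-\bar h(r(a_1))$; then take the ratio of the two thresholds and use $\tanh x<x$) is exactly the skeleton of the paper's proof, which applies Hoeffding to the two-sample rating test and the multiplicative Chernoff bound with $\epsilon=1-\tfrac{1}{2p}$ to the Bernoulli test, and then uses monotonicity of the resulting ratio with limit $0.25\sigma^2$ as $\Delta\to0$. The genuine gap in your proposal is the step you explicitly defer, the ``constant accounting,'' because that step is the entire content of the stated bound and, under the normalization you propose, it does not come out to $0.25$. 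Concretely: with your tight two-sided accounting (exact Gaussian tail on the rating side, CLT/Berry--Esseen with the true Bernoulli variance $p(1-p)$ on the preference side), the per-arm rating threshold is $\asymp 2\sigma^2 z_\delta^2/\Delta^2$ while the preference threshold is $\asymp 4p(1-p)z_\delta^2/\tanh^2(\Delta/2)$, so the ratio tends to $\sigma^2/2$ as $\Delta\to0$ (and is slightly larger for moderate $\Delta$; it becomes $\sigma^2$ if ``total'' rating samples $2n_{a_1}$ are counted). The same happens if you instead use the worst-case variance $1/4$ and apply Hoeffding symmetrically on both sides: the limit is again $\sigma^2/2$. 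So the claim ``once the constants are pinned down it is at most $1/4$'' is false for the accounting you sketch; no symmetric choice of tail bounds delivers $0.25$.

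The reason the paper lands on $0.25\sigma^2$ is that $n_{\text{pref}}$ and $n_{\text{rate}}$ are \emph{defined} through asymmetric inequalities: Hoeffding (essentially tight for Gaussian noise) on the rating side, but the multiplicative Chernoff bound $\exp(-\epsilon^2\mu/2)$ on the preference side, whose exponent $n(2p-1)^2/(8p)$ is smaller than the tight exponent $n(2p-1)^2/(8p(1-p))$ by the factor $1-p\approx\tfrac12$ near $p=\tfrac12$; this inflates $n_{\text{pref}}$ by roughly a factor of $2$ and halves the ratio from $\approx 0.5\sigma^2$ to the advertised $0.25\sigma^2$. To complete your argument you would have to adopt this same convention (i.e., define the two sample counts via these specific one-sided bounds, as the paper does), rather than the exact-$\Phi$/Berry--Esseen thresholds you propose; as written, your route proves a bound of the form $n_{\text{rate}}/n_{\text{pref}}<c\,\sigma^2$ with $c\approx 0.5$ or larger, not the stated $0.25\sigma^2$. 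Two smaller points: the theorem assumes sub-Gaussian (not exactly Gaussian) rating noise, so an exact $\Phi$ tail is not the right definition of the rating threshold in the first place, and your preference-side expression drops the $p(1-p)$ factor whose treatment is precisely where the decisive factor of $2$ hides.
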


We can see that when the variance proxy of the uncertainty noise $\sigma^2$ is no larger than $4$ in human rating (the expected reward is bounded in $[0,1]$), the samples needed in the rating-based approach is always fewer than the preference-based approach. This shows if one assumes a similar amount of human bias and uncertainty in both types of human feedback, the preference-based approach is no more sample-efficient. This actually contradicts with the empirical observations in the existing literature, which suggests preference-based methods have superior performance. Hence, our theory shows the bias-free modeling plays a great role in the lower sample complexity of preference-based methods, and our theoretical results can conversely confirm the standard BTL modeling of human preference feedback---it is reasonable to believe human preference data is indeed subject to less bias and uncertainty in practice.

\section{Conclusion}

In this work, we have studied policy learning using human feedback for reward engineering in bandit. Specifically, we have provided a theoretical comparison between human rating methods and preference-based methods, which shows human bias and uncertainty can have considerable adverse effect on policy learning. Our theory also suggests the preference-based approach has no provable advantage over the traditional rating-based approach when the two types of human feedback are modeled with equally strong human bias and uncertainty. This implies the reason for the empirical success of preference-based methods might be that human preference data are subject to milder human bias and uncertainty. Beyond this work, it is still open for future work to investigate the case when the human feedback is generated from a mixture model representing a group of annotators and provide a comparison between rating methods and preference-based methods in this setting.

\bibliography{main}
\bibliographystyle{apalike}

\newpage

\appendix

\section{Additional Notation}

Let $f:\RR\to\RR$ and $g:\RR\to\RR$ be two functions. We denote the their composition $f(g(\cdot))$ with $(f\circ g)(\cdot)$. We use $\cN(\mu,\sigma^2)$ to denote a Gaussian distribution with mean $\mu$ and variance $\sigma^2$. For a probability event $\cE$, we denote its complement event with $\cE^C$. For a vector $v$, $\norm{v}_2$ denotes the $\ell_{2}$-norm of the vector $v$. For a positive semidefinite matrix $A$, $\norm{v}_{A}$ denotes a semi-norm of the vector $v$ with respect to the matrix $A$ with $\norm{v}_{A} = \sqrt{v^\top A v}$. We denote the set of all positive integers with $\ZZ_{>0}$. 

\section{Supporting Lemmas}

\begin{lemma}[Hoeffding's inequality]\label{lem:hoeffding}
    Given $X_1, \cdots, X_n$ independent sub-gaussian random variables, each $X_i$ with variance proxy $\sigma_i^2$. It holds that
    \begin{align*}
        \PP\left[\sum_{i=1}^n (X_i - \EE[X_i]) \ge t\right] \le
        \exp\left(-\frac{t^2}{2\sum_{i=1}^n\sigma_i^2}\right).
    \end{align*}
\end{lemma}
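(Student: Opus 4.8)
The plan is to prove this one-sided tail bound via the classical Chernoff (exponential moment) method, exploiting independence to factorize the moment generating function and then invoking the defining sub-gaussian bound on each factor. Write $S := \sum_{i=1}^n \sigma_i^2$ and $Y_i := X_i - \EE[X_i]$ for the centered summands, so that each $Y_i$ is mean-zero and sub-gaussian with the same variance proxy $\sigma_i^2$. For any fixed $\lambda > 0$, I would first observe that $\{\sum_i Y_i \ge t\}$ is the same event as $\{\exp(\lambda \sum_i Y_i) \ge \exp(\lambda t)\}$, and apply Markov's inequality to the nonnegative random variable $\exp(\lambda \sum_i Y_i)$ to get
\begin{equation*}
    \PP\left[\sum_{i=1}^n Y_i \ge t\right] \le \exp(-\lambda t)\,\EE\left[\exp\left(\lambda \sum_{i=1}^n Y_i\right)\right].
\end{equation*}

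Next I would use independence of $X_1,\dots,X_n$ (hence of $Y_1,\dots,Y_n$) to split the expectation of the product of exponentials into a product of expectations, $\EE[\exp(\lambda\sum_i Y_i)] = \prod_{i=1}^n \EE[\exp(\lambda Y_i)]$. The key step is then to bound each factor by the sub-gaussian moment generating function estimate: by the (one-sided or two-sided) defining property of a sub-gaussian variable with variance proxy $\sigma_i^2$, we have $\EE[\exp(\lambda Y_i)] \le \exp(\lambda^2 \sigma_i^2 / 2)$ for every $\lambda$. Multiplying these bounds together yields
\begin{equation*}
    \PP\left[\sum_{i=1}^n Y_i \ge t\right] \le \exp\left(-\lambda t + \frac{\lambda^2}{2}\sum_{i=1}^n \sigma_i^2\right) = \exp\left(-\lambda t + \frac{\lambda^2 S}{2}\right).
\end{equation*}

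Finally I would optimize the free parameter $\lambda > 0$ to make the bound as tight as possible. The exponent $-\lambda t + \lambda^2 S / 2$ is a convex quadratic in $\lambda$ minimized at $\lambda^\star = t/S$, and substituting this choice gives exponent $-t^2/(2S)$, which is exactly the claimed bound $\exp(-t^2 / (2\sum_i \sigma_i^2))$. I would note that this minimizer is strictly positive when $t > 0$ (the regime of interest, since the inequality is trivial for $t \le 0$), so the restriction $\lambda > 0$ is respected. I do not expect any serious obstacle here: the only substantive input is the sub-gaussian MGF bound, which is essentially the definition of the variance proxy and can be cited or taken as the working definition of sub-gaussianity; everything else (Markov's inequality, factorization by independence, and the quadratic optimization) is routine. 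The one point meriting a sentence of care is ensuring the chosen convention for "variance proxy" matches the $\exp(\lambda^2 \sigma_i^2 / 2)$ MGF bound, so that the constant $2$ in the denominator of the final exponent comes out correctly.
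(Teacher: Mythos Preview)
Your proposal is correct: the Chernoff/exponential-moment argument you outline is the standard textbook proof of Hoeffding's inequality for sub-gaussians, and every step (Markov, factorization by independence, the MGF bound $\EE[e^{\lambda Y_i}] \le e^{\lambda^2\sigma_i^2/2}$, optimization at $\lambda^\star = t/S$) is sound. The paper itself does not supply a proof of this lemma; it is simply stated as a supporting result and used later in the proof of Theorem~\ref{thm:both-bias}, so there is no alternative argument to compare against.
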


\begin{lemma}\label{lem:bernstein}
    Given $X_1, \cdots, X_n$ independent sub-exponential random variables, each $X_i$ with parameters $(\tau_i^2,\alpha_i)$. Define
    \begin{equation*}
        \tau_*^2 := \sum_{i=1}^n \tau_i^2 \quad \text{and} \quad \alpha_* := \max_i \alpha_i.
    \end{equation*}
    It holds that
    \begin{align*}
        \PP\left[\sum_{i=1}^n (X_i - \EE[X_i]) \ge t\right] \le \begin{cases}
        \exp\left(-\frac{t^2}{2\tau_*^2}\right), & \text{if } 0 < t < \frac{\tau_*^2}{\alpha_*};\\
        \exp\left(-\frac{t}{2\alpha_*}\right), & \text{if } t \ge \frac{\tau_*^2}{\alpha_*}.
        \end{cases}
    \end{align*}
\end{lemma}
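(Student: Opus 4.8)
The plan is to establish this via the standard Chernoff (exponential-moment) method, which is the canonical route for concentration of sub-exponential sums. Recall that a mean-$\mu_i$ random variable $X_i$ being sub-exponential with parameters $(\tau_i^2, \alpha_i)$ means its centered moment-generating function obeys $\EE[e^{\lambda(X_i - \mu_i)}] \le e^{\lambda^2 \tau_i^2 / 2}$ for every $|\lambda| < 1/\alpha_i$. First I would fix an arbitrary $\lambda \in (0, 1/\alpha_*)$; because $\alpha_* = \max_i \alpha_i$, this single $\lambda$ lies in the admissible MGF range of \emph{every} $X_i$ simultaneously. Applying Markov's inequality to the exponentiated centered sum then gives
$$\PP\left[\sum_{i=1}^n (X_i - \EE[X_i]) \ge t\right] \le e^{-\lambda t}\,\EE\left[e^{\lambda \sum_{i=1}^n (X_i - \EE[X_i])}\right].$$

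Next I would invoke independence to factor the joint MGF as $\prod_{i=1}^n \EE[e^{\lambda(X_i - \EE[X_i])}]$ and bound each factor by $e^{\lambda^2 \tau_i^2 / 2}$ using the sub-exponential property. Collecting the exponents and writing $\tau_*^2 = \sum_i \tau_i^2$ yields the uniform-in-$\lambda$ tail bound
$$\PP\left[\sum_{i=1}^n (X_i - \EE[X_i]) \ge t\right] \le \exp\left(-\lambda t + \frac{\lambda^2 \tau_*^2}{2}\right), \qquad 0 < \lambda < \frac{1}{\alpha_*}.$$

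The final step is to optimize the exponent $g(\lambda) = -\lambda t + \lambda^2 \tau_*^2/2$ over the constrained interval $(0, 1/\alpha_*)$, and the case split in the statement is dictated entirely by whether the unconstrained minimizer $\lambda^\star = t/\tau_*^2$ is feasible. When $t < \tau_*^2/\alpha_*$ one has $\lambda^\star < 1/\alpha_*$, so substituting $\lambda^\star$ gives exactly $g(\lambda^\star) = -t^2/(2\tau_*^2)$ and the first branch. When $t \ge \tau_*^2/\alpha_*$ the minimizer leaves the admissible range, so I would instead take $\lambda = 1/\alpha_*$ (as a limit $\lambda \uparrow 1/\alpha_*$, invoking continuity of the bound); this yields exponent $-t/\alpha_* + \tau_*^2/(2\alpha_*^2)$, and using the regime hypothesis $\tau_*^2/\alpha_* \le t$ to dominate the quadratic term by $t/(2\alpha_*)$ produces $g(1/\alpha_*) \le -t/(2\alpha_*)$, giving the second branch.

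The only genuinely delicate point is this boundary handling: since the defining sub-exponential MGF inequality holds only for $|\lambda| < 1/\alpha_*$, one cannot naively plug the unconstrained optimizer into the large-$t$ regime, and the slightly loose constant $2$ appearing in the exponent $-t/(2\alpha_*)$ is precisely the price paid for evaluating at the boundary rather than at the true (infeasible) optimum. Everything else reduces to routine algebra, and no property beyond independence and the defining exponential-moment bound is required.
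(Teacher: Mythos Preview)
Your proof is correct and follows the standard Chernoff-method argument for sub-exponential sums. The paper itself does not prove this lemma---it is stated without proof as a known supporting result---so there is no approach to compare against; your argument is exactly the textbook derivation one would supply if asked to fill in the details.
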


The following lemma is modified from Lemma B.1 in \citet{yin2021towards}. Recall $n_{(s,a)}$ denotes the number of samples in the offline dataset that visits $(s,a)$. This lemma is about the concentration of $n_{(s,a)}$, the number of samples in the offline dataset that visits $(s,a)$.

\begin{lemma}\label{lem:empirical-n}
    Given a dataset $\cD$ with $n$ i.i.d. samples $\cD= \{(s_i,a_i)\}_{i=1}^n$ from a sampling distribution $d$, let $d(s,a)$ be the probability $(s,a)$ is sampled from the outcome space $\cS\times\cA$ and $n_{(s,a)}$ be the number of samples in $\cD$ for $(s,a)$. For any $\delta \in (0,1)$, the event 
    \begin{equation}
        d(s,a)\frac{n}{2} \le n_{(s,a)} \le d(s,a)\frac{3n}{2}
    \end{equation}
    holds simultaneously for all $(s,a)\in\cS\times\cA$ with probability $1-\delta$, as soon as $n > 8SA\log\frac{2SA}{\delta}/\bar{d}$, where $\bar{d} := \min_{(s,a)\in\cX}d(s,a)$ and $\cX := \{(s,a)\in\cS\times\cA ~:~ d(s,a)> 0\}$
\end{lemma}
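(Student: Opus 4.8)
The plan is a routine per-coordinate concentration argument followed by a union bound. First I would discard the state-action pairs outside $\cX$: if $d(s,a) = 0$ then $(s,a)$ is never drawn, so $n_{(s,a)} = 0$ almost surely and the asserted bound $0 \le n_{(s,a)} \le 0$ holds trivially; it therefore suffices to establish the two-sided inequality simultaneously over all $(s,a)\in\cX$, and in particular every quantity we compare against is bounded below by $\bar d > 0$.

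Next, fix $(s,a)\in\cX$ and write $n_{(s,a)} = \sum_{i=1}^n X_i$ with $X_i := \ind\{(s_i,a_i)=(s,a)\}$ i.i.d. Bernoulli$(d(s,a))$, so $\EE[n_{(s,a)}] = n\,d(s,a)$. Applying a multiplicative Chernoff bound to the binomial $n_{(s,a)}$ at deviation level $t = \tfrac12 n\,d(s,a)$ --- equivalently, Lemma~\ref{lem:bernstein} applied to the centered indicators $X_i - d(s,a)$, which are bounded in $[-1,1]$ with variance at most $d(s,a)$ and hence sub-exponential with parameters $(\tau_i^2,\alpha_i)$ for $\tau_i^2 \le d(s,a)$ and an absolute constant $\alpha_i$ --- gives
\begin{equation*}
    \PP\!\left[\,\big|n_{(s,a)} - n\,d(s,a)\big| \ge \tfrac12 n\,d(s,a)\,\right] \;\le\; 2\exp\!\left(-c\,n\,d(s,a)\right)
\end{equation*}
for an absolute constant $c > 0$ (one may take $c = 1/10$ by combining the two one-sided Chernoff tails). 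On the complement of this event one has exactly $\tfrac12 n\,d(s,a) \le n_{(s,a)} \le \tfrac32 n\,d(s,a)$, which is the claimed inequality for this pair.

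Finally I would union-bound over the at most $SA$ pairs of $\cX$ and use $d(s,a)\ge \bar d$ for each: the probability that the stated inequality fails for some pair is at most $2SA\exp(-c\,n\,\bar d)$, which is $\le \delta$ as soon as $n \ge c^{-1}\log(2SA/\delta)/\bar d$. Since the threshold $8SA\log\frac{2SA}{\delta}/\bar d$ quoted in the statement dominates $c^{-1}\log(2SA/\delta)/\bar d$, the conclusion follows under that hypothesis. There is no genuine obstacle here; the only points requiring a little care are (i) checking that the chosen deviation level $t = \tfrac12 n\,d(s,a)$ sits in the sub-Gaussian branch of Lemma~\ref{lem:bernstein} for the relevant (small) values of $d(s,a)$, which is why the Chernoff form is the cleaner route, and (ii) noting that the extra $SA$ factor in the quoted sample-size requirement is slack relative to what the union bound actually needs --- it is inherited from the form of Lemma B.1 in \citet{yin2021towards}, from which this lemma is adapted.
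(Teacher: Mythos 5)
Your proposal is correct and follows essentially the same route as the paper: a multiplicative Chernoff bound on each binomial count $n_{(s,a)}$ at deviation $\tfrac12 n\,d(s,a)$, followed by a union bound over the (at most $SA$) covered pairs, with the quoted threshold $8SA\log\tfrac{2SA}{\delta}/\bar d$ being loose relative to what the union bound requires. The only quibble is the claimed constant $c=1/10$ when merging the two one-sided tails (the upper tail forces $c\le 1/12$), but this is immaterial since the stated sample-size condition still dominates the resulting requirement.
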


\begin{proof}
    First, consider the event $d(s,a)\frac{n}{2} > n_{(s,a)}$ for some fixed $(s,a)\in\cS\times\cA$.

    Note that we can view $n_{(s,a)}$ as a sum of independent Bernoulli variables, i.e., $n_{(s,a)} = \sum_{i=1}^{n}\ind\{(s_i,a_i) = (s,a)\}$, so $n_{(s,a)}$ follows a binomial distribution with parameters $(d(s,a),n)$. Recall the multiplicative Chernoff bound on a binomial random variable:
    \begin{lemma}[Multiplicative Chernoff bound \citep{Chernoff1952AMO}]\label{lem:chernoff}
        Let $X$ be a Binomial random variable with parameters $(p,n)$. Denote its mean with $\mu$. For any $\epsilon\in(0,1]$, we have 
        \begin{equation}\label{eq:chernoff1}
            \PP\left[X < (1-\epsilon)\mu\right] < e^{-\frac{\epsilon^2\mu}{2}}
        \end{equation}
        and 
        \begin{equation}\label{eq:chernoff2}
            \PP\left[X \ge (1+\epsilon)\mu\right] < e^{-\frac{\epsilon^2\mu}{3}}.
        \end{equation}
    \end{lemma}
    Take $\epsilon = \frac{1}{2}$. \eqref{eq:chernoff1} suggests $d(s,a)\frac{n}{2} > n_{(s,a)}$ holds with probability at most $e^{-nd(s,a)/8}$. Similarly, taking $\epsilon = \frac{1}{2}$ in \eqref{eq:chernoff2} suggests $n_{(s,a)} > d(s,a)\frac{3n}{2}$ holds with probability at most $e^{-nd(s,a)/12}$. Taking the union bound on these two events, we have $d(s,a)\frac{n}{2} > n_{(s,a)}$ or $n_{(s,a)} > d(s,a)\frac{3n}{2}$ with probability at most $2e^{-n\bar{d}/8}$ for this fixed $(s,a)$.

    Then, we take a union bound over all $(s,a)\in \cS\times\cA$, which can give us the advertised result.
\end{proof}

Moreover, since the uncertainty in our rating model in \ref{eq:rating-h-model} can end up with a complex concentration, we need a lemma that can give the upper and lower tail bounds of Gaussian chaos.
\begin{lemma}[Corollary 1 in \citep{latala2006estimates}]\label{lem:gaussian-chaos}
    Let $X$ be a zero-mean Gaussian random variable, and let $f:\RR\to\RR$ be a polynomial for degree $q \in \ZZ_{>0}$. Then
    \begin{equation}\label{eq:gaussian-chaos-lower}
        \PP\left[|f(X) - \EE[f(X)]| \ge t\right] \ge c_{q}\exp\left(-\left(\frac{t^2}{c_{1}\Var(f(X))}\right)^{1/q}\right)
    \end{equation}
    and
    \begin{equation}\label{eq:gaussian-chaos-upper}
        \PP\left[|f(X) - \EE[f(X)]| \ge t\right] \le C_{q}\exp\left(-\left(\frac{t^2}{C_{1}\Var(f(X))}\right)^{1/q}\right),
    \end{equation}
    where $c_{1}, C_{1} > 0$ are absolute constants and $c_{q}, C_{q} > 0$ are absolute constants depending on $q$.
\end{lemma}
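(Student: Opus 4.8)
The plan is to prove both tails by the moment method, exploiting that $Y := f(X) - \EE[f(X)]$ is a mean-zero element of the Wiener chaos of order at most $q$ generated by the Gaussian $X$. Writing $v := \sqrt{\Var(f(X))} = (\EE[Y^2])^{1/2}$ and $\norm{Y}_p := (\EE[|Y|^p])^{1/p}$, the whole argument rests on the two-sided comparison $\norm{Y}_p \asymp_q p^{q/2} v$ between the $L^p$ and $L^2$ norms of such a polynomial, from which the stretched-exponential tails $\exp(-(t^2/\Var)^{1/q})$ follow by the usual moment/tail duality. I would handle the two inequalities separately, because the upper tail \eqref{eq:gaussian-chaos-upper} needs only the easy direction of this comparison, whereas the lower tail \eqref{eq:gaussian-chaos-lower} needs the hard direction.

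For the upper bound I would invoke Gaussian hypercontractivity (Nelson's inequality), which gives, for every integer $p \ge 2$,
\[
\norm{Y}_p \le (p-1)^{q/2}\,\norm{Y}_2 = (p-1)^{q/2}\, v .
\]
Markov's inequality applied to $|Y|^p$ then yields $\PP[|Y|\ge t] \le (p^{q/2}v/t)^p$, and optimizing the free index $p$ by taking it proportional to $(t/v)^{2/q}$ balances the two factors and produces $\exp(-c\,(t^2/v^2)^{1/q})$ with an absolute constant (using $q\ge 1$ to make the $q$-dependence only improve the exponent). The prefactor $C_q$ and a suitable $C_1$ absorb the small-$t$ regime where the optimal $p$ would drop below $2$. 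This part is routine once hypercontractivity is in hand.

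For the lower bound I would apply the Paley--Zygmund inequality to $Z = |Y|^s$: for any $s>0$,
\[
\PP\!\left[|Y| \ge \tfrac12\norm{Y}_s\right] \ge \PP\!\left[Z \ge \tfrac12\EE Z\right] \ge \tfrac14\,\frac{(\EE Z)^2}{\EE[Z^2]} = \tfrac14\,\frac{\norm{Y}_s^{2s}}{\norm{Y}_{2s}^{2s}} .
\]
Bounding the denominator from above by hypercontractivity, $\norm{Y}_{2s}\le(2s-1)^{q/2}v$, and the numerator from below by the \emph{matching} lower moment estimate $\norm{Y}_s \ge c_q\, s^{q/2} v$, the right-hand side is at least $c_q'\exp(-c_q'' s)$. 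Setting $t \asymp s^{q/2}v$, i.e. $s \asymp (t^2/v^2)^{1/q}$, then converts this into the claimed $c_q\exp(-(t^2/(c_1 v^2))^{1/q})$.

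The hard part is the lower moment estimate $\norm{Y}_s \gtrsim_q s^{q/2}v$, the nontrivial direction of the two-sided moment comparison and the main content of \citet{latala2006estimates}. The subtlety is genuine: $v^2 = \Var(f(X))$ can be inflated by the low-degree terms of $f$ while the heavy tail is driven by the top-degree coefficient, so the comparison cannot be read off term-by-term and really requires the chaos-moment machinery (this is also why the constants $c_1,C_1$ must be allowed to differ). Rather than reproduce that machinery, I would reduce the statement to exactly this two-sided moment comparison and then invoke Corollary 1 of \citet{latala2006estimates} directly.
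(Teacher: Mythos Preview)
The paper does not prove this lemma at all: it is listed among the ``Supporting Lemmas'' and attributed verbatim to Corollary~1 of \citet{latala2006estimates}, with no argument given. So there is no ``paper's own proof'' to compare against; the paper simply cites the result as a black box.

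Your sketch is a correct and standard outline of how such two-sided tail bounds for Gaussian polynomials are obtained: hypercontractivity plus Markov with an optimized moment index for the upper tail, and Paley--Zygmund combined with the matching lower moment estimate for the lower tail. You correctly identify that the nontrivial ingredient is the lower moment bound $\norm{Y}_s \gtrsim_q s^{q/2}\norm{Y}_2$, and you end up invoking \citet{latala2006estimates} for precisely that step. In that sense your proposal and the paper land in the same place: both ultimately defer to Lata{\l}a's moment estimates. The difference is only that you unpack the moment-to-tail reduction explicitly, whereas the paper quotes the finished tail inequality directly. Either is fine for the purposes of this paper, since the lemma is used as an auxiliary tool and not a contribution.
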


\begin{proposition}\label{prop:eps-sub}
    Let $\epsilon\sim\cN(0,\sigma^2)$. $\epsilon|\epsilon|$ is a sub-exponential random variable with parameters $(4\sigma^4,4\sigma^2)$.
\end{proposition}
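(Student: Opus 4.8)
Recall that a random variable $X$ with mean $\mu$ is $(\tau^2,\alpha)$-sub-exponential precisely when $\EE[e^{\lambda(X-\mu)}]\le e^{\lambda^2\tau^2/2}$ holds for all $|\lambda|<1/\alpha$. With $X=\epsilon|\epsilon|$ and target parameters $(\tau^2,\alpha)=(4\sigma^4,4\sigma^2)$, the plan has three parts: identify the mean, compute the moment generating function (MGF) in closed form, and verify $\EE[e^{\lambda\epsilon|\epsilon|}]\le e^{2\sigma^4\lambda^2}$ on the window $|\lambda|<1/(4\sigma^2)$. The mean is immediate: $t\mapsto t|t|$ is odd and $\epsilon\sim\cN(0,\sigma^2)$ is symmetric, so $\EE[\epsilon|\epsilon|]=0$ (and this expectation exists since $\EE|\epsilon|\epsilon||=\EE[\epsilon^2]=\sigma^2$).

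For the MGF, I would split on the sign of $\epsilon$. Since $\epsilon|\epsilon|\overset{d}{=}-\epsilon|\epsilon|$, the MGF depends only on $|\lambda|$, so take $\lambda\ge0$; then $e^{\lambda\epsilon|\epsilon|}$ equals $e^{\lambda\epsilon^2}$ on $\{\epsilon\ge0\}$ and $e^{-\lambda\epsilon^2}$ on $\{\epsilon<0\}$, and by symmetry of $\epsilon$ each half-line expectation is half of the corresponding full one, giving $\EE[e^{\lambda\epsilon|\epsilon|}]=\tfrac12\EE[e^{\lambda\epsilon^2}]+\tfrac12\EE[e^{-\lambda\epsilon^2}]$. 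The elementary Gaussian integral $\EE[e^{c\epsilon^2}]=(1-2c\sigma^2)^{-1/2}$ (valid whenever $2c\sigma^2<1$) then yields, for $|\lambda|<1/(2\sigma^2)$,
\[
\EE[e^{\lambda\epsilon|\epsilon|}]=\tfrac12\left[(1-2\sigma^2|\lambda|)^{-1/2}+(1+2\sigma^2|\lambda|)^{-1/2}\right].
\]

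Next, set $u:=2\sigma^2|\lambda|$, which ranges over $[0,1/2)$ exactly as $|\lambda|$ ranges over $[0,1/(4\sigma^2))$; since $2\sigma^4\lambda^2=u^2/2$, the desired sub-exponential bound reduces to the scalar inequality $g(u)\le e^{u^2/2}$ on $[0,1/2]$, where $g(u):=\tfrac12[(1-u)^{-1/2}+(1+u)^{-1/2}]$. I would prove this from the power series: using $\sum_{k\ge0}\binom{2k}{k}x^k=(1-4x)^{-1/2}$ with $x=\pm u/4$ and adding, the odd powers cancel, so $g(u)=\sum_{j\ge0}c_ju^{2j}$ with $c_j=\binom{4j}{2j}16^{-j}\ge0$, hence $c_0=1$, $c_1=\tfrac38$, $c_2=\tfrac{35}{128}$. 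The ratio $c_{j+1}/c_j=\tfrac{(4j+1)(4j+3)}{(4j+2)(4j+4)}<1$ shows $(c_j)$ is decreasing, so $\sum_{j\ge2}c_ju^{2j}\le c_2\,u^4/(1-u^2)\le\tfrac{4}{3}c_2u^4$ when $u\le\tfrac12$; substituting the values gives $g(u)\le1+\tfrac38u^2+\tfrac{35}{96}u^4\le1+\tfrac12u^2$ on $[0,1/2]$ (since $\tfrac{35}{96}u^2\le\tfrac{35}{384}<\tfrac18$ there), and $1+\tfrac12u^2\le e^{u^2/2}$ completes it. Tracing back, $\EE[e^{\lambda(\epsilon|\epsilon|-0)}]\le e^{\lambda^2(4\sigma^4)/2}$ for every $|\lambda|<1/(4\sigma^2)$, which is the claim.

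The only nonroutine step is the scalar inequality $g(u)\le e^{u^2/2}$. A term-by-term comparison of the two series fails already at order $u^4$ (the $u^4$ coefficient of $g$ is $\tfrac{35}{128}>\tfrac18$, the $u^4$ coefficient of $e^{u^2/2}$), so the restriction $u\le\tfrac12$ — equivalently the sub-exponential scale $\alpha=4\sigma^2$ — is genuinely needed; indeed $g(u)\to\infty$ as $u\to1^-$ while $e^{u^2/2}$ stays bounded. The trick is therefore to bound the tail of $g$'s series by a geometric sum and then compare against only the first two terms of $e^{u^2/2}$. Everything else (the sign split, the Gaussian integral, the binomial algebra, and $e^x\ge1+x$) is standard.
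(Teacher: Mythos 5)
Your proof is correct, and it takes a different route from the paper. The paper does not work with the signed variable $\epsilon|\epsilon|$ at all: it sets $Y=(\sigma Z)^2=\epsilon^2$, writes down the chi-squared moment generating function, and cites the standard bound from Wainwright's textbook, $\EE[e^{\lambda(\epsilon^2-\sigma^2)}]\le e^{2\lambda^2\sigma^4}$ for $|\lambda|\le 1/(4\sigma^2)$, leaving the passage from the unsigned square $\epsilon^2$ to the signed variable $\epsilon|\epsilon|$ implicit. You instead compute the exact MGF of $\epsilon|\epsilon|$ itself, $\tfrac12\left[(1-2\sigma^2|\lambda|)^{-1/2}+(1+2\sigma^2|\lambda|)^{-1/2}\right]$, via the sign split, and then verify the sub-exponential bound $g(u)\le e^{u^2/2}$ on $u=2\sigma^2|\lambda|\in[0,1/2]$ by an elementary power-series/geometric-tail argument; I checked the coefficients ($c_0=1$, $c_1=\tfrac38$, $c_2=\tfrac{35}{128}$, decreasing ratio) and the final numerical comparison ($\tfrac{35}{96}u^2\le\tfrac{35}{384}<\tfrac18$), and they are right. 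What your approach buys is self-containedness and completeness: it needs no external chi-squared lemma, it handles the centering of the signed variable honestly (note that naively combining the two shifted chi-squared bounds would pick up a spurious $\cosh(\lambda\sigma^2)$ factor, so the direct MGF computation is genuinely the cleaner path), and it makes explicit why the window $|\lambda|<1/(4\sigma^2)$ is needed. What the paper's approach buys is brevity, at the cost of a small gap (the signed variable is never addressed, and its displayed MGF identity has a stray factor of $2$ in the exponent). Your argument could be spliced in as a complete replacement.
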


\begin{proof}
    Let $X = Z^2$, where $Z\sim\cN(0,1)$. For $0 < \lambda < \frac{1}{2}$, 
    \begin{align*}
        \EE[e^{\lambda (X-1)/2}] = (1-2\lambda)^{-1/2}e^{-\lambda} = e^{-\frac{1}{2}\log(1-2\lambda)-\lambda} \ge e^{\lambda^2}
    \end{align*}
    and by \citet{wainwright_2019}, for $0 < \lambda \le \frac{1}{4}$, 
    \begin{align*}
        \EE[e^{\lambda (X-1)/2}] \le e^{2\lambda^2}.
    \end{align*}

    Thus, let $Y = (\sigma Z)^2$. For $0 < \lambda < \frac{1}{2\sigma^2}$, 
    \begin{align*}
        \EE[e^{\lambda (Y-\sigma^2)/2}] = e^{-\frac{1}{2}\log(1-2\lambda\sigma^2)-\lambda\sigma^2} \ge e^{\lambda^2\sigma^4}
    \end{align*}
    and by \citet{wainwright_2019}, for $0 < \lambda \le \frac{1}{4\sigma^2}$, 
    \begin{align*}
        \EE[e^{\lambda (Y-\sigma^2)/2}] \le e^{2\lambda^2\sigma^4}.
    \end{align*}
\end{proof}

\section{Proofs for human rating under Assumption \ref{assumption:Cstar}}

\subsection{Proof of Theorem \ref{thm:Cstar}}

\begin{proof}
To prove this theorem, it suffices to construct a bandit instance with $\cS = \{s\}$ and $\cA = \{a_1, \cdots, a_{A}\}$, where $A = n^2$. Let the true reward function $r$ be defined such that two conditions are satisfied: (i) let $r(s,a_1) = r(s,a_i) + c(R,\sigma,q)\bar{h}^{-1}\left(V_{R,\sigma}\right)$ for any $a_i$ such that $i=2,\cdots,A$, where $c(R,\sigma,q)$ is a constant that might depend on $R,q,\sigma$ and makes sure that $c(R,\sigma,q)\bar{h}^{-1}\left(V_{R,\sigma}\right) = cR$ for some absolute constant $c > 0$ and $r \in [0,R]^{SA}$; (ii) let $\Var\Big(h(r(s,a_i),\epsilon) - \bar{h}(r(s,a_i))\Big) = c(q)V_{R,\sigma}^2$ for any $a_i$ such that $i=2,\cdots,A$, where $c(q)$ is a constant depending on $q$. We can select $c(R,\sigma,q)$ and $c(q)$ so that such a reward function $r$ exists. It can be observed that $a_1$ is the optimal action at $s$. %Note that such a reward function $r$ exists because $\bar{h}(\cdot)$ is monotone and both of its domain and codomain are $[0,R]$, by Condition 1 in Section \ref{sec:formulation}. 

For the sampling distribution $d$, let $d(s,a_1) = \frac{1}{2}$ and $d(s,a) = \frac{1}{2(A-1)}$ for any $a_i$ such that $i=2,\cdots,A$.

Consider the event that each of the suboptimal actions is observed less than once in the offline dataset, i.e., $n_{(s,a)} = 0$ or $n_{(s,a)} = 1$ for all $a_i$ such that $i=2,\cdots,A$. For convenience, let us denote this event with $\cE_1$. Conditioned on the total number of observations of suboptimal actions in the dataset, we can obtain the following via a simple counting argument 
\begin{align}\label{eq:proof:Cstar-E1-bound}
    \PP\left[\cE_1 ~\bigg|~ \sum_{i=2}^{A}n_{(s,a_i)}\right] = \frac{(A-1)(A-2)\cdot(A-\sum_{i=2}^{A}n_{(s,a_i)})}{(A-1)^{\sum_{i=2}^{A}n_{(s,a_i)}}} \ge \left(\frac{A-n}{A-1}\right)^{n}.
\end{align}
where the last inequality is because $(\frac{A-x}{A-1})^x$ is monotonically decreasing for $x\ge 1$ and the fact $n \ge \sum_{i=2}^{A}n_{(s,a_i)}$.

In addition, let us plug in $A=n^2$, and we can observe 
\begin{align*}
    \left(\frac{A-n}{A-1}\right)^{n} \ge \lim_{n\to\infty}\left(\frac{n^2-n}{n^2-1}\right)^{n} = \frac{1}{e}.
\end{align*}
Combining the two inequalities above, we can arrive at $\PP[\cE_1 ~|~ \sum_{i=2}^{A}n_{(s,a_i)}] \ge \frac{1}{e}$. Since this bound holds for any value of $\sum_{i=2}^{A}n_{(s,a_i)}$, we have $\PP[\cE_1] \ge \frac{1}{e}$. 

On the other hand, let us consider the event $n_{(s,a_1)} < 0.9n$ and denote it with $\cE_2$. Note that this event is equivalent to the event $\sum_{i=2}^{A} n_{(s,a_i)} \ge 0.1n$. Since $n_{(s,a_1)}$ follows the binomial distribution with parameters $(n,\frac{1}{2})$, by \eqref{eq:chernoff1} in Lemma \ref{lem:chernoff}, we have
\begin{equation}\label{eq:proof:Cstar-E2-bound}
    \PP\left[n_{(s,a_1)} < 0.9n\right] = 1 - \PP\left[n_{(s,a_1)} > 0.9n\right] \ge 1 - e^{-8n/75} \ge 0.99517
\end{equation}
as long as $n \ge 50$.

Now, we are ready to give a lower bound for the probability of the event that a suboptimal arm is finally selected by Algorithm \ref{alg:LCB}, i.e., there exists $a\neq a^\star$, $\widehat{r}(s,a) > \widehat{r}(s,a^\star)$. 

To this end, let us define a new event $\cE := \cE_1 \cap \cE_2$. The aforementioned probability can be decomposed as follows:
\begin{align*}
    &\quad \ \PP\Big[\exists a\neq a^\star, \widehat{r}(s,a) > \widehat{r}(s,a^\star)\Big]\\
    &= 1 - \PP\Big[\forall a\neq a^\star, \widehat{r}(s,a) \le \widehat{r}(s,a^\star)\Big]\\
    &= 1 - \left(\PP\Big[\cE \wedge \forall a\neq a^\star, \widehat{r}(s,a) \le \widehat{r}(s,a^\star)\Big] + \PP\Big[\cE^C \wedge \forall a\neq a^\star, \widehat{r}(s,a) \le \widehat{r}(s,a^\star)\Big]\right)\\
    &\ge 1 - \left(\PP\Big[\cE \wedge \forall a\neq a^\star, \widehat{r}(s,a) \le \widehat{r}(s,a^\star)\Big] + \PP\Big[\cE^C\Big]\right).
\end{align*}
Let us first focus on finding an upper bound for the probability of the intersection of $\cE$ and the event that the optimal arm is selected by the algorithm correctly, i.e., for all $a\neq a^\star$, $\widehat{r}(s,a) \le \widehat{r}(s,a^\star)$. 

Consider a fixed suboptimal action $a \neq a^\star$ with only one observation in the dataset, i.e., $n_{(s,a)} = 1$. We have 
\begin{align*}
    &\quad\ \PP\left[\widehat{r}(s,a) \le \widehat{r}(s,a^\star)\right]\\
    &= \PP\bigg[\Big(h(r(s,a),\epsilon) - \bar{h}(r(s,a))\Big) - \Big(\widetilde{r}(s,a^\star) - \bar{h}(r(s,a^\star))\Big)\\ & \qquad\qquad\qquad\qquad\qquad\qquad\qquad\qquad \le \bar{h}(r(s,a^\star)) - \bar{h}(r(s,a)) + b_1 - b_{n_{(s,a^\star)}}\bigg]\\
    &\le \PP\bigg[\Big(h(r(s,a),\epsilon) - \bar{h}(r(s,a))\Big) - \Big(\widetilde{r}(s,a^\star) - \bar{h}(r(s,a^\star))\Big) \le \bar{h}(r(s,a^\star)) - \bar{h}(r(s,a)) + b_1\bigg]\\
    &\overset{(\mathrm{i})}{=} \PP\bigg[\Big(h(r(s,a),\epsilon) - \bar{h}(r(s,a))\Big) + \Big(\widetilde{r}(s,a^\star) - \bar{h}(r(s,a^\star))\Big) \le \bar{h}(r(s,a^\star)) - \bar{h}(r(s,a)) + b_1\bigg]\\
    &= 1 - \PP\bigg[\Big(h(r(s,a),\epsilon) - \bar{h}(r(s,a))\Big) + \Big(\widetilde{r}(s,a^\star) - \bar{h}(r(s,a^\star))\Big) \\ & \qquad\qquad\qquad\qquad\qquad\qquad\qquad\qquad \ge \bar{h}(r(s,a^\star)) - \bar{h}(r(s,a)) + b_1\bigg]\\
    &\overset{(\mathrm{ii})}{\le} 1-c_{q}\exp\left(-\left(\frac{\left(\bar{h}(r(s,a^\star)) -\bar{h}(r(s,a)) + b_1\right)^2}{c_1 \bigg(\Var\Big(h(r(s,a),\epsilon) - \bar{h}(r(s,a))\Big) + \Var\Big(\widetilde{r}(s,a^\star) - \bar{h}(r(s,a^\star))\Big)\bigg)}\right)^{1/q}\right)\\
    &\le 1-c_{q}\exp\left(-\left(\frac{\left(\bar{h}(r(s,a^\star)) -\bar{h}(r(s,a)) + b_1\right)^2}{c_1 \Var\Big(h(r(s,a),\epsilon) - \bar{h}(r(s,a))\Big)}\right)^{1/q}\right).
\end{align*}
In the series of equalities and inequalities above, (i) is Condition 2 of $\cH$ described in Section \ref{sec:formulation}, which the uncertainty has symmetric concentration.

(ii) can be obtained by applying \eqref{eq:gaussian-chaos-lower} in Lemma \ref{lem:gaussian-chaos} to $\widetilde{r}(s,a^\star) - \bar{h}(r(s,a^\star))$, which is a sum of $n_{(s,a^\star)}$ human rating samples at $(s,a^\star)$ and one sample at $(s,a)$.

Recall that the pessimism penalty in Algorithm \ref{alg:LCB} is
\begin{equation*}
    b_n(s,a) = c_b \sqrt{\frac{\widetilde{V}_{R,\sigma}^2\log\frac{SA}{\delta}}{n}}.
\end{equation*}
Bringing this and everything else into the inequality above, we have
\begin{align}\label{eq:proof:Cstar-exp-bound}
    &\quad\ \PP\left[\widehat{r}(s,a) \le \widehat{r}(s,a^\star)\right] \notag\\
    &\le 1-c_{q}\exp\left(-\left(\frac{\left(\bar{h}(r(s,a^\star)) -\bar{h}(r(s,a)) + b_1\right)^2}{c_1 \Var\Big(h(r(s,a),\epsilon) - \bar{h}(r(s,a))\Big)}\right)^{1/q}\right) \notag\\
    &\overset{(\mathrm{iii})}{\le} 1-c_{q}\exp\left(-\frac{\Big(\bar{h}(r(s,a^\star)) -\bar{h}(r(s,a))\Big)^{2/q} + b_1^{2/q}}{c_1^{2/q} \Var\Big(h(r(s,a),\epsilon) - \bar{h}(r(s,a))\Big)^{2/q}}\right) \notag\\
    &\overset{(\mathrm{iv})}{\le} 1-c_{q}\exp\left(-\frac{\Big(C_{h,2}\bar{h}\big(r(s,a^\star) - r(s,a)\big)\Big)^{2/q} + b_1^{2/q}}{c_1^{2/q} \Var\Big(h(r(s,a),\epsilon) - \bar{h}(r(s,a))\Big)^{2/q}}\right) \notag\\
    &= 1-c_{q}\exp\left(-\frac{\Big(C_{h,2}c(R,\sigma,q) V_{R,\sigma}\Big)^{2/q} + b_1^{2/q}}{c_1^{2/q} \Var\Big(h(r(s,a),\epsilon) - \bar{h}(r(s,a))\Big)^{2/q}}\right) \notag\\
    &= 1-c_{q,1}\exp\left(-\frac{C_{h,2}^{2/q}(c(R,\sigma,q))^{2/q}V_{R,\sigma}^{2/q} + \left(c_b^2 \widetilde{V}_{R,\sigma}^2\log \frac{SA}{\delta}\right)^{1/q}}{c_1^{2/q}(c(q))^{2/q} V_{R,\sigma}^{2/q}}\right) \notag\\
    &= 1-c_{q,1}\exp\left(-\frac{C_{h,2}^{2/q}(c(R,\sigma,q))^{2/q}V_{R,\sigma}^{2/q} + \left(c_b^2 c_V V_{R,\sigma}^2\log \frac{SA}{\delta}\right)^{1/q}}{c_1^{2/q}(c(q))^{2/q} V_{R,\sigma}^{2/q}}\right) \notag\\
    &= 1-c_{q,1}e^{-\left(\frac{C_{h,2}c(R,\sigma,q)}{c_{1}c(q)}\right)^{2/q}}\exp\left(-\left(\frac{c_b\sqrt{c_V}}{c_{1}c(q)}\right)^{2/q}\log^{1/q} \frac{n^2}{\delta}\right) .
\end{align}

In the inequalities above, (iii) can be obtained by Jensen's inequality, because the function $x^{2/q}$ is concave for $x \ge 0$ when $q \ge 2$.

(iv) is obtained by Condition 3 of $\cH$ described in Section \ref{sec:formulation}.

We can further obtain the following once $n \ge c(q, c_b, c_V, \delta,\sigma, R)$:
\begin{align}
    &\quad\ \PP\left[\cE \wedge \forall a\neq a^\star, \widehat{r}(s,a) \le \widehat{r}(s,a^\star)\right] \notag\\
    &= \left(\PP\left[\widehat{r}(s,a_2) \le \widehat{r}(s,a^\star)\right]\right)^{0.1n} \notag\\
    &\le \Bigg(1-c_{q,1}e^{-\left(\frac{C_{h,2}c(R,\sigma,q)}{c_{1}c(q)}\right)^{2/q}}\exp\bigg(-\Big(\frac{c_b\sqrt{c_V}}{c_{1}c(q)}\Big)^{2/q}\log^{1/q} \frac{n^2}{\delta}\bigg) \Bigg)^{0.1n} \notag\\
    &\overset{(\mathrm{v})}{\le} \Bigg(1-c_{q,1}e^{-\left(\frac{C_{h,2}c(R,\sigma,q)}{c_{1}c(q)}\right)^{2/q}}\left(\frac{\delta^{1/q}}{n^{2/q}}\right)^{\left(\frac{c_b\sqrt{c_V}}{c_{1}c(q)}\right)^{2/q}} \Bigg)^{0.1n} \label{eq:proof-eq:lb1}\\
    &\overset{(\mathrm{vi})}{<} 0.17 \label{eq:proof-eq:lb2}.
\end{align}
Above, (v) becomes true once $n$ surpasses some threshold that depends on $q, c_b, c_V, \delta, c_1, C_{h,2}, \sigma, R$. For the case $q \ge 3$, the quantity in \eqref{eq:proof-eq:lb1} decreases monotonically after a certain point that depends on $q, c_b, c_V, \delta, c_1, C_{h,2}, \sigma, R$ and its limit goes to $0$ as $n$ approaches infinity, so (vi) is true once $n\ge c(q, c_b, c_V, \delta,\sigma, R)$, where $c(q, c_b, c_V, \delta,\sigma, R)$ is a constant depending on $q, c_b, c_V, \delta,\sigma, R$. For the case $q=2$, the quantity in \eqref{eq:proof-eq:lb1} monotonically increases towards its limit, which is strictly less than 1, and we can choose $c(R,\sigma,q)$ and $c(q)$ appropriately so that (vi) holds.

On the other hand, through a union bound argument, we have
\begin{align}\label{eq:proof-eq:lb3}
    \PP\left[\cE^C\right] \le \PP\Big[\cE_1^C\Big] + \PP\Big[\cE_2^C\Big] = \left(1 - \frac{1}{e}\right) - (1 - 0.99517) < 0.63.
\end{align}

Overall, combining \eqref{eq:proof-eq:lb2} and \eqref{eq:proof-eq:lb3}, we can obtain a lower bound on the probability that Algorithm \ref{alg:LCB} fails to identify the optimal policy:
\begin{align*}
    &\quad \ \PP\Big[\exists a\neq a^\star, \widehat{r}(s,a) > \widehat{r}(s,a^\star)\Big]\\
    &\ge 1 - \left(\PP\Big[\cE \wedge \forall a\neq a^\star, \widehat{r}(s,a) \le \widehat{r}(s,a^\star)\Big] + \PP\Big[\cE^C\Big]\right)\\
    &> 1 - 0.17 - 0.63\\
    &= 0.2.
\end{align*}

Finally, using this probability lower bound, we arrive at the desired lower bound on the excepted suboptimality: 
\begin{align*}
    \EE_{\cD}[\SubOpt(\widehat{\pi}_{\rm LCB})] &= \PP\Big[\exists a\neq a^\star, \widehat{r}(s,a) > \widehat{r}(s,a^\star)\Big] \cdot (r(s,a^\star) - r(s,a))\\
    &= 0.2 c(R,\sigma,q) \bar{h}^{-1}\left(V_{R,\sigma}\right)\\
    &= c_0R.
\end{align*}
\end{proof}

\subsection{Proof of Corollary \ref{cor:Cstar}}

\begin{proof}
    To prove this corollary, we can construct a bandit instance with the same state space $\cS$ and action space $\cA$ as in the proof for Theorem \ref{thm:Cstar}. Recall the specific human rating function for this corollary is $h(r,\epsilon) = r^2 + r^2\epsilon|\epsilon|$ with $\bar{h}(r) = r^2$. The variance of $h(r,\epsilon)$ is $3r^4\sigma^4$. Let the true reward function $r$ be defined such that $r(s,a_1) = \frac{1}{2} + \frac{1}{4\cdot 3^{1/4}\sigma}(3r^4\sigma^4)^{1/4} = \frac{3}{4}$ and $r(s,a_2) = \frac{1}{2}$. It can be checked that this reward function satisfies the two conditions in the proof for Theorem \ref{thm:Cstar}. The remaining of the proof is similar to the proof for Theorem \ref{thm:Cstar}. We can conclude that Algorithm \ref{alg:LCB} suffers a suboptimality of $\frac{1}{4}$ with constant probability depending on $q,c_b,c_V, \delta,\sigma$.
\end{proof}

\section{Proofs for human rating under Assumption \ref{assumption:uniform}}

\subsection{Proof of Theorem \ref{thm:uniform}}
\begin{proof}
To prove this theorem, it suffices to construct a bandit instance with $\cS = \{s\}$ and $\cA = \{a_1, a_2\}$. Let the true reward function $r$ be defined such that two conditions are satisfied: (i) let $r(s,a_1) = r(s,a_2) + \bar{h}^{-1}\left(\sqrt{\frac{V_{R,\sigma}^2}{n}}\right)$. Note that when $n$ is sufficiently large, i.e., when $n \ge c(R,\sigma,q)$, we can make sure $r \in [0,R]^{SA}$. For (ii), we let $\Var\Big(h(r(s,a_1),\epsilon) - \bar{h}(r(s,a_i))\Big) = c_1(q)V_{R,\sigma}^2$ and $\Var\Big(h(r(s,a_2),\epsilon) - \bar{h}(r(s,a_i))\Big) = c_2(q)V_{R,\sigma}^2$, where $c_1(q)$ and $c_2(q)$ are constants depending on $q$. We can select $c(R,\sigma,q)$, $c_1(q)$ and $c_2(q)$ so that such a reward function $r$ exists. It can be observed that $a_1$ is the optimal action at $s$. Also recall the sampling distribution $d$ is uniform, i.e., $d(s,a) = \frac{1}{SA}$ for any $(s,a)\in\cS\times\cA$.

Let us consider the regime when $n > 8SA\log(\frac{SA}{0.1})/\bar{d}$, where $\bar{d} := \min_{s,a} \{d(s,a) ~:~ d(s,a) > 0\}$. In our setting, this is equivalent to $n \ge 120 > 32\log(40)$. By Lemma \ref{lem:empirical-n}, for all $(s,a) \in \cS\times \cA$ simultaneously, 
\begin{equation}\label{eq:proof-eq:bounded-n}
    \frac{1}{2}n\cdot d(s,a) \le n_{(s,a)} \le \frac{3}{2}n\cdot d(s,a)
\end{equation}
with probability at least $0.9$.

    In the event of \eqref{eq:proof-eq:bounded-n}, we have $cn_{(s,a_1)} = n_{(s,a_2)}$ with $\frac{1}{3} \le c \le 3$. Since we can design the reward function adversarially so that $a_1$ is the arm with more samples in the offline dataset, we can assume $1 \le c_s \le 3$ without loss of generality.

Similar to the proof of Theorem \ref{thm:Cstar}, we want to give a lower bound for the probability of the event that a suboptimal arm is finally selected by Algorithm \ref{alg:LCB}, i.e., $\widehat{r}(s,a_2) > \widehat{r}(s,a_1)$. We can rewrite this probability as follows:
\begin{align*}
        &\quad\ \PP\left[\widehat{r}(s,a_2) > \widehat{r}(s,a_1)\right]\\
        &= \PP\left[\widetilde{r}(s,a_2) - b_{n_{(s,a_2)}} > \widetilde{r}(s,a_1) - b_{n_{(s,a_1)}}\right]\\
        &= \PP\left[\widetilde{r}(s,a_2) - \widetilde{r}(s,a_1) > b_{n_{(s,a_2)}} - b_{n_{(s,a_1)}}\right]\\
        &= \PP\bigg[\Big(\widetilde{r}(s,a_2) - \bar{h}(r(s,a_2))\Big) - \Big(\widetilde{r}(s,a_1) - \bar{h}(r(s,a_1))\Big)\\ & \qquad\qquad\qquad\qquad\qquad\qquad\qquad > \bar{h}(r(s,a_1)) - \bar{h}(r(s,a_2)) + b_{n_{(s,a_2)}} - b_{n_{(s,a_1)}}\bigg]\\
        &= \PP\bigg[\Big(\widetilde{r}(s,a_2) - \bar{h}(r(s,a_2))\Big) + \Big(\widetilde{r}(s,a_1) - \bar{h}(r(s,a_1))\Big)\\ & \qquad\qquad\qquad\qquad\qquad\qquad\qquad > \bar{h}(r(s,a_1)) - \bar{h}(r(s,a_2)) + b_{n_{(s,a_2)}} - b_{n_{(s,a_1)}}\bigg],
\end{align*}
where the last step is due to Condition 2 of $\cH$ described in Section \ref{sec:formulation}, which the uncertainty has symmetric concentration.

We can invoke \eqref{eq:gaussian-chaos-lower} in Lemma \ref{lem:gaussian-chaos} on the quantity $\Big(\widetilde{r}(s,a_2) - \bar{h}(r(s,a_2))\Big) - \Big(\widetilde{r}(s,a_1) - \bar{h}(r(s,a_1))\Big)$ above, which is a sum of $n_{(s,a_1)}$ human rating samples at $(s,a_1)$ and $n_{(s,a_2)}$ human rating samples at $(s,a_2)$. This can give us:
    \begin{align*}
        &\quad\ \PP\bigg[\Big(\widetilde{r}(s,a_2) - \bar{h}(r(s,a_2))\Big) + \Big(\widetilde{r}(s,a_1) - \bar{h}(r(s,a_1))\Big)\\ & \qquad\qquad\qquad\qquad\qquad\qquad\qquad > \bar{h}(r(s,a_1)) - \bar{h}(r(s,a_2)) + b_{n_{(s,a_2)}} - b_{n_{(s,a_1)}}\bigg]\\
        &\ge c_{q}\exp\left(-\left(\frac{\left(\bar{h}(r(s,a_1)) -\bar{h}(r(s,a_2))\right)^2}{c_1(\frac{c_1(q)}{n_{(s,a_1)}}+\frac{c_2(q)}{n_{(s,a_2)}})V^2_{R,\sigma}}\right)^{1/q}\right)\\
        &\ge c_{q}\exp\left(-\left(\frac{\Big(C_{h,2}\bar{h}\big(r(s,a_1) - r(s,a_2)\big)\Big)^2}{c_1(\frac{c_1(q)}{n_{(s,a_1)}}+\frac{c_2(q)}{n_{(s,a_2)}})V^2_{R,\sigma}}\right)^{1/q}\right)\\
        &= c_{q}\exp\left(-\left(\frac{C_{h,2}^2 V^2_{R,\sigma}}{nc_1(\frac{c_1(q)}{n_{(s,a_1)}}+\frac{c_2(q)}{n_{(s,a_2)}})V^2_{R,\sigma}}\right)^{1/q}\right)\\
        &\ge c_{q}\exp\left(-\left(\frac{C_{h,2}^2 V^2_{R,\sigma}}{nc_1(\frac{c_1(q)}{\frac{3}{2}n\cdot d(s,a_1)}+\frac{c_2(q)}{\frac{3}{2}n\cdot d(s,a_2)})V^2_{R,\sigma}}\right)^{1/q}\right)\\
        &\ge c_{q}\exp\left(-\left(\frac{3C_{h,2}^2}{4c_1(c_1(q)+c_2(q))}\right)^{1/q}\right)\\
        &=: c_0.
    \end{align*}

Overall, Algorithm \ref{alg:LCB} is guaranteed to incur an expected suboptimality at least $c_0\bar{h}^{-1}\left(\sqrt{\frac{V_{R,\sigma}^2}{n}}\right)$ as soon as $n\ge \max\{c(R,\sigma,q),120\}$.
\end{proof}

\subsection{Proof of Corollary \ref{cor:uniform}}

\begin{proof}
    To prove this corollary, we can construct a bandit instance with the same state space $\cS$ and action space $\cA$ as in the proof for Theorem \ref{thm:uniform}. Recall the specific human rating function for this corollary is $h(r,\epsilon) = r^2 + r^2\epsilon|\epsilon|$ with $\bar{h}(r) = r^2$. The variance of $h(r,\epsilon)$ is $3r^4\sigma^4$. Let the true reward function $r$ be defined such that $r(s,a_1) = \frac{1}{2} + \sigma\left(\frac{1}{n}\right)^{1/4}$ and $r(s,a_2) = \frac{1}{2}$. It can be checked that this reward function satisfies the two conditions in the proof for Theorem \ref{thm:Cstar}. The remaining of the proof is similar to the proof for Theorem \ref{thm:Cstar}. We can conclude that Algorithm \ref{alg:LCB} suffers a suboptimality of $\sigma\left(\frac{1}{n}\right)^{1/4}$ with constant probability depending on $q$ as soon as $n\ge \max\{48\sigma^4, 60\}$.
\end{proof}

\section{Proofs for human rating upper bounds}

\subsection{Proof of Theorem \ref{thm:upper-bound}}

\begin{proof}
In this proof, let us denote the output of Algorithm \ref{alg:LCB} $\widehat{\pi}_{\rm LCB}$ with $\widehat{\pi}$ for short. By the definition in \eqref{eq:suboptimality-def}, we can decompose the suboptimality of the output $\widehat{\pi}$ of Algorithm \ref{alg:LCB} as follows:
    \begin{align}\label{eq:proof:upper-bound}
        &\quad\ \SubOpt(\widehat{\pi}) \notag\\
        &= \EE_{s\sim\rho}[r(s,\pi^\star(s)) - r(s,\widehat{\pi}(s))]\notag\\
        &= \EE_{s\sim\rho}[(\bar{h}^{-1}\circ\bar{h})(r(s,\pi^\star(s))) - (\bar{h}^{-1}\circ\bar{h})(r(s,\widehat{\pi}(s)))]\notag\\
        &\overset{(\mathrm{i})}{\le} C_{h,1}\EE_{s\sim\rho}\Big[\bar{h}^{-1}\Big(\bar{h}(r(s,\pi^\star(s))) - \bar{h}(r(s,\widehat{\pi}(s)))\Big)\Big]\notag\\
        &= C_{h,1}\EE_{s\sim\rho}\Big[\bar{h}^{-1}\Big(\bar{h}(r(s,\pi^\star(s))) - \widehat{r}(s,\pi^\star(s)) + \widehat{r}(s,\pi^\star(s)) - \widehat{r}(s,\widehat{\pi}(s)) + \widehat{r}(s,\widehat{\pi}(s)) - \bar{h}(r(s,\widehat{\pi}(s)))\Big)\Big]\notag\\
        &\overset{(\mathrm{ii})}{\le}  C_{h,1}\EE_{s\sim\rho}\Big[\bar{h}^{-1}\Big(\bar{h}(r(s,\pi^\star(s))) - \widehat{r}(s,\pi^\star(s)) + \widehat{r}(s,\widehat{\pi}(s)) - \bar{h}(r(s,\widehat{\pi}(s)))\Big)\Big]\notag\\
        &\overset{(\mathrm{iii})}{\le}  C_{h,1}\EE_{s\sim\rho}\Big[\bar{h}^{-1}\Big(\bar{h}(r(s,\pi^\star(s))) - \widehat{r}(s,\pi^\star(s))\Big)\Big]\notag\\
        &\overset{(\mathrm{iv})}{\le} C_{h,1}\EE_{s\sim\rho}\Big[\bar{h}^{-1}\Big(\Big|\bar{h}(r(s,\pi^\star(s))) - \widetilde{r}(s,\pi^\star(s))\Big| + b_{n_{(s,\pi^\star(s))}}\Big)\Big]\notag\\
        &\overset{(\mathrm{v})}{\le} C_{h,1}\EE_{(s,a)\sim d_\rho^{\pi^\star}}\left[\bar{h}^{-1}\left(C'\sqrt{\frac{V^2_{R,\sigma}\log^q\frac{SA}{\delta}}{n_{(s,a)}}}\right)\right]\notag\\
        &\overset{(\mathrm{vi})}{\le} C_{h,1}\EE_{(s,a)\sim d_\rho^{\pi^\star}}\left[\bar{h}^{-1}\left(C\sqrt{\frac{V^2_{R,\sigma}\log^q\frac{SA}{\delta}}{n\cdot d(s,a)}}\right)\right]\notag\\
        &\overset{(\mathrm{vii})}{\le} c_0\EE_{(s,a)\sim d_\rho^{\pi^\star}}\left[\bar{h}^{-1}\left(\sqrt{\frac{V^2_{R,\sigma}\log^q\frac{SA}{\delta}}{n\cdot d(s,a)}}\right)\right]\notag\\
        &\le c_0\sum_{(s,a)\in\cX}d_\rho^{\pi^\star}(s,a)\cdot \bar{h}^{-1}\left(\sqrt{\frac{V_{R,\sigma}^2\log^q\frac{SA}{\delta}}{n\cdot d(s,a)}}\right).
    \end{align}
    In the series of equalities and inequalities above, (i) is due to Condition 3 of $\cH$ described in Section \ref{sec:formulation}.

    (ii) can be obtained because of the fact that in Algorithm \ref{alg:LCB}, $\widehat{\pi}(s) = \arg\max_a \widehat{r}(s,a)$, so for every $s\in\cS$, $\widehat{r}(s,\widehat{\pi}(s)) \ge \widehat{r}(s,\pi^\star(s))$.

    (iii) can be obtained because the pessimistic penalty $b_n$ in Theorem \ref{thm:upper-bound} guarantees $\widehat{r}(s,\widehat{\pi}(s)) \le \bar{h}(r(s,\widehat{\pi}(s)))$ with probability $1-\delta$. 
    
    To show this, we can focus on bounding the probability of the event
    \begin{equation*}
        \widetilde{r}(s,a) - \EE[\widetilde{r}(s,a)] \le b_{n_{(s,a)}}
    \end{equation*}
    for all $(s,a)\in\cS\times\cA$ simultaneously. For simplicity in the remaining part of this proof, let us denote this event with $\cE$. This involves an understanding of the concentration of the rating observation $h(r(s,a),\epsilon)$ under \eqref{eq:rating-h-model}.

    By Lemma \ref{lem:gaussian-chaos} and a union bound argument, for all $(s,a)\in\cS\times\cA$ simultaneously, we have
    \begin{align*}
        \PP\left[|\widetilde{r}(s,a) - \EE[\widetilde{r}(s,a)]| \ge t\right] \le SA\cdot C_{q}\exp\left(-\left(\frac{t^2}{C_{1}V^2_{R,\sigma}}\right)^{1/q}\right).
    \end{align*}

    We can solve for $t$ in the inequality above. Given $b_n$ in Theorem \ref{thm:upper-bound} with a sufficiently large $c_b$, this gives 
    \begin{align}\label{eq:h-tilde-CI}
        |\widetilde{r}(s,a) - \EE[\widetilde{r}(s,a)]| \le c(q)\sqrt{\frac{V^2_{R,\sigma}\log^q\frac{SA}{\delta}}{n_{(s,a)}}} \le b_{n_{(s,a)}}
    \end{align}
    for all $(s,a)\in\cS\times\cA$ simultaneously with probability at least $1-\delta$. Here, $c(q)$ is an absolute constant depending on $q$.

    Note that $\bar{h}(r(s,a)) = \EE[\widetilde{r}(s,a)]$. Thus, we can establish (iii) on the event $\cE$, which has probability at least $1-\delta$. 

    (iv) is obtained by the monotoncity of $\bar{h}^{-1}(\cdot)$. Since $\bar{h}(r(s,a)) - \widehat{r}(s,a) \le |\bar{h}(r(s,a)) - \widetilde{r}(s,a)| + b_{n_{(s,a)}}$, $\bar{h}^{-1}(\bar{h}(r(s,a)) - \widehat{r}(s,a)) \le \bar{h}^{-1}(|\bar{h}(r(s,a)) - \widetilde{r}(s,a)| + b_{n_{(s,a)}})$, for any $(s,a)\in\cS\times\cA$.
    
    (v) can be obtained by finding an upper bound for $|\bar{h}(r(s,\pi^\star(s))) - \widetilde{r}(s,\pi^\star(s))|$, which follows from \eqref{eq:h-tilde-CI}. That is, on the event $\cE$, we have
    \begin{align*}
        |\bar{h}(r(s,\pi^\star(s))) - \widetilde{r}(s,\pi^\star(s))| \le c(q)\sqrt{\frac{V^2_{R,\sigma}\log^q\frac{SA}{\delta}}{n_{(s,\pi^\star(s))}}}
    \end{align*}
    for some absolute constant $c(q)$ depending on $q$. Then, the addition of the bound above and $b_{n_{(s,\pi^\star(s)}}$ gives (v).

    (vi) can be obtained from an invocation of Lemma \ref{lem:empirical-n}, which guarantees $n_{(s,a)} = cd(s,a)n$ for some constant $\frac{1}{2} \le c\le \frac{3}{2}$ for all $(s,a)\in\cS\times\cA$ simultaneously, with probability at least $1-\delta$. We can denote this event with $\cE'$.

    Finally, (vii) is obtained because we can pull the constant factor inside $\bar{h}^{-1}$ out due to Condition 3 of $\cH$ described in Section \ref{sec:formulation}.

    The advertised bound can be obtained after taking a union bound on the probability of the complement of $\cE$ and the complement of $\cE'$, which only changes \eqref{eq:proof:upper-bound} by a constant factor.
\end{proof}

\subsection{Proof of Corollary \ref{cor:upper-bound}}

\begin{proof}
    In this proof, let us denote the output of Algorithm \ref{alg:LCB} $\widehat{\pi}_{\rm LCB}$ with $\widehat{\pi}$ for short. By the definition in \eqref{eq:suboptimality-def}, we can decompose the suboptimality of the output $\widehat{\pi}$ of Algorithm \ref{alg:LCB} as follows:
    \begin{align}\label{eq:proof:upper-bound-h}
        &\quad\ \SubOpt(\widehat{\pi}) \notag\\
        &= \EE_{s\sim\rho}[r(s,\pi^\star(s)) - r(s,\widehat{\pi}(s))]\notag\\
        &= \EE_{s\sim\rho}\left[\sqrt{r^2(s,\pi^\star(s))} - \sqrt{r^2(s,\widehat{\pi}(s))}\right] \notag\\
        &\overset{(\mathrm{i})}{\le} \EE_{s\sim\rho}\left[\sqrt{r^2(s,\pi^\star(s)) - r^2(s,\widehat{\pi}(s))}\right] \notag\\
        &= \EE_{s\sim\rho}\Big[\sqrt{r^2(s,\pi^\star(s)) - \widehat{r}(s,\pi^\star(s)) + \widehat{r}(s,\pi^\star(s)) - \widehat{r}(s,\widehat{\pi}(s)) + \widehat{r}(s,\widehat{\pi}(s)) - r^2(s,\widehat{\pi}(s))}\Big]\notag\\
        &\overset{(\mathrm{ii})}{\le}  \EE_{s\sim\rho}\Big[\sqrt{r^2(s,\pi^\star(s)) - \widehat{r}(s,\pi^\star(s)) + \widehat{r}(s,\widehat{\pi}(s)) - r^2(s,\widehat{\pi}(s))}\Big]\notag\\
        &\overset{(\mathrm{iii})}{\le}  \EE_{s\sim\rho}\Big[\sqrt{r^2(s,\pi^\star(s)) - \widehat{r}(s,\pi^\star(s))}\Big]\notag\\
        &\overset{(\mathrm{iv})}{\le} \EE_{s\sim\rho}\Bigg[\sqrt{\Big|\bar{h}(r(s,\pi^\star(s))) - \widetilde{r}(s,\pi^\star(s))\Big| + b_{n_{(s,\pi^\star(s))}}}\Bigg]\notag\\
        &\overset{(\mathrm{v})}{\le} \EE_{(s,a)\sim d_\rho^{\pi^\star}}\left[\Bigg(2\sqrt{\frac{64\sigma^4\log\frac{SA}{\delta}}{n_{(s,a)}}} + \frac{16\sigma^2\log\frac{SA}{\delta}}{n_{(s,a)}}\Bigg)^{1/2}\right]\notag\\
        &\le 2\sum_{(s,a)\in\cX}d_\rho^{\pi^\star}(s,a)\left(\frac{256\sigma^4\log\frac{SA}{\delta}}{n\cdot d(s,a)}\right)^{1/4} + d_\rho^{\pi^\star}(s,a)\sqrt{\frac{32\sigma^2\log\frac{SA}{\delta}}{n\cdot d(s,a)}}.
    \end{align}
    In the series of equalities and inequalities above, (i) is due to Condition 3 of $\cH$ described in Section \ref{sec:formulation}, which can be proved for $\bar{h}(\cdot) = (\cdot)^2$ with Jensen's inequality.

    (ii) can be obtained because of the fact that in Algorithm \ref{alg:LCB}, $\widehat{\pi}(s) = \arg\max_a \widehat{r}(s,a)$, so for every $s\in\cS$, $\widehat{r}(s,\widehat{\pi}(s)) \ge \widehat{r}(s,\pi^\star(s))$.

    (iii) can be obtained because the pessimistic penalty $b_n$ in Corollary \ref{cor:upper-bound} guarantees $\widehat{r}(s,\widehat{\pi}(s)) \le r^2(s,\widehat{\pi}(s))$ with probability $1-\delta/2$. 
    
    To show this, we can focus on bounding the probability of the event
    \begin{equation*}
        \widetilde{r}(s,a) - \EE[\widetilde{r}(s,a)] \le b_{n_{(s,a)}}
    \end{equation*}
    for all $(s,a)\in\cS\times\cA$ simultaneously. For simplicity in the remaining part of this proof, let us denote this event with $\cE$. This involves an understanding of the concentration of the rating observation $h(r(s,a),\epsilon)$ under \eqref{eq:rating-h-model}.

    Note that $\epsilon|\epsilon|$ is a sub-exponential random variable with parameters $(4\sigma^4,4\sigma^2)$ by Proposition \ref{prop:eps-sub}. 
    
    Now, by Lemma \ref{lem:bernstein} and a union bound argument, for all $(s,a)\in\cS\times\cA$ simultaneously, we have
    \begin{align*}
        |\widetilde{r}(s,a) - \EE[\widetilde{r}(s,a)]| \le \sqrt{\frac{64\sigma^4\log\frac{SA}{\delta}}{n_{(s,a)}}} + \frac{8\sigma^2\log\frac{SA}{\delta}}{n_{(s,a)}} \le b_{n_{(s,a)}}
    \end{align*}
    with probability at least $1-\delta/2$. 

    Note that $\bar{h}(r(s,a)) = \EE[\widetilde{r}(s,a)]$. Thus, we can establish (iii) on the event $\cE$, which has probability at least $1-\delta/2$. 

    (iv) is obtained by the monotoncity of $\bar{h}^{-1}(\cdot) = \sqrt{\cdot}$. Since $\bar{h}(r(s,a)) - \widehat{r}(s,a) \le |\bar{h}(r(s,a)) - \widetilde{r}(s,a)| + b_{n_{(s,a)}}$, $\bar{h}^{-1}(\bar{h}(r(s,a)) - \widehat{r}(s,a)) \le \bar{h}^{-1}(|\bar{h}(r(s,a)) - \widetilde{r}(s,a)| + b_{n_{(s,a)}})$, for any $(s,a)\in\cS\times\cA$.
    
    (v) can be obtained by finding an upper bound for $|\bar{h}(r(s,\pi^\star(s))) - \widetilde{r}(s,\pi^\star(s))|$, which follows from \eqref{eq:h-tilde-CI}. That is, on the event $\cE$, we have
    \begin{align*}
        |\bar{h}(r(s,\pi^\star(s))) - \widetilde{r}(s,\pi^\star(s))| \le \sqrt{\frac{64\sigma^4\log\frac{SA}{\delta}}{n_{(s,a)}}} + \frac{8\sigma^2\log\frac{SA}{\delta}}{n_{(s,\pi^\star(s))}}.
    \end{align*}
    Then, the addition of the bound above and $b_{n_{(s,\pi^\star(s)}}$ gives (v).

    (vi) can be obtained from an invocation of Lemma \ref{lem:empirical-n}, which guarantees $n_{(s,a)} = cd(s,a)n$ for some constant $\frac{1}{2} \le c\le \frac{3}{2}$ for all $(s,a)\in\cS\times\cA$ simultaneously, with probability at least $1-\delta/2$. We can denote this event with $\cE'$.

    The advertised bound can be obtained after taking a union bound on the probability of the complement of $\cE$ and the complement of $\cE'$, which changes \eqref{eq:proof:upper-bound-h} by a factor of $2$.
\end{proof}

\section{Proof of Theorem \ref{thm:comp}}

\citet{zhu2023principled} considers the setting in which the reward function $r_{\theta^\star}(s,a)$ is parameterized by $\theta^\star\in \RR^d$ in a linear fashion, i.e., $r_{\theta^\star}(s,a) = (\theta^\star)^\top \phi(s,a)$. Here, $\phi(s,a)$ is a known feature in $\RR^d$. The tabular setting we consider is a special case of the linear setting. To write the tabular setting in this notation, we can let $\phi(s,a) = e_{(s,a)}$, where $e_{(s,a)} \in \RR^{SA}$ is the canonical basis vector for the $(s,a)$-th dimension, and let $\theta^\star  \in \RR^{SA}$ be a shifted version of the tabular reward function/vector $r \in [0,R]^{SA}$, i.e., $\theta^\star(s,a) = r(s,a) - \frac{1}{SA}\sum_{(s,a)}r(s,a) \in [-R,R]^{SA}$. This ensures $\mathbf{1}^\top\theta^\star = 0$, which is required by \citet{zhu2023principled} for the identifiability of $\theta^\star$. As \citet{zhu2023principled} shows, Algorithm \ref{alg:LCB-comp} converges to $\theta^\star$. 

In the analysis below, we omit the subscript of $\widehat{\pi}_{\rm PMLE}$ and write the output policy of Algorithm \ref{alg:LCB-comp} as $\widehat{\pi}$ for simplicity. We also denote the shifted true reward with $\bar{r}(s,a) := r(s,a) - \frac{1}{SA}\sum_{(s,a)}r(s,a)$ for any $s\in\cS$ and $a\in\cA$ and let $\widehat{r}$ be the one from Line \ref{alg-line:comp-r-hat} of Algorithm \ref{alg:LCB-comp}. The following analysis largely overlaps with Theorem 3.2 in \citet{zhu2023principled}, but we handle the distributional mismatch term with more care and provide a characterization specific to the tabular setting.

The suboptimality of Algorithm \ref{alg:LCB-comp} can be decomposed as follows:
\begin{align}
    &\quad\ \SubOpt(\widehat{\pi}_{\rm PMLE})\notag\\
        &= \EE_{s\sim\rho}[r(s,\pi^\star(s)) - r(s,\widehat{\pi}(s))]\notag\\
        &= \EE_{s\sim\rho}[\bar{r}(s,\pi^\star(s)) - \bar{r}(s,\widehat{\pi}(s))]\notag\\
        &= \EE_{s\sim\rho}\Big[\bar{r}(s,\pi^\star(s)) - \widehat{r}(s,\pi^\star(s)) + \widehat{r}(s,\pi^\star(s)) - \widehat{r}(s,\widehat{\pi}(s)) + \widehat{r}(s,\widehat{\pi}(s)) - \bar{r}(s,\widehat{\pi}(s))\Big]\notag\\
        &\overset{(\mathrm{i})}{\le} \EE_{s\sim\rho}\Big[\bar{r}(s,\pi^\star(s)) - \widehat{r}(s,\pi^\star(s)) + \widehat{r}(s,\widehat{\pi}(s)) - \bar{r}(s,\widehat{\pi}(s))\Big]\notag\\
        &\overset{(\mathrm{ii})}{\le} \EE_{s\sim\rho}\Big[\bar{r}(s,\pi^\star(s)) - \widehat{r}(s,\pi^\star(s))\Big]\notag\\
        &\overset{(\mathrm{iii})}{\le} \sqrt{\sup_{v:\norm{v}_\infty \le 2R,\mathbf{1}^\top v = 0}\frac{\Big(\sum_{(s,a)} d_\rho^{\pi^\star}(s,a) v(s,a)\Big)^2}{\sum_{(s,a^0,a^1)} d(s,a^0, a^1) \big(v(s,a^0) - v(s,a^1)\big)^2}}\norm{\bar{r} - \widehat{r}}_{\Sigma}\notag\\
        &\overset{(\mathrm{iv})}{=} \sqrt{\sup_{v:\norm{v}_\infty \le 1,\mathbf{1}^\top v = 0}\frac{\Big(\sum_{(s,a)} d_\rho^{\pi^\star}(s,a) v(s,a)\Big)^2}{\sum_{(s,a^0,a^1)} d(s,a^0, a^1) \big(v(s,a^0) - v(s,a^1)\big)^2}}\norm{\bar{r} - \widehat{r}}_{\Sigma}\notag\\
        &\le C^\dagger\left(\norm{\bar{r} - \widetilde{r}}_{\Sigma} + \norm{\widetilde{r} - \widehat{r}}_{\Sigma}\right)\notag\\
        &\overset{(\mathrm{v})}{\le} 2C^\dagger\left(CR\sqrt{\frac{SA + \log\frac{1}{\delta}}{\gamma^2 n}} + cR\sqrt{\frac{S^2A^2\log\frac{n}{\delta}}{n}}\right) \notag\\
        &\le c_0C^\dagger R\left(\sqrt{\frac{SA + \log\frac{1}{\delta}}{\gamma^2 n}} + \sqrt{\frac{S^2A^2\log\frac{n}{\delta}}{n}}\right). \label{eq:comp-pf1}
\end{align}

In the analysis above, (i) can be obtained because of Line \ref{alg-line:comp-r-hat} in Algorithm \ref{alg:LCB-comp}, $\widehat{\pi}(s) = \arg\max_\pi \EE_{s\sim\rho}[\widehat{r}(s,\pi(s))]$, so $\EE_{s\sim\rho}[\widehat{r}(s,\widehat{\pi}(s))] \ge \EE_{s\sim\rho}[\widehat{r}(s,\pi^\star(s))]$.

(ii) is because $\widehat{r}$ is selected pessimistically from $\cF_{\mathrm{CR}}(\widetilde{r})$ in Line \ref{alg-line:comp-r-hat} of Algorithm \ref{alg:LCB-comp} and $\bar{r} \in \cF_{\mathrm{CR}}(\widetilde{r})$ with probability at least $1-\delta/2$. This is guaranteed by Lemma 3.1 of \citet{zhu2023principled}, which proves with probability at least $1-\delta$,
    \begin{equation}
        \norm{\widetilde{r} - \bar{r}}_{\widehat{\Sigma}} \le CR\sqrt{\frac{SA + \log\frac{1}{\delta}}{\gamma^2 n}},
    \end{equation}
    where $C > 0$ is an absolute constant. Hence, we have $\EE_{s\sim\rho}[\widehat{r}(s,\widehat{\pi}(s))] \le \EE_{s\sim\rho}[\bar{r}(s,\widehat{\pi}(s))]$. 

In (iii), we change the measure by invoking the following lemma:

\begin{lemma}[modified from Lemma 1 in \citep{ji2023sample}]\label{lem:dis-mismatch}
    Given a function class $\cF$ that contains functions mapping from $\cX$ to $\RR$ and a probability distribution $p_1$ supported on $\cX$ and a joint quasiprobability probability distribution $p_2$ supported on $\cX\times\cX$, for any $g\in\cF$,
		\begin{equation*}
			\EE_{x\sim p_1}\left[g(x)\right] \le \sqrt{\EE_{(x,x')\sim p_2}[g(x)g(x')]\sup_{g\in\cF}\frac{\EE_{p_1}[g(x)]^2}{\EE_{p_2}[g(x)g(x')]}}.
		\end{equation*}
\end{lemma}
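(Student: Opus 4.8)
The plan is to prove the bound by a one-line rearrangement, using the fact that the supremum on the right-hand side ranges over all of $\cF$ and therefore dominates the ratio evaluated at the particular $g$ appearing on the left-hand side. First I would dispose of the degenerate cases. If $\EE_{x\sim p_1}[g(x)]\le 0$, the inequality is immediate because the right-hand side is nonnegative: in the regime of interest $\EE_{(x,x')\sim p_2}[g(x)g(x')]$ is a nonnegative quadratic form in $g$, so the expression under the square root is well-defined and $\ge 0$. If $\EE_{(x,x')\sim p_2}[g(x)g(x')]=0$, then either $\EE_{x\sim p_1}[g(x)]=0$ and the bound is again immediate, or $\EE_{x\sim p_1}[g(x)]\neq 0$ and the supremum is $+\infty$, so the bound holds vacuously. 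Hence we may assume $\EE_{x\sim p_1}[g(x)]>0$ and $\EE_{(x,x')\sim p_2}[g(x)g(x')]>0$.

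Under these assumptions, the core step is simply to observe that, since $g\in\cF$, it is one of the functions over which the supremum is taken, so
\begin{equation*}
    \sup_{f\in\cF}\frac{\EE_{p_1}[f(x)]^2}{\EE_{p_2}[f(x)f(x')]}\;\ge\;\frac{\EE_{p_1}[g(x)]^2}{\EE_{p_2}[g(x)g(x')]}.
\end{equation*}
Multiplying through by the positive scalar $\EE_{p_2}[g(x)g(x')]$ and then taking square roots yields
\begin{equation*}
    \sqrt{\EE_{p_2}[g(x)g(x')]\,\sup_{f\in\cF}\frac{\EE_{p_1}[f(x)]^2}{\EE_{p_2}[f(x)f(x')]}}\;\ge\;\sqrt{\EE_{p_1}[g(x)]^2}\;=\;\bigl|\EE_{p_1}[g(x)]\bigr|\;\ge\;\EE_{p_1}[g(x)],
\end{equation*}
which is exactly the claimed inequality. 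This is the natural generalization of Lemma~1 in \citet{ji2023sample}: there the normalizer is the second moment $\EE_{p_2}[g(x)^2]$ under a distribution $p_2$ on $\cX$, whereas here it is the bilinear quantity $\EE_{(x,x')\sim p_2}[g(x)g(x')]$ associated with a signed (quasiprobability) coupling $p_2$ on $\cX\times\cX$; the argument uses nothing about $p_2$ beyond nonnegativity of this quantity.

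I do not anticipate a genuine obstacle, since the lemma is in essence the tautology that a supremum dominates any one of its terms. The only point requiring care---and the sole place where the quasiprobability structure of $p_2$ enters---is verifying that $\EE_{(x,x')\sim p_2}[g(x)g(x')]\ge 0$, so that the division and the square root are legitimate. In the offline preference application this is automatic: taking $g=\bar r-\widehat r$, the quantity $\EE_{(x,x')\sim p_2}[g(x)g(x')]$ expands to $\sum_{(s,a^0,a^1)} d(s,a^0,a^1)\bigl(g(s,a^0)-g(s,a^1)\bigr)^2=\norm{\bar r-\widehat r}_\Sigma^2\ge 0$, which is precisely the form in which the lemma is invoked at step~(iii) of the suboptimality decomposition for Algorithm~\ref{alg:LCB-comp}.
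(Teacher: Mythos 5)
Your proof is correct: the paper does not reprove this lemma (it imports it, modified, from Lemma 1 of \citet{ji2023sample}), and your argument --- the supremum over $\cF$ dominates the ratio at the particular $g$, multiply through by $\EE_{(x,x')\sim p_2}[g(x)g(x')]$, take square roots, and use $\EE_{p_1}[g(x)]\le|\EE_{p_1}[g(x)]|$ --- is exactly the intended one-line rearrangement. Your handling of the degenerate cases and your observation that the only substantive requirement is nonnegativity of the bilinear quantity under the signed coupling (automatic in the application, where it is the Laplacian quadratic form $\norm{\bar r-\widehat r}_\Sigma^2$) is also the right point to flag.
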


Moreover, note that $\Sigma$ is a scaled Laplacian matrix. Since $\bar{r},\widehat{r} \in \cF$ ($\cF$ is defined in Algorithm \ref{alg:LCB-comp}) and $v^\top \Sigma v = \sum_{i,j}\Sigma_{i,j}v_iv_j$ for any vector $v$, this allows us to write (the denominator of) the distributional mismatch term in the form of $C^\dagger$ defined in \eqref{eq:C-dagger}.

(iv) can be obtained because the distributional mismatch term $C^\dagger$ is constant with respect to any nonzero scaling of $v$.

(v) can be obtained by invoking the following lemma, which provides an upper bound on the difference between the ground truth $\bar{r}$ and the empirical estimation $\widetilde{r}$ with respect to $\Sigma$, the covariance matrix for the population sampling distribution. 

\begin{lemma}\label{lem:MLE-CR}
    With probability at least $1-\delta$, Algorithm \ref{alg:LCB-comp} satisfies
    \begin{equation}
        \norm{\widetilde{r} - \bar{r}}_{\Sigma} \le CR\sqrt{\frac{SA + \log\frac{1}{\delta}}{\gamma^2 n}} + cR\sqrt{\frac{S^2A^2\log\frac{n}{\delta}}{n}},
    \end{equation}
    where $\gamma = \frac{1}{2+\exp(R\sqrt{SA})+\exp(-R\sqrt{SA})}$ and $C,c > 0$ are absolute constants.
\end{lemma}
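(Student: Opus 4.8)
The plan is to bootstrap the empirical-covariance guarantee of \citet{zhu2023principled} into a bound phrased in terms of the population covariance $\Sigma := \EE[\widehat{\Sigma}]$. Lemma 3.1 of \citet{zhu2023principled} -- already invoked in step (ii) of the proof of Theorem \ref{thm:comp} -- gives that, with probability at least $1-\delta/2$, the MLE obeys $\norm{\widetilde{r} - \bar{r}}_{\widehat{\Sigma}} \le CR\sqrt{(SA + \log(2/\delta))/(\gamma^2 n)}$. Writing $v := \widetilde{r} - \bar{r}$, the remaining task is a ``change of covariance'': I would show that, with probability at least $1-\delta/2$, $\norm{v}_{\Sigma}^2 \le 2\norm{v}_{\widehat{\Sigma}}^2 + cR^2 S^2A^2\log(n/\delta)/n$, after which the lemma follows from $\sqrt{a+b}\le\sqrt a+\sqrt b$ and a union bound (the $\sqrt 2$ being absorbed into $C$).

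The point requiring care is that $v$ is data-dependent -- the MLE is an intricate function of $\cD'$ -- so $v^\top\widehat{\Sigma}v - v^\top\Sigma v$ cannot be treated as an i.i.d.\ average for a fixed vector. I would therefore prove the change-of-covariance inequality uniformly over $\mathcal{G} := \{f - f' : f,f'\in\cF\}$, which contains $v$ (since $\widetilde{r},\bar{r}\in\cF$) and consists of vectors $w$ with $\mathbf{1}^\top w = 0$ and $\norm{w}_\infty\le 2R$. For a fixed such $w$, writing $X_i := \mathbf{1}_{(s_i,a_i^0)} - \mathbf{1}_{(s_i,a_i^1)}$ we have $w^\top\widehat{\Sigma}w = \frac{1}{n}\sum_{i=1}^n (X_i^\top w)^2$ with each $(X_i^\top w)^2\in[0,16R^2]$, mean $w^\top\Sigma w$, and $\EE[(X_i^\top w)^4]\le 16R^2\,w^\top\Sigma w$; hence Bernstein's inequality (Lemma \ref{lem:bernstein}) yields the variance-adaptive deviation $|w^\top\widehat{\Sigma}w - w^\top\Sigma w|\le \sqrt{cR^2(w^\top\Sigma w)\log(1/\delta')/n} + cR^2\log(1/\delta')/n$, and absorbing the square-root term by AM--GM ($\sqrt{ab}\le\frac12 a+\frac12 b$) gives the almost-multiplicative bound $w^\top\Sigma w \le 2\,w^\top\widehat{\Sigma}w + cR^2\log(1/\delta')/n$. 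To make this uniform over $\mathcal{G}$ I would take an $\epsilon$-net of $\{w:\norm{w}_\infty\le 2R\}\subset\RR^{SA}$ with $\epsilon = R/n$ (log-cardinality $O(SA\log n)$), apply the previous bound with $\delta'$ equal to $\delta/2$ divided by the net cardinality, and absorb the discretization error using that $w\mapsto w^\top\widehat{\Sigma}w$ and $w\mapsto w^\top\Sigma w$ are $O(R)$-Lipschitz in $\norm{\cdot}_\infty$ on $\mathcal{G}$ (so that error is $O(R^2/n)$, of lower order). This delivers $w^\top\Sigma w\le 2\,w^\top\widehat{\Sigma}w + cR^2 SA\log(n/\delta)/n$ simultaneously for all $w\in\mathcal{G}$; taking $w = v$ and using $SA\le S^2A^2$ gives the change-of-covariance inequality.

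The main obstacle is precisely this passage from $\widehat{\Sigma}$ to $\Sigma$: in the tabular comparison setting $\widehat{\Sigma}$ is generically rank-deficient -- a direction $w$ can be entirely unseen by the sampled comparison pairs even when $d(s,a^0,a^1)$ has full support in expectation -- so there is no clean multiplicative sandwich $\Sigma\preceq\kappa\widehat{\Sigma}$, and one is instead forced to establish the lower-isometry-type bound $w^\top\widehat{\Sigma}w\ge\frac12 w^\top\Sigma w - (\text{lower order})$ uniformly, which is exactly what the Bernstein-plus-covering step above does (the variance-adaptive form of Bernstein, rather than plain Hoeffding, is what makes the residual $O(R^2\log/n)$ rather than $O(R^2\sqrt{\log/n})$, i.e.\ keeps the correction term inside a single square root). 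Pinning down the $\log(n/\delta)$ (from the resolution of the net) and the polynomial-in-$(S,A)$ factor exactly as stated is bookkeeping rather than a conceptual difficulty; in any case the correction term is lower order relative to the $\gamma$-dependent first term whenever Theorem \ref{thm:comp} is non-vacuous, so only a crude bound on it is needed.
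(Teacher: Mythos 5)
Your proposal is correct and follows the same skeleton as the paper: first invoke Lemma 3.1 of \citet{zhu2023principled} to control $\norm{\widetilde{r}-\bar{r}}_{\widehat{\Sigma}}$, then transfer from the empirical covariance $\widehat{\Sigma}$ to the population covariance $\Sigma$ at the price of an additive lower-order term, and finish with $\sqrt{a+b}\le\sqrt{a}+\sqrt{b}$ and a union bound. The only divergence is in how the transfer step is established. The paper treats it as a black box: it cites Lemma 7 of \citet{pacchiano2021dueling}, which gives $\norm{v}_{\Sigma}^2 \le 2\norm{v}_{\widehat{\Sigma}}^2 + c^2\frac{SA\log\frac{n}{\delta}}{n}\norm{v}_2^2$ uniformly (so the data-dependence of $v=\widetilde{r}-\bar{r}$, which you rightly flag as the delicate point, is handled inside that cited lemma), and then bounds $\norm{v}_2\le 2R\sqrt{SA}$, which is where the $S^2A^2$ in the statement comes from. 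You instead prove the transfer from first principles: a variance-adaptive Bernstein bound for the bounded variables $(X_i^\top w)^2$, absorbed by AM--GM into an almost-multiplicative inequality, made uniform over the mean-zero $\ell_\infty$-ball of radius $2R$ by an $\epsilon$-net of log-cardinality $O(SA\log n)$. This is sound (your $v$ does lie in the class since $\widetilde{r},\bar{r}\in\cF$), it is self-contained rather than relying on the external lemma, and it actually yields a correction of order $R^2 SA\log(n/\delta)/n$ rather than $R^2S^2A^2\log(n/\delta)/n$ --- a factor $SA$ sharper than the paper's route, which you then deliberately weaken to match the stated bound. Both arguments are valid; the paper's is shorter, yours buys independence from the cited result and a slightly tighter constant term.
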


The proof of Lemma \ref{lem:MLE-CR} is deferred to Section \ref{sec:proof-lem:MLE-CR}.

Finally, we conclude with \eqref{eq:comp-pf1}, which leads to the advertised result.

\subsection{Proof of Lemma \ref{lem:MLE-CR}}\label{sec:proof-lem:MLE-CR}
    We can start with
    \begin{align}
        \norm{\widetilde{r} - \bar{r}}_{\Sigma} &= \sqrt{\norm{\widetilde{r} - \bar{r}}_{\Sigma}^2} \notag\\
        &\le 2\norm{\widetilde{r} - \bar{r}}_{\widehat{\Sigma}}^2 + c\frac{SA\log\frac{n}{\delta}}{n}\norm{\widetilde{r} - \bar{r}}_2^2, \label{eq:MLE-CR-pf1}
    \end{align}
    in which the inequality is due to the following lemma paraphrased from \citep{pacchiano2021dueling}:
    \begin{lemma}[Lemma 7 in \citep{pacchiano2021dueling}]\label{lem:cov-lemma}
        For any $\delta\in(0,1)$, Algorithm \ref{alg:LCB-comp} satisfies
        \begin{equation}
            \norm{\widetilde{r} - \bar{r}}_{\Sigma}^2 \le 2\norm{\widetilde{r} - \bar{r}}_{\widehat{\Sigma}}^2 + c^2\frac{SA\log\frac{n}{\delta}}{n}\norm{\widetilde{r} - \bar{r}}_2^2
        \end{equation}
        with probability at least $1-\delta$. $\widehat{\Sigma}$ is the empirical covariance matrix defined in Algorithm \ref{alg:LCB-comp}. $\Sigma$ is the population covariance matrix, i.e., $\Sigma = \EE[\widehat{\Sigma}]$. $c > 0$ is an absolute constant.
    \end{lemma}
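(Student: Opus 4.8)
The plan is to prove the inequality uniformly over all $v\in\RR^{SA}$, so that it applies to the data-dependent vector $v=\widetilde{r}-\bar{r}$. Abbreviate $x_i := \mathbf{1}_{(s_i,a_i^0)}-\mathbf{1}_{(s_i,a_i^1)}$, so that $\widehat{\Sigma}=\frac1n\sum_{i=1}^n x_i x_i^\top$ and $\Sigma=\EE[x_1 x_1^\top]$. Each $x_i$ has exactly one entry equal to $+1$ and one equal to $-1$, hence $\norm{x_i}_2^2=2$, and for any unit vector $w$ we have $w^\top\Sigma w=\EE[(x_1^\top w)^2]\le 2$ and $w^\top\widehat{\Sigma}w\le\frac1n\sum_i\norm{x_i}_2^2=2$; in particular $\norm{\Sigma}_{\mathrm{op}},\norm{\widehat{\Sigma}}_{\mathrm{op}}\le 2$. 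Since both sides of the target inequality are homogeneous of degree two in $v$, it suffices to establish it for all $v$ with $\norm{v}_2=1$.

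First I would fix a unit vector $v$ and set $Z_i:=(x_i^\top v)^2\in[0,2]$, with $\mu:=\EE[Z_i]=v^\top\Sigma v$. Because $0\le Z_i\le 2$, one has $\Var(Z_i)\le\EE[Z_i^2]\le 2\,\EE[Z_i]=2\mu$, so the centered variables $-(Z_i-\mu)$ are bounded sub-exponential with variance proxy $O(\mu)$ and scale parameter $O(1)$. Applying the lower-tail form of Bernstein's inequality (Lemma \ref{lem:bernstein}) to $\sum_i -(Z_i-\mu)$ and inverting the two tail regimes yields, with probability at least $1-\delta'$,
\[
    v^\top\Sigma v - v^\top\widehat{\Sigma}v = \mu - \frac1n\sum_{i=1}^n Z_i \le \sqrt{\frac{4\mu\log(1/\delta')}{n}} + \frac{c_1\log(1/\delta')}{n}.
\]
The crucial step is to absorb the multiplicative square-root term by AM--GM, $\sqrt{4\mu\log(1/\delta')/n}\le\tfrac12\mu+2\log(1/\delta')/n$, and then rearrange; this is precisely what turns the coefficient of $v^\top\widehat{\Sigma}v$ into $2$ and produces $v^\top\Sigma v\le 2\,v^\top\widehat{\Sigma}v+c_2\log(1/\delta')/n$ for this fixed $v$.

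The remaining step, upgrading this to a uniform bound, is the main obstacle. I would take an $\epsilon$-net $\cN$ of the unit sphere of $\RR^{SA}$ with $|\cN|\le(3/\epsilon)^{SA}$, instantiate the previous display with $\delta'=\delta/|\cN|$, and union bound, so that $u^\top\Sigma u\le 2\,u^\top\widehat{\Sigma}u+c_2\log(|\cN|/\delta)/n$ holds simultaneously for all $u\in\cN$, where $\log(|\cN|/\delta)\le SA\log(3/\epsilon)+\log(1/\delta)$. To pass to an arbitrary unit $v$, pick $u\in\cN$ with $\norm{v-u}_2\le\epsilon$ and use the operator-norm bounds above: $|v^\top\Sigma v-u^\top\Sigma u|\le 4\epsilon$ and $|v^\top\widehat{\Sigma}v-u^\top\widehat{\Sigma}u|\le 4\epsilon$, since $\norm{v+u}_2\le 2$. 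Choosing $\epsilon\sim 1/n$ makes these discretization errors of the same $O(1/n)$ order as the additive term and converts $\log(3/\epsilon)$ into $\log n$, giving the advertised coefficient $SA\log(n/\delta)$; rescaling back by $\norm{v}_2^2$ via homogeneity then completes the proof.

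The genuine difficulty is this uniform transfer: one must fold the net perturbations of $v^\top\widehat{\Sigma}v$ into the additive $\norm{v}_2^2/n$ term rather than letting them re-inflate the multiplicative constant $2$, and choose $\epsilon$ polynomially small in $n$ so that the operator-norm control keeps the discretization error at order $1/n$ while the net cardinality contributes only the $SA\log n$ factor. Once $\epsilon\sim 1/n$ is fixed, the bounded-variance Bernstein step and the AM--GM absorption are routine.
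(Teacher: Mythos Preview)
The paper does not actually prove this lemma; it is quoted verbatim as ``Lemma 7 in \citep{pacchiano2021dueling}'' and used as a black box inside the proof of Lemma~\ref{lem:MLE-CR}. So there is no paper proof to compare against.

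Your proposal is correct and is essentially the standard argument behind such covariance-concentration statements. The key ingredients---pointwise Bernstein on $Z_i=(x_i^\top v)^2$ exploiting $\Var(Z_i)\le 2\mu$, the AM--GM absorption that converts $\sqrt{\mu\cdot\log(1/\delta')/n}$ into $\tfrac12\mu+O(\log(1/\delta')/n)$ and hence produces the factor $2$ in front of $v^\top\widehat{\Sigma}v$, and a union bound over an $\epsilon$-net with $\epsilon\sim 1/n$ so that the $O(\epsilon)$ discretization errors from $\norm{\Sigma}_{\mathrm{op}},\norm{\widehat{\Sigma}}_{\mathrm{op}}\le 2$ land in the additive $O(SA\log(n/\delta)/n)$ budget---are exactly what one needs, and your accounting of the constants (e.g.\ $\log(3/\epsilon)\to\log(3n)\lesssim\log(n/\delta)$, $\log(1/\delta)\le SA\log(n/\delta)$) is sound. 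The uniformity in $v$ is indeed required because $\widetilde{r}$ is data-dependent, which you correctly flag at the outset. There is no gap.
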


    To proceed with \eqref{eq:MLE-CR-pf1}, we invoke Lemma 3.1 in \citep{zhu2023principled} for an upper bound on $\norm{\widetilde{r} - \bar{r}}_{\widehat{\Sigma}}$, which gives 
    \begin{align*}
        \norm{\widetilde{r} - \bar{r}}_{\Sigma} &\le \sqrt{C^2R^2\frac{SA + \log\frac{1}{\delta}}{\gamma^2 n} + c^2\frac{SA\log\frac{n}{\delta}}{n}\norm{\widetilde{r} - \bar{r}}_2^2}\\
        &\le CR\sqrt{\frac{SA + \log\frac{1}{\delta}}{\gamma^2 n}} + c\sqrt{\frac{SA\log\frac{n}{\delta}}{n}}\norm{\widetilde{r} - \bar{r}}_2\\
        &\le CR\sqrt{\frac{SA + \log\frac{1}{\delta}}{\gamma^2 n}} + cR\sqrt{\frac{S^2A^2\log\frac{n}{\delta}}{n}},
    \end{align*}
    where the last step is because $\norm{\widetilde{r} - \bar{r}}_2 \le 2R\sqrt{SA}$. $C > 0$ is an absolute constant.

\section{Proof of Theorem \ref{thm:both-bias}}

We consider a simple bandit with one state $s$ and two actions $\{a_1, a_2\}$. Since there is only one state in this bandit, we omit the state notation and use the shorthand $(a)$ for $(s,a)$. The true reward of this bandit satisfies $0\le r(a_1) < r(a_2) \le 1$. By Condition 1 of our human rating model, the biased reward also satisfies $0\le \bar{h}(r(a_1)) < \bar{h}(r(a_2)) \le 1$.

Without loss of generality, let a human preference feedback $\widetilde{y} = \{0,1\}$ for this bandit instance be in the following form: $\widetilde{y}_i = 1$ if the human annotator prefers $a_2$, and $\widetilde{y}_i = 0$ if the human annotator prefers $a_1$.

For the preference-based method, the MLE objective is to obtain a reward estimate $f$, or two scalar estimates $f(a_1)$ and $f(a_2)$ that maximizes 
\begin{equation*}
    \sum_{i=1}^{n}\log\left(\frac{\ind\{\widetilde{y}_i=1\}\exp(f(a_{2}))}{\exp(f(a_{1})) + \exp(f(a_{2}))} + \frac{\ind\{\widetilde{y}_i=0\}\exp(f(a_{1}))}{\exp(f(a_{1})) + \exp(f(a_{2}))}\right).
\end{equation*}
We can see that the problem is equivalent to finding a scalar $r(a_2) - r(a_1)$ that maximizes
\begin{equation*}
    \sum_{i=1}^{n}\log\left(\frac{\ind\{\widetilde{y}_i=1\}}{1 + \exp(-(f(a_{2}) - f(a_{1})))} + \frac{\ind\{\widetilde{y}_i=0\}}{1 + \exp(f(a_{2}) - f(a_{1}))}\right).
\end{equation*}
This is also equivalent to the MLE for the parameter $p := \frac{1}{1+\exp(-(\bar{h}(r(a_{2})) - \bar{h}(r(a_{1}))))}$ of a Bernoulli distribution with samples $\{\widetilde{y}_i\}_{i=1}^n$. We know that the solution of this Bernoulli MLE is $\frac{1}{n}\sum_{i=1}^{n}\widetilde{y}_i$.

To identify the optimal arm $a_2$ correctly, the estimated parameter needs to be greater than $\frac{1}{2}$ to show a preference for $a_2$. In the following, we can compute a threshold for the number of samples $n$ such that the failure probability (the algorithm mistakenly identifies $a_1$ as the optimal arm) is at most $\delta$. 
\begin{align*}
    \PP\left[\frac{1}{n}\sum_{i=1}^{n}\widetilde{y}_i \le \frac{1}{2}\right] \le \delta
\end{align*}
This can be done by invoking the Chernoff bound (Lemma \ref{lem:chernoff}), which is tight in most cases. Specifically, we can invoke \eqref{eq:chernoff1} in Lemma \ref{lem:chernoff} with $\epsilon = 1 - \frac{1}{2p}$ and solve for $n$. Thus, the minimum number of samples needed for the preference-based approach to identify the optimal policy $n_{\text{pref}}$ is:
\begin{align}
    n_{\text{pref}} = \frac{2\log\frac{1}{\delta}}{p\epsilon^2} = \frac{2\log\frac{1}{\delta}}{p(1-\frac{1}{2p})^2},
\end{align}
which implies we can identify the optimal arm correctly with probability at least $1-\delta$ once the number of samples exceeds this threshold.

For the human rating case, let us denote the human rating data with two sets: $\{\widetilde{r}_i^{(1)}\}_{i=1}^{n_1}$ is the reward ratings for $a_1$, and $\{\widetilde{r}_i^{(2)}\}_{i=1}^{n_2}$ is the reward ratings for $a_2$. Recall for any $a$, $\widetilde{r}(a)$ denotes the empirical average for the reward of $a$ based on the rating data, and $\widehat{r}(a)$ denotes the pessimistic estimate for the reward of $a$ based on $\widetilde{r}(a)$. Notice when the data coverage is uniform, pessimism no longer plays any role in the algorithm.

In this setting, the estimated reward for $a_1$ and $a_2$ needs to satisfy $\widehat{r}(a_1) < \widehat{r}(a_2)$ in order to identify the optimal arm $a_2$ correctly. We want to find the number of samples needed so that the failure probability is controlled below probability $\delta$. 

\begin{align}
        &\quad\ \PP\left[\widehat{r}(a_1) \ge \widehat{r}(a_2)\right] \notag\\
        &= \PP\left[\widetilde{r}(a_1) \ge \widetilde{r}(a_2)\right] \notag\\
        &= \PP\left[\widetilde{r}(a_1) - \bar{h}(r(a_1)) \ge \widetilde{r}(a_2) - \bar{h}(r(a_1))\right] \notag\\
        &= \PP\left[\widetilde{r}(a_1) - \bar{h}(r(a_1)) \ge \widetilde{r}(a_2) - \bar{h}(r(a_2)) + \bar{h}(r(a_2)) - \bar{h}(r(a_1))\right] \notag\\
        &= \PP\left[\frac{1}{n_1}\sum_{i=1}^{n_1}\widetilde{r}_i^{(1)} - \bar{h}(r(a_1)) \ge \frac{1}{n_2}\sum_{j=1}^{n_2}\widetilde{r}^{(2)}_j - \bar{h}(r(a_2)) + \bar{h}(r(a_2)) - \bar{h}(r(a_1))\right] \notag\\
        &\overset{(\mathrm{i})}{=} \PP\left[\frac{1}{n_1}\sum_{i=1}^{n_1}\widetilde{r}_i^{(1)} - \bar{h}(r(a_1)) + \frac{1}{n_2}\sum_{j=1}^{n_2}\widetilde{r}_j^{(2)} - \bar{h}(r(a_2)) \ge \bar{h}(r(a_2)) - \bar{h}(r(a_1))\right] \label{eq:both-bias-proof-1}
    \end{align}

    To bound the probability in \eqref{eq:both-bias-proof-1} with $\delta$, we can invoke Hoeffding's inequality (Lemma \ref{lem:hoeffding}) and solve for $n_1$ and $n_2$, which gives
    \begin{align}
        \frac{1}{\frac{1}{n_1} + \frac{1}{n_2}} \ge \frac{\sigma^2\log\frac{1}{\delta}}{2\big(\bar{h}(r(a_2)) - \bar{h}(r(a_1))\big)^2}.
    \end{align}
    Recall $n_1 = n_2$. Thus, the minimum number of samples needed for the rating-based approach to identify the optimal policy $n_{\text{rate}}$ is:
    \begin{align*}
        n_{\text{rate}} = \frac{\sigma^2\log\frac{1}{\delta}}{\big(\bar{h}(r(a_2)) - \bar{h}(r(a_1))\big)^2}.
    \end{align*}

    Finally, we have
    \begin{align*}
        \frac{n_{\text{rate}}}{n_{\text{pref}}} &= \frac{\sigma^2p(1-\frac{1}{2p})^2}{2\big(\bar{h}(r(a_2)) - \bar{h}(r(a_1))\big)^2},
    \end{align*}
    where $p = \frac{1}{1+\exp(-(\bar{h}(r(a_{2})) - \bar{h}(r(a_{1}))))}$.

    Note $\frac{n_{\text{rate}}}{n_{\text{pref}}}$ is monotonically decreasing for $\bar{h}(r(a_2)) - \bar{h}(r(a_1)) \in [0,1]$ and $\frac{n_{\text{rate}}}{n_{\text{pref}}}$ approaches $0.25\sigma^2$ as $\bar{h}(r(a_2)) - \bar{h}(r(a_1))$ approaches $0$, so we can obtain the advertised result.

\end{document}